\documentclass[opre,nonblindrev]{informs3}

\DoubleSpacedXI 

\usepackage{endnotes}
\usepackage{amsmath, amssymb}
\usepackage{enumitem}
\usepackage{graphicx}
\usepackage{subcaption}
\usepackage{float}
\let\footnote=\endnote
\let\enotesize=\normalsize
\def\notesname{Endnotes}%
\def\makeenmark{\hbox to1.275em{\theenmark.\enskip\hss}}
\def\enoteformat{\rightskip0pt\leftskip0pt\parindent=1.275em
  \leavevmode\llap{\makeenmark}}

\definecolor{myblue}{RGB}{0, 20, 114}
\usepackage[hidelinks,colorlinks=true,urlcolor=myblue,linkcolor=myblue,citecolor=myblue]{hyperref}
\def\EMAIL#1{\href{mailto:#1}{#1}}

\def\theARTICLETOP{}
\def\theLRHFirstLine{}
\def\theLRHSecondLine{}
\def\theRRHFirstLine{}
\def\theRRHSecondLine{}

\usepackage{natbib}
 \bibpunct[, ]{(}{)}{,}{a}{}{,}%
 \def\bibfont{\small}%

\newenvironment{experiment}[1]
{
\noindent\textbf{Large Language Model Policy Optimization #1}\par
\vspace{0.5em}
\begin{enumerate}[leftmargin=*, label=\textbf{Step \arabic*.}]
}
{
\end{enumerate}
}

\TheoremsNumberedThrough     
\ECRepeatTheorems

\EquationsNumberedThrough    

\begin{document}


\RUNAUTHOR{Hu, Gao, Hu, Zhou}

\RUNTITLE{Adaptive Simulation Experiment for LLM Policy Optimization}

\TITLE{Adaptive Simulation Experiment for Large Language Model Policy Optimization}

\ARTICLEAUTHORS{%
\AUTHOR{Mingjie Hu}
\AFF{School of Management, Fudan University \\
H. Milton Stewart School of Industrial and Systems Engineering, Georgia Institute of Technology\\
\EMAIL{23110690009@m.fudan.edu.cn}} 
\AUTHOR{Siyang Gao}
\AFF{Department of Systems Engineering, City University of Hong Kong, \EMAIL{siyangao@cityu.edu.hk }}
\AUTHOR{Jian-qiang Hu}
\AFF{School of Management, Fudan University, \EMAIL{hujq@fudan.edu.cn }}\AUTHOR{Enlu Zhou}
\AFF{H. Milton Stewart School of Industrial and Systems Engineering, Georgia Institute of Technology, \EMAIL{enlu.zhou@isye.gatech.edu}}
} 

\ABSTRACT{%
Large language models (LLMs) have significant potential to improve operational efficiency in operations management. Deploying these models requires specifying a policy that governs response quality, shapes user experience, and influences operational value. In this research, we treat LLMs as stochastic simulators and propose a pairwise comparison-based adaptive simulation experiment framework for identifying the optimal policy from a finite set of candidates. We consider two policy spaces: an unstructured space with no parametric assumption, and a structured space in which the data are generated from a preference model. For both settings, we characterize the fundamental data requirements for identifying the optimal policy with high probability. In the unstructured case, we derive a closed-form expression for the optimal sampling proportions, together with a clear operational interpretation. In the structured case, we formulate a regularized convex program to compute the optimal proportions. We then develop an adaptive experimental procedure, termed LLM-PO, for both policy spaces, and prove that it identifies the optimal policy with the desired statistical guarantee while asymptotically attaining the fundamental data requirements. Numerical experiments demonstrate that LLM-PO consistently outperforms benchmark methods and improves LLM performance.
}%



\maketitle

%


\section{Introduction}\label{sec:Intro}

In recent years, LLMs have developed rapidly and have significantly changed the technological landscape \citep{ouyang2022training, wei2022emergent,guo2025deepseek}. Researchers have continued to develop effective methods to improve the performance of LLMs on a wide range of downstream tasks, including coding, reasoning, planning, summarization, and decision support, which has led to their widespread adoption in industry. 

Although training LLMs requires substantial upfront investment, their relatively low inference cost after deployment makes them increasingly accessible to small businesses and operational organizations. This creates new opportunities for these organizations to improve operational efficiency, provide better service, and grow in a cost-effective way \citep{vidgof2023large}. For instance, industry reports show that the fintech company Klarna has deployed OpenAI-powered assistants for customer service, and the enterprise software provider Zendesk offers AI agents that handle customer requests in production. Similar adoption has also appeared in healthcare operations. For example, Providence, a not-for-profit health system, has used Azure OpenAI Service to manage and triage patient messages in support of clinical workflows.

When deploying LLMs in real-world business and operational environments, organizations must specify key design choices such as system prompts, safety guardrails, and sampling hyperparameters. System prompts define the model’s operational rules and response style before it interacts with users. Safety guardrails help filter harmful inputs and keep outputs safe, controllable, and consistent with organizational values. Sampling hyperparameters (such as temperature) further influence the model’s responses by controlling the degree of randomness and creativity. These design choices together define a \emph{policy} for the LLM. 

Optimizing the policy is critical for the real-world deployment of LLMs. In customer-facing applications such as Klarna and Zendesk, it directly influences whether LLMs produce accurate, concise, and reliable responses, and therefore affects customer satisfaction and service efficiency. In healthcare operations such as Providence, it also determines how effectively language models support the routing and handling of patient communications within clinical systems. In general, the choice of policy affects response performance, user experience, operational reliability, and ultimately the organizational value generated by model deployment.

However, optimizing policies for LLMs poses several fundamental challenges:
\begin{enumerate}
    \item \textbf{Black-box system.} Most LLMs are inherently black-box systems with stochastic outputs. For any given input, we can only query the model and observe sampled responses, without access to its internal structure, gradients, or parameters. This substantially limits the applicability of classical optimization methods that rely on explicit knowledge of the underlying model.
    \item \textbf{Expensive Data Collection.}  Evaluating a policy requires repeated API calls or local model inference, both of which incur considerable monetary and computational expenses, making sample efficiency a primary concern. 
    \item \textbf{Preference Data.} Evaluating LLM responses usually requires some measure of response quality, and in many real-world applications, it is difficult to assign a reliable numerical score to a response. One practical alternative is to elicit pairwise preferences between two responses rather than direct numerical evaluations. However, such feedback provides only relative information, which makes policy optimization more challenging because the learner cannot directly observe absolute policy quality.
    \item \textbf{Performance Guarantee.} Practitioners often want assurance that the policy selected for deployment is truly the best one among the candidate policies. Therefore, it is important not only to identify a strong policy efficiently, but also to provide rigorous guarantees on the quality of the selected policy.
\end{enumerate}

\begin{figure}
    \centering
\includegraphics[width=0.8\linewidth]{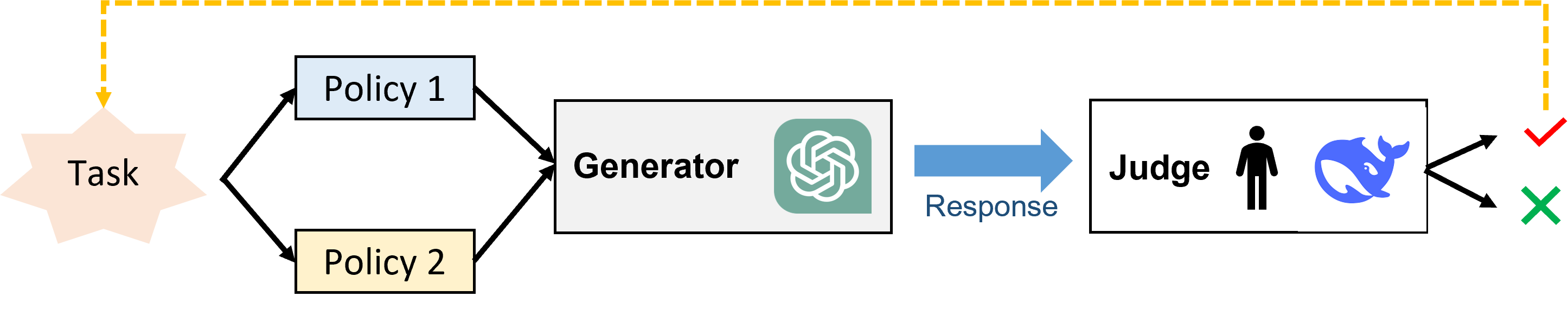}
    \caption{Adaptive Simulation Experiment Framework}
    \label{fig: exp process}
\end{figure}

To address these challenges, we propose an adaptive simulation experiment framework for policy optimization. By viewing an LLM as a black-box stochastic simulator, policy optimization can be naturally formulated as a simulation optimization problem, in which the objective function is unknown and can only be explored via noisy observations. For an overview of simulation optimization, please refer to \citep{fu2005simulation}. 

The framework is based on two main considerations. First, it uses an adaptive experimental design that sequentially selects policies for evaluation based on the evidence accumulated so far. As the experiment progresses, sampling increasingly concentrates on the most informative comparisons, while clearly inferior policies are sampled less often. This adaptive allocation is crucial for improving sample efficiency when evaluations are costly.

Second, to accommodate preference-based feedback, we introduce a pairwise-comparison experimental protocol, illustrated in Figure \ref{fig: exp process}. In each iteration, a task instance (e.g., a user query) is sampled, two candidate policies are selected, and the corresponding responses are generated by the language model. A judge, which may be either a human evaluator or another language model, then provides a binary preference indicating which response is favored. This comparison outcome is incorporated into the current estimate of policy quality, and the policy selection rule is updated accordingly for future iterations. By repeating this process, the experiment gradually accumulates the most informative preference data for identifying the optimal policy.

To provide a performance guarantee for the selected policy, we formulate the problem in the fixed-confidence setting and design a sequential experiment that identifies the optimal policy with probability at least $1-\delta$ for a prescribed risk level $\delta$. This formulation also gives a clear stopping criterion for data collection when preference queries are costly.

We summarize the main contributions of this work as follows.
\begin{itemize}
    \item We propose a pairwise comparison-based adaptive simulation experiment framework for policy optimization in LLMs.
    
    \item We characterize the fundamental data requirements for identifying the optimal policy with high confidence in both unstructured and structured policy spaces.
    
    \item For the unstructured policy space, we derive a closed-form optimal sampling allocation. For the structured policy space, we develop an \(\ell_2\)-regularized approach to address the non-uniqueness of optimal sampling allocations.
    
    \item We develop an adaptive simulation experiment, termed LLM-PO, by specifying its sampling, stopping, and decision rules, and prove that it identifies the optimal policy with probability at least \(1-\delta\) and achieves optimal data requirements almost surely.
    
    \item We conduct both synthetic and real experiments to demonstrate that the proposed method can identify the optimal policy efficiently.
\end{itemize}

\subsection{Literature Review}
Our work is related to ranking and selection, prompt engineering, and preference-based optimization. We review the most relevant literature as follows.

\textbf{Ranking and Selection.} This research formulates policy optimization as a simulation optimization problem and is closely related to the literature on ranking and selection (R\&S) in simulation optimization \citep{chen2015ranking}. The goal of R\&S is to identify the best alternative from a finite set of candidates. Depending on the application setting, existing methods are typically studied under either the fixed-budget or the fixed-confidence setting. In the fixed-budget setting, the total simulation budget is specified in advance, and the objective is to adaptively allocate samples to maximize the probability of correct selection \citep{chen2000simulation,peng2018ranking,wang2023best}. In the fixed-confidence setting, the objective is to guarantee that the probability of correct selection exceeds a prescribed confidence level \citep{kim2001fully, fan2016indifference}. 

Our problem is fundamentally different from standard R\&S. We consider a pairwise-comparison setting, where the learner observes only binary preference outcomes rather than numerical rewards. As a result, existing procedures and their optimality guarantees are not directly applicable. This feedback structure introduces new technical challenges that must be addressed to achieve efficient policy optimization.

To the best of our knowledge, the most closely related work is the pairwise comparison literature in R\&S \citep{xiao2023ranking,li2024thompson,groves2019top}, which is connected to the unstructured policy space studied in this paper. However, this line of work considers a fixed-budget setting and does not provide statistical guarantees on the quality of the selected solution. More importantly, its definition of the optimal policy and its assumptions on the data-generating distribution differ fundamentally from ours, making the proposed methods inapplicable to our unstructured setting.

\textbf{Prompt Engineering.} Existing prompt engineering research studies how to design or optimize prompts for LLMs to improve the task performance. Early work formulates prompt design as a discrete search problem over natural-language templates, including automatic prompt engineering and instruction induction \citep{zhou2022large,honovich2023instruction}. Subsequent studies have developed a range of automated search methods, including gradient-based optimization \citep{pryzant2023automatic}, evolutionary search \citep{guo2023connecting,fernando2023promptbreeder}, and surrogate-model-guided search \citep{zhang2024language, schneider2024hyperband}. Despite their empirical success, these methods are often not suitable for real operations because they offer limited control over both the prompts explored during search and the final prompt selected for deployment. The resulting prompt is often a free-form string with limited interpretability, which may violate safety guardrails or create compliance risks. In contrast, our framework optimizes over a predefined, interpretable set of policy candidates, so it can ensure that the selected policy is controllable and directly deployable in practice.

\textbf{Preference-based Optimization.} Preference-based optimization studies learning and optimization problems where numerical rewards are unavailable, and only relative preference feedback (typically in the form of pairwise comparisons) is observed. A major line of research in this area is dueling bandits \citep{bengs2021preference}, where the objective is either to minimize regret \citep{zoghi2014relative,wu2016double} or to identify the best arm with as few comparisons as possible \citep{jamieson2015sparse, bengs2024identifying}. While these works are relevant to our setting in the unstructured case, our formulation differs in the performance measure, the definition of the optimal policy, and the type of optimality guarantee. Our framework also extends to structured policy spaces, which improve the scalability of the experimental procedure but substantially complicate both the design of adaptive experiments and the theoretical analysis. Related structured preference-based formulations have been studied in contextual dueling bandits \citep{li2024feel} and reinforcement learning from human feedback \citep{scheid2024optimal,zhu2023principled}. However, our work focuses on adaptive simulation experiment design for policy optimization, leading to fundamentally different problem formulation and theoretical treatment.

The rest of the paper is organized as follows. Section \ref{sec: formulation} formulates LLM policy optimization as an adaptive simulation experiment design problem. Section \ref{sec: lower bound} establishes a lower bound on the data requirements and characterizes the corresponding optimal sampling proportions. Section \ref{sec: experiment design} presents the proposed experimental procedure together with its optimality analysis. Section \ref{sec: experiments} reports the numerical results. Section \ref{sec: conclusion} concludes the paper. All technical details are deferred to the Electronic Companion of this paper.

\section{Large Language Model Policy Optimization}
\label{sec: formulation}

Deploying an LLM system requires specifying several key components, including the system prompt, safety guardrails, and sampling hyperparameters (e.g., temperature). These components together define a policy, which governs the model’s reasoning process and output behavior. As a result, optimizing the policy is central to the reliability and performance of downstream applications. Formally, let $\mathcal{P}=\{P_1,\ldots,P_M\}$, $\mathcal{G}=\{G_1,\ldots,G_N\}$, and $\mathcal{T}=\{T_1,\ldots,T_L\}$ denote finite sets of system prompts, safety guardrails, and sampling hyperparameters, respectively. Each policy is represented by a tuple $i=(P_{m},G_{n},T_{l})$ drawn from the Cartesian product $\mathcal{P} \times \mathcal{G} \times \mathcal{T}$. We denote the total number of feasible policies by $K$ and index them by $[K]=\{1,\ldots,K\}$.

A task (i.e., a user query or prompt) $x\in\mathcal{X}$ is drawn i.i.d from an unknown
distribution $\mathcal{Q}$. Let $\mathcal{A}$ denote the LLM, which maps the input space $\mathcal{X} \times [K]$ to a probability distribution $\Delta_{\mathcal{V}}$ over the language space $\mathcal{V}$. Given a policy $i \in [K]$ and a task $x$, the LLM generates a response 
\begin{equation*}
   \hat{Y} = \mathcal{A}(x,i).
\end{equation*}
The random variable $\hat{Y} \in \mathcal{V}$ represents the generated response. In particular, for any fixed task $x$ and policy $i$, the response $\hat{Y}$ is generally stochastic.

To evaluate policies, we select a pair of distinct policies $(i,j)$ and generate their corresponding responses. A judge (either a human evaluator or another LLM) then returns a pairwise comparison outcome $D(i,j)\in\{0,1\}$ for $i\neq j$, where $D(i,j)=1$ indicates that the response under policy $i$ is preferred to that under policy $j$. We assume $D(i,j) = 1-D(j,i)$. Since $D(j,i)$ is fully determined by $D(i,j)$ for all $i$ and $j$, it suffices to consider the set of policy pairs $\mathcal{S} = \{(i,j):i<j, i,j\in[K]\}$. Let $|\mathcal{S}|$ denote the cardinality of $\mathcal{S}$. 

In the pairwise comparison framework, since both response generation by the LLM and the evaluation process are stochastic, the pairwise comparison outcome $D(i,j)$ is random. We define the pairwise mean performance as $\mu(i,j) = \mathbb{E}[D(i,j)]$, where the expectation is taken over the randomness in the sampled tasks, the generated responses, and the evaluation process. Accordingly, $\mu(i,j)$ represents the probability that the response produced under policy $i$ is preferred to that produced under policy $j$. 

We define the best policy as 
\begin{equation}
\label{eq: optpolicy}
    i^\star \in \argmax_{i\in[K]}\min_{j\neq i} \mu(i,j). 
\end{equation}
That is, the best policy $i^\star$ maximizes the worst-case pairwise winning probability over all other policies. In particular, when such a policy exists, $i^\star$ is undominated in the sense that
$\mu(i^\star,j)>1/2$ for all $j\neq i^\star$. By viewing the LLM as a black-box simulator, policy optimization can be formulated as a simulation optimization problem. However, the feedback consists of binary pairwise comparisons, so policy performance cannot be estimated directly by empirical averages and must instead be inferred from relative outcomes. This makes many classical sampling rules and optimality guarantees for simulation optimization not applicable. In addition, response generation and evaluation are costly, and hence call for adaptive data collection strategies that can identify the optimal policy efficiently.

To address these challenges, we propose an adaptive experiment framework for LLM policy optimization. The experiment proceeds sequentially. At each time step $t$, a random task $x_t$ is drawn from the distribution $\mathcal{Q}$. The decision maker then selects a pair of policies $(i_t,j_t)$, generates the corresponding responses, and obtains a binary comparison outcome $D(i_t,j_t)$ from the judge. Let 
$$
    \mathcal{F}_t = \sigma(x_1,(i_1,j_1),D(i_1,j_1),\ldots,x_t,(i_t,j_t),D(i_t,j_t))
$$
denote the sigma-field generated by all information observed up to time $t$. An adaptive simulation experiment design consists of the following three components:
\begin{itemize}
    \item \textbf{Policy pair sampling rule} $(i_t,j_t)_{t\ge1}$: specifies the policy pair $(i_t,j_t)$ to be evaluated at time $t$, where $(i_t,j_t)$ is $\mathcal{F}_{t-1}$ measurable.
    \item \textbf{Experiment stopping rule} $\tau$: determines when to terminate data collection and thus specifies the total number of binary preference observations. The random variable $\tau$ is a stopping time with respect to $\mathcal{F}_t$.
    \item \textbf{Final decision rule} $\hat{i}_{\tau}$: outputs an estimate of the best policy based on the information available at time $\tau$. The decision rule $\hat{i}_{\tau}$ is $\mathcal{F}_{\tau}$-measurable.
\end{itemize}
The goal of an optimal experiment design is to identify the best policy $i^\star$ with a high probability while minimizing the number of collected preference observations.

\section{Fundamental Data Requirements}
\label{sec: lower bound}

In this section, we establish a lower bound on the experiment stopping time $\tau$, which characterizes the minimum amount of data that any adaptive simulation experiment requires to identify the optimal policy under a prescribed statistical guarantee. 

\subsection{Unstructured Policy Space}

In this subsection, we consider an unstructured policy space, where no parametric or structural assumptions are imposed on the binary preference data-generating process. We begin by introducing the $\delta$-probability approximately correct ($\delta$-PAC) criterion, which provides a statistical benchmark for evaluating an adaptive simulation experiment (Definition~\ref{def: delta-PAC}). Throughout this paper, we focus on designing simulation experiments that satisfy this $\delta$-PAC criterion.

\begin{definition}
\label{def: delta-PAC}
A simulation experiment design is said to be $\delta$-PAC if its final decision rule $\hat{i}_{\tau}$ satisfies
$
    \mathbb{P}(\hat{i}_{\tau}\neq i^\star) \leq \delta.
$
\end{definition}

Among all feasible $\delta$-PAC simulation experiments, an optimal experiment design identifies the optimal policy with the fewest binary preference observations while guaranteeing an error probability of at most $\delta$. To characterize this fundamental limit, we establish a problem-dependent lower bound on the required sample size, namely, the minimum number of preference comparisons that any $\delta$-PAC experiment must collect. This lower bound provides an information-theoretic benchmark for experiment design by capturing the intrinsic difficulty of the underlying decision problem. It also offers an interpretable characterization of the hardness of the policy optimization task.

To derive a lower bound on the stopping time $\tau$, we employ the change-of-measure technique that forms the foundation of many classical lower bounds in sequential hypothesis testing \citep{garivier2016optimal}. Our main idea is to formulate policy optimization as a hypothesis testing problem and analyze it from an adversarial perspective. For a fixed problem instance, we construct an alternative instance in which the identity of the optimal policy is changed, while the induced preference data distribution remains nearly indistinguishable from that of the original instance in terms of Kullback-Leibler (KL) divergence. Any $\delta$-PAC simulation experiment must then reliably distinguish between these two instances based on the observed comparisons; otherwise, it incurs a non-negligible error probability on at least one of them. By constructing such a most confusing alternative and comparing the corresponding likelihoods of the observations, we derive the minimum amount of statistical evidence that must be accumulated. This requirement directly yields a lower bound on the number of comparisons, and hence on $\tau$, required to detect the change in the optimal policy.

Let $\mu = (\mu(i,j))_{i,j\in[K]}$ denote the true instance of the LLM policy optimization problem.  To facilitate the analysis, let $\lambda \in \mathbb{R}^{K\times K}$ denote a generic candidate instance. We restrict attention to the class of instances under which the optimal policy is unique. Specifically, define
$$
    \mathcal{H} := \left\{\lambda: \lambda(i,i) = \frac{1}{2}, \forall i\in[K], \lambda(i,j) = 1-\lambda(j,i),\forall i\neq j,\exists i^\star(\lambda)\in[K],\min_{j\neq i^\star(\lambda)}\lambda(i^\star(\lambda),j)>\frac{1}{2} \right\},
$$
where $i^\star(\lambda)$ denotes the unique optimal policy under instance $\lambda$.
Given the instance $\mu$, we are interested in alternative instances under which the optimal policy differs from that under $\mu$. Accordingly, define the set of alternatives by
\begin{equation*}
    \mathrm{Alt}(\mu) := \left\{\lambda \in \mathcal{H}: i^\star(\mu)\neq i^\star(\lambda)\right\}.
\end{equation*}

Theorem~\ref{thm: lower bound} characterizes the fundamental data requirement for reliable LLM policy optimization. It shows that, to identify the best policy with error probability at most $\delta$, any adaptive simulation experiment must use at least $\mathcal{T}^\star(\mu)\log(1/\delta)$ comparisons asymptotically. Here, $\mathcal{T}^\star(\mu)$ captures the intrinsic difficulty of distinguishing the optimal policy from competing policies under the underlying preference structure. In the context of LLM policy optimization, this result highlights a fundamental limit: when policies induce similar response quality, more costly comparisons are unavoidable, regardless of the experiment design.

\begin{theorem}
\label{thm: lower bound}
Fix a risk level \(\delta \in (0,1/2)\). Then, for any \(\delta\)-PAC adaptive simulation experiment with stopping time \(\tau\) and any problem instance \(\mu \in \mathcal{H}\), the expected sample size satisfies
$$
    \mathbb{E}_{\mu}[\tau]
    \;\ge\;
    \mathcal{T}^\star(\mu)\,\text{kl}(\delta,1-\delta),
    \qquad
    \liminf_{\delta \to 0}
    \frac{\mathbb{E}_{\mu}[\tau]}{\log(1/\delta)}
    \;\ge\;
    \mathcal{T}^\star(\mu),
$$
where $\mathcal{T}^\star(\mu)^{-1}$ is defined as
\begin{equation}
\label{eq: complexity opt}
    \sup_{\omega \in \Omega}
    \inf_{\lambda \in \mathrm{Alt}(\mu)}
    \sum_{(i,j)\in\mathcal{S}} \omega_{ij}\, d\bigl(\mu(i,j),\lambda(i,j)\bigr).
\end{equation}
Here, \(d(x,y)\) denotes the KL divergence between Bernoulli distributions with means \(x\) and \(y\), and
$
\Omega:=\{\omega\in\mathbb{R}_+^{|\mathcal{S}|}:\sum_{(i,j)\in\mathcal{S}}\omega_{ij}=1\}
$
is the simplex of sampling proportions over all pairwise comparisons.
\end{theorem}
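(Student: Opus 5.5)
The plan is to follow the change-of-measure argument of \citet{garivier2016optimal}, adapted to pairwise Bernoulli observations indexed by $\mathcal{S}$. Let $N_{ij}(\tau)$ denote the number of times the pair $(i,j)\in\mathcal{S}$ has been sampled up to the stopping time $\tau$, so that $\sum_{(i,j)\in\mathcal{S}} N_{ij}(\tau)=\tau$. Under instance $\mu$, each observation of pair $(i,j)$ is Bernoulli with mean $\mu(i,j)$. Under any $\lambda\in\mathcal{H}$ it is Bernoulli with mean $\lambda(i,j)$. The tasks $x_t$ are integrated out, since $\mu(i,j)$ already averages over tasks, responses and judge randomness. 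So the observed data are exactly a sequence of Bernoulli draws with adaptively chosen pairs.

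\textbf{Step 1 (log-likelihood ratio).} Fix $\lambda\in\mathrm{Alt}(\mu)$. If some $\lambda(i,j)\in\{0,1\}$ while $\mu(i,j)\in(0,1)$, the divergence is infinite and $\lambda$ does not constrain the infimum, so we may restrict to alternatives with finite divergences. Since the sampling rule is $\mathcal{F}_{t-1}$-measurable and does not depend on the instance, Wald's identity for the log-likelihood ratio $L_\tau$ of the observations up to $\tau$ gives
\[
\mathbb{E}_\mu[L_\tau]=\sum_{(i,j)\in\mathcal{S}}\mathbb{E}_\mu[N_{ij}(\tau)]\,d\bigl(\mu(i,j),\lambda(i,j)\bigr).
\]
If $\mathbb{E}_\mu[\tau]=\infty$ there is nothing to prove, so assume $\tau$ is integrable.

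\textbf{Step 2 (transportation inequality).} Apply the data-processing inequality (Lemma 1 of Garivier--Kaufmann) to the $\mathcal{F}_\tau$-measurable event $E=\{\hat i_\tau = i^\star(\mu)\}$. The $\delta$-PAC property gives $\mathbb{P}_\mu(E)\ge 1-\delta$. Because $i^\star(\lambda)\neq i^\star(\mu)$, it also gives $\mathbb{P}_\lambda(E)\le\delta$. Monotonicity of $\text{kl}$ for $\delta<1/2$ then yields
\[
\mathbb{E}_\mu[L_\tau]\ge \text{kl}(\delta,1-\delta).
\]
Note that $\delta$-PAC must hold on every instance in $\mathcal{H}$, including $\lambda$. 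This is where we use that the alternatives lie in $\mathcal{H}$, which guarantees a unique optimal policy there.

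\textbf{Step 3 (optimize).} Set $\omega_{ij}=\mathbb{E}_\mu[N_{ij}(\tau)]/\mathbb{E}_\mu[\tau]\in\Omega$. Steps 1--2 give, for every $\lambda\in\mathrm{Alt}(\mu)$,
\[
\mathbb{E}_\mu[\tau]\sum_{(i,j)\in\mathcal{S}}\omega_{ij}\,d\bigl(\mu(i,j),\lambda(i,j)\bigr)\ge \text{kl}(\delta,1-\delta).
\]
Taking the infimum over $\lambda$ and then bounding by the supremum over $\omega\in\Omega$ gives $\mathbb{E}_\mu[\tau]\,\mathcal{T}^\star(\mu)^{-1}\ge \text{kl}(\delta,1-\delta)$, which is the first claim. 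For the asymptotic statement, use $\text{kl}(\delta,1-\delta)\ge\log\bigl(1/(2.4\delta)\bigr)$, so that $\text{kl}(\delta,1-\delta)/\log(1/\delta)\to 1$ as $\delta\to0$.

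\textbf{Main obstacle.} The main point to check is that $\mathcal{T}^\star(\mu)^{-1}$ is finite and positive, so that the bound is meaningful. For positivity, one approach is to show that any $\lambda\in\mathrm{Alt}(\mu)$ must move some $\lambda(i^\star,j)$ to at most $1/2$. Indeed, if all $\lambda(i^\star,j)>1/2$, then $i^\star$ would remain the unique optimum under $\lambda$. This forces $d\bigl(\mu(i^\star,j),\lambda(i^\star,j)\bigr)\ge d\bigl(\mu(i^\star,j),1/2\bigr)>0$ for some $j$. Taking the uniform $\omega$ then gives a strictly positive value. Finiteness holds because alternatives with finite divergence exist, for example swapping the roles of $i^\star$ and another policy. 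The remaining steps are routine measurability and Wald-identity bookkeeping.
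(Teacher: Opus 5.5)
Your proposal is correct and follows essentially the same route as the paper. Both apply the Kaufmann et al.\ transportation inequality, which you unpack as Wald's identity plus data processing, then normalize $\omega_{ij}=\mathbb{E}_\mu[N_{ij}(\tau)]/\mathbb{E}_\mu[\tau]$ and bound by the supremum over $\Omega$. Using the correct-selection event instead of the paper's error event changes nothing, since $\text{kl}(1-\delta,\delta)=\text{kl}(\delta,1-\delta)$; your positivity argument for $\mathcal{T}^\star(\mu)^{-1}$ is a sound addition the paper does not make here, but the inequality itself does not need it.
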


Fix a suboptimal policy $i\neq i^\star(\mu)$. For notational convenience, write the pairwise parameter as $\mu(j,i)$ for $j<i$ and as $\mu(i,h)$ for $i<h$. Define
$$
\mathcal{I}_1(i):=\{j\in[K]: j<i,\ \mu(j,i)>\tfrac12\},\qquad
\mathcal{I}_2(i):=\{h\in[K]: i<h,\ \mu(i,h)<\tfrac12\}.
$$
Then $\mathcal{I}_1(i)\cup\mathcal{I}_2(i)$ 
is exactly the set of policies that outperform $i$. Next, define
$$
d_i^\star:=\max\Big\{\max_{j\in\mathcal{I}_1(i)} d\left(\mu(j,i),\tfrac12\right),\ \max_{h\in\mathcal{I}_2(i)} d\left(\mu(i,h),\tfrac12\right)\Big\},
$$
which represents the largest per-sample information available for ruling out policy $i$. Finally, let $\tilde j(i)\in\mathcal{I}_1(i)\cup\mathcal{I}_2(i)$ be any maximizer achieving $d_i^\star$. In other words, $\tilde j(i)$ is an opponent whose comparison with $i$ is the most informative for eliminating $i$.

We now solve the optimization problem \eqref{eq: complexity opt} and characterize the corresponding optimal sampling proportions. The main difficulty is that the sampling proportions are coupled across pairwise comparisons such that for each suboptimal policy $i\neq i^\star(\mu)$, several comparisons may contribute evidence for eliminating $i$. The KKT conditions reveal a useful bottleneck structure. Among all comparisons involving $i$, only those with its most informative opponents need to be sampled, because these comparisons provide the strongest statistical evidence. This yields a reduction from the original optimization problem to allocating total sampling effort across suboptimal policies and their most informative comparison pairs. This leads to the explicit characterization that the optimal allocation is proportional to $1/d_i^\star$. The resulting solution is summarized in Corollary~\ref{corollary: optimal ratio}.
\begin{corollary}
\label{corollary: optimal ratio}
There exists an optimal solution \(\omega^\star\) to \eqref{eq: complexity opt} such that, for each suboptimal policy \(i \neq i^\star(\mu)\), all sampling effort associated with ruling out \(i\) is concentrated on the single comparison between \(i\) and \(\tilde j(i)\). That is, \(\omega_{ij}^\star = 0\) for all other pairs involving \(i\). Moreover,
$$
\omega_{\tilde j(i),\,i}^\star
=
\frac{1/d_i^\star}{\sum_{k \neq i^\star(\mu)} 1/d_k^\star},
\qquad \text{if } \tilde j(i)<i, \qquad \omega_{i,\,\tilde j(i)}^\star
=
\frac{1/d_i^\star}{\sum_{k \neq i^\star(\mu)} 1/d_k^\star},
\qquad \text{if } \tilde j(i)>i.
$$
\end{corollary}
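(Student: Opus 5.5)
The plan is to compute the inner infimum in \eqref{eq: complexity opt} explicitly for each fixed $\omega\in\Omega$. This reduces the $\sup$--$\inf$ problem to a max--min problem over a budget split, which has a closed form. Write $i^\star=i^\star(\mu)$, and for a suboptimal $i$ let $\mathcal{B}(i):=\mathcal{I}_1(i)\cup\mathcal{I}_2(i)$ be the set of policies that beat $i$. For $j\in\mathcal{B}(i)$, write $\omega_{\{i,j\}}$ and $\mu_{j\succ i}$ for the weight and winning probability of the pair, whichever index order it has in $\mathcal{S}$. Note that $i^\star\in\mathcal{B}(i)$, so $\mathcal{B}(i)\neq\emptyset$ and $d_i^\star>0$.

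\textbf{Step 1: inner infimum.} Every $\lambda\in\mathrm{Alt}(\mu)$ lies in $\mathcal{H}$. Hence it has a unique winner $i=i^\star(\lambda)\neq i^\star$ with $\lambda(i,j)>1/2$ for all $j\ne i$. In particular, every pair with $j\in\mathcal{B}(i)$ must be moved from $\mu_{j\succ i}>1/2$ to a value $<1/2$ for $j$. Since $y\mapsto d(x,y)$ is monotone on each side of $x$, each such pair costs at least $\omega_{\{i,j\}}\,d(\mu_{j\succ i},1/2)$. This gives the lower bound
\[
\inf_{\lambda\in\mathrm{Alt}(\mu)}\sum_{(k,l)\in\mathcal{S}}\omega_{kl}\,d(\mu(k,l),\lambda(k,l))\;\ge\;\min_{i\ne i^\star} C_i(\omega),\qquad C_i(\omega):=\sum_{j\in\mathcal{B}(i)}\omega_{\{i,j\}}\,d\bigl(\mu_{j\succ i},\tfrac12\bigr).
\]
For the matching upper bound, fix $i$ and take $\lambda^\varepsilon$ equal to $\mu$ on all pairs except that $\lambda^\varepsilon(i,j)=1/2+\varepsilon$ for $j\in\mathcal{B}(i)$. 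The pairs involving $i$ that $i$ already wins keep their values. Then $\lambda^\varepsilon$ satisfies the symmetry constraints, and $i$ is its unique Condorcet winner, because a winner beats everyone and so at most one exists. Thus $\lambda^\varepsilon\in\mathrm{Alt}(\mu)$. Continuity of $d$ as $\varepsilon\downarrow 0$ then gives cost $\to C_i(\omega)$. The infimum is not attained because of the strict inequality in $\mathcal{H}$, so this limiting argument is the one point requiring care. I expect this step, namely showing that the ``most confusing'' alternatives are exactly these single-policy promotions, to be the main obstacle.

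\textbf{Step 2: decoupling.} The key structural observation is that each pair $\{i,j\}$ with $\mu\ne 1/2$ appears in exactly one of the costs: in $C_i$ if $j$ beats $i$, and in $C_j$ otherwise. Pairs with $\mu=1/2$ appear in none. Therefore, setting $b_i:=\sum_{j\in\mathcal{B}(i)}\omega_{\{i,j\}}$ gives $\sum_{i\ne i^\star}b_i\le 1$ and $C_i(\omega)\le b_i d_i^\star$. Equality holds when all of $b_i$ is placed on the pair $\{i,\tilde j(i)\}$, which is always possible because these pairs are distinct for distinct $i$ (pair $\{i,\tilde j(i)\}$ is charged to loser $i$). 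Hence
\[
\mathcal{T}^\star(\mu)^{-1}=\max\Bigl\{\min_{i\ne i^\star} b_i d_i^\star:\ b\ge 0,\ \textstyle\sum_{i\ne i^\star} b_i\le 1\Bigr\}.
\]

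\textbf{Step 3: max--min.} The max--min is solved by equalization. If some $b_i d_i^\star$ exceeded the minimum, mass could be moved from that $i$ to the minimizers to strictly increase the objective. So at the optimum $b_i d_i^\star=c$ for all $i\ne i^\star$, with all budget used. This gives $b_i=(1/d_i^\star)/\sum_{k\ne i^\star}1/d_k^\star$ and $\mathcal{T}^\star(\mu)=\sum_{k\ne i^\star}1/d_k^\star$. Placing $b_i$ on the pair $(\tilde j(i),i)$ or $(i,\tilde j(i))$ according to the index order, and zero on all other pairs involving $i$ as a loser, gives the stated $\omega^\star$. Its optimality follows from Steps 1--2.
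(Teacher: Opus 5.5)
Your proof is correct. Step~1 is the same computation the paper carries out: it splits $\mathrm{Alt}(\mu)$ into the dominance regions $\mathcal{C}_i$ and pushes each comparison that $i$ loses to $\tfrac12$. Where you differ is in how you solve the resulting max--min problem.

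The paper writes it as an epigraph linear program and checks $\nu^\star>0$. It then works through the KKT conditions to get $\gamma>0$ and $\rho_i>0$ for every $i\neq i^\star(\mu)$. From this it concludes that all constraints are tight and that every comparison carrying mass in constraint $i$ has the same divergence. Only after that does it use the fact that each pair enters exactly one constraint, to move the mass onto $\tilde j(i)$ and read off the weight on $\{i,\tilde j(i)\}$ as $\nu/d_i^\star$.

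You use that fact up front to collapse everything to $\max\{\min_i b_i d_i^\star : b\ge 0,\ \sum_i b_i\le 1\}$, and you finish by equalization, so no Lagrangian is needed. This route buys several things:
\begin{itemize}
\item It is shorter and fully elementary.
\item It gives the value $\mathcal{T}^\star(\mu)=\sum_{k\neq i^\star(\mu)}1/d_k^\star$ with a one-line certificate that no allocation does better: for any feasible $b$, $\bigl(\min_i b_i d_i^\star\bigr)\sum_k 1/d_k^\star\le\sum_k b_k\le 1$.
\item Because the optimal $b$ is unique, it characterizes the entire optimal set, which recovers Remark~\ref{remark: nonunique omega} at no extra cost.
\end{itemize}
The KKT route additionally produces the dual multipliers ($\rho_i\propto 1/d_i^\star$), but its decisive step is the same redistribution argument you use.

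Two small points.
\begin{itemize}
\item Your qualifier ``as a loser'' is the right reading of the statement's ``all other pairs involving $i$''. A pair in which $i$ is the winner can carry mass charged to another policy, for instance when $i=\tilde j(k)$ for some $k$.
\item In Step~1, suppose $\mu(i,j)=\tfrac12$ for some $j\neq i$, a case you explicitly allow in Step~2. Then your $\lambda^\varepsilon$ leaves $\lambda^\varepsilon(i,j)=\tfrac12$, so $i$ is not a strict winner and $\lambda^\varepsilon\notin\mathcal{H}$. Set those entries to $\tfrac12+\varepsilon$ as well. The extra cost is $\omega_{\{i,j\}}\,d(\tfrac12,\tfrac12+\varepsilon)\to 0$, so the infimum is unchanged. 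The paper's ``keeping all other pairs unchanged'' has the same slip.
\end{itemize}
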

\begin{remark}
\label{remark: nonunique omega}
The optimal solution to \eqref{eq: complexity opt} is not necessarily unique. For any suboptimal policy \(i \neq i^\star(\mu)\), define the set of most informative opponents by
$
\mathcal J_i^\star \;:=\; \arg\max_{j\neq i} d_{ij}(\mu).
$
If \(|\mathcal J_i^\star|>1\), then the sampling mass associated with policy \(i\) may be distributed arbitrarily across the comparisons \(\{(i,j): j\in \mathcal J_i^\star\}\) without changing the objective value. Consequently, all such allocations are optimal. In particular, when multiple opponents are equally informative, any convex combination of these optimal allocations is also optimal.
\end{remark}

Corollary~\ref{corollary: optimal ratio} suggests that, when comparing candidate policies, it is not necessary to evaluate each suboptimal policy with many others. It is sufficient to compare it only with the policy that most clearly outperforms it. This insight is especially valuable when each comparison requires costly response generation and evaluation. The corollary also shows how to allocate the simulation budget across policies so that effort is concentrated on the most decision-critical comparisons. For practitioners, it offers an interpretable and implementable rule for designing efficient evaluation procedures for LLM deployment.

\subsection{Structured Policy Space}
The number of feasible policies grows combinatorially with the design choices, including the system prompts $M$, safety guardrails $N$, and sampling hyperparameters $L$, resulting in $M\times N\times L$ candidate policies. Even when $M$, $N$, and $L$ are only moderately large, the resulting search space can be enormous, making exhaustive sampling and evaluation highly inefficient. In practice, however, policies that share similar prompts, guardrails, or sampling configurations tend to induce correlated and systematic patterns in responses, which create exploitable structure in the data-generating process. In this subsection, we formalize one such structural assumption and show how it can be used to improve sampling efficiency.

We assume that each policy $i$ is represented by a feature vector $x_i\in \mathbb{R}^d$, and that preference feedback is governed by an unknown parameter vector $\theta_\star\in\mathbb{R}^{d}$. Under a linear reward model, the latent score of policy $i$ is given by
$
    r_i = \theta_\star^\top x_i.
$ These latent scores are introduced only as a convenient parametrization of the binary preference model, and are not directly observed in the adaptive simulation experiment. To model pairwise feedback, we adopt the Bradley-Terry model \citep{bradley1952rank}. For any ordered pair of policies $(i,j)$, let $D(i,j)\in\{0,1\}$ denote the binary outcome. Conditioned on $\theta_\star$, we assume 
$$
\mu(i,j)
:=\mathbb{P}\big(D(i,j)=1\mid \theta_\star\big)
=\sigma\big(\theta_\star^\top(x_i-x_j)\big)
=\frac{1}{1+\exp\big(-\theta_\star^\top(x_i-x_j)\big)},
$$
where $\sigma(z):=1/(1+e^{-z})$ is the logistic function. The combination of a linear score structure and the Bradley-Terry preference model is standard in the preference learning literature \citep{scheid2024optimal,agnihotri2025best}.
This model immediately implies the reciprocity relation $\mu(j,i)=1-\mu(i,j)$. Moreover, $\mu(i,j)>\tfrac12$ if and only if
$\theta_\star^\top x_i>\theta_\star^\top x_j$, meaning that policy $i$ is more likely to be preferred than policy $j$ when it has a larger latent score.

\begin{assumption}
We impose the following regularity conditions throughout the subsequent analysis.
\leavevmode
\begin{enumerate}[label=(A\arabic*), leftmargin=2.8em, itemsep=0.4em]
    \item Conditional on the true parameter \(\theta_\star\), preference observations are independent across repeated iterations.
    
    \item The true parameter \(\theta_\star\) lies in the interior of a compact parameter space \(\Theta \subset \mathbb{R}^d\); that is, \(\theta_\star \in \operatorname{int}(\Theta)\). Moreover, there exists a constant \(B < \infty\) such that 
    $
   \|\nu\|_2 \le B
    $ for all $\nu\in \Theta$.
    
    \item Let \(z_{ij} := x_i - x_j\) denote the feature difference between policies \(i\) and \(j\). There exists a constant \(L < \infty\) such that
    $
    \|z_{ij}\|_2 \le L, \forall i,j \in [K].
    $
\end{enumerate}
\end{assumption}
\begin{remark}
Assumptions (A1)-(A3) ensure that the logistic preference model is well behaved both statistically and analytically. These assumptions are standard and mild, and are commonly adopted in the logistic bandit literature \citep{faury2020improved}.
\end{remark}

We consider the optimization problem \eqref{eq: complexity opt} under the structured policy space. In this setting, however, the linear feature structure together with the Bradley-Terry likelihood introduces substantial technical difficulty, so a closed-form characterization of the optimal sampling proportions is generally impossible. The main challenge is that the problem couples a combinatorial collection of opponents $i\neq i^\star(\mu)$ with a nonlinear, parameter-dependent information structure induced by $\sigma(\theta_\star^\top (x_i-x_j))$. This coupling prevents a closed-form solution for the optimal sampling allocation.

To overcome this difficulty, we derive an upper bound in the hard-instance regime by focusing on the least favorable instances that lie in a local neighborhood of the true instance, where the Bernoulli KL divergence admits a quadratic approximation. Under this local view, the problem acquires a Fisher information interpretation. Theorem~\ref{thm: combinatorial_structure} then shows that $\mathcal{T}^\star(\mu)$ can be upper bounded by $\mathcal{U}^\star(\mu)$, and yields an explicit max-min optimization problem that chooses the sampling proportions $\omega$ to maximize the minimum Fisher-normalized separation between the optimal policy and each suboptimal opponent. This provides a tractable surrogate for $\mathcal{T}^\star(\mu)$ and clearly reveals the problem-dependent factors that govern the complexity of the problem. Before stating the result, for any positive definite matrix \(M\), define the matrix-induced norm
$
\|x\|_{M} := \sqrt{x^\top M x}.
$

\begin{theorem}
\label{thm: combinatorial_structure}
Under the Bradley-Terry preference model and the local hard-instance approximation, \(\mathcal{T}^\star(\mu)\) is upper bounded by
$
\mathcal{U}^\star(\mu),
$
which satisfies
\begin{equation*}
    \mathcal{U}^\star(\mu)^{-1}
    =
    \sup_{\omega\in\Omega}
    \min_{i\neq i^\star(\mu)}
    \frac{1}{2}\,
    \frac{\bigl(\theta_\star^\top z_{i\,i^\star(\mu)}\bigr)^2}
    {\|z_{i\,i^\star(\mu)}\|_{H(\theta_\star,\omega)^{-1}}^2},
\end{equation*}
with
\[
H(\theta_\star,\omega)
=
\sum_{(i,j)\in\mathcal{S}}\omega_{ij}\,\sigma'\bigl(\theta_\star^\top z_{ij}\bigr)\,z_{ij}z_{ij}^\top
\]
denoting the Fisher information matrix at \(\theta_\star\) under sampling proportion \(\omega\).
\end{theorem}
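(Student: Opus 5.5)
Our plan is to show that for every fixed $\omega\in\Omega$ the inner infimum in \eqref{eq: complexity opt}, restricted to structured alternatives and under the local quadratic approximation, is at least the Fisher-normalized quantity in the statement. Taking the supremum over $\omega$ then gives $\mathcal{T}^\star(\mu)^{-1}\ge \mathcal{U}^\star(\mu)^{-1}$, equivalently $\mathcal{T}^\star(\mu)\le\mathcal{U}^\star(\mu)$. Throughout, write $i^\star=i^\star(\mu)$.

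\textbf{Step 1: reduce the alternatives to half-spaces.} Under the Bradley--Terry structure, an alternative is $\lambda(i,j)=\sigma(\theta^\top z_{ij})$ for some $\theta\in\Theta$. The optimal policy changes exactly when some $i\neq i^\star$ satisfies $\theta^\top z_{i\,i^\star}\ge 0$. Hence $\mathrm{Alt}(\mu)$ is a finite union over $i\neq i^\star$ of the closed half-spaces $C_i:=\{\theta:\theta^\top z_{i\,i^\star}\ge0\}$, with boundary points included by a closure argument. The infimum over $\mathrm{Alt}(\mu)$ is therefore $\min_{i\neq i^\star}\inf_{\theta\in C_i}$ of the same objective, and this is the source of the $\min_{i\neq i^\star}$ in the statement.

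\textbf{Step 2: local quadratic approximation of the objective.} For $\theta$ near $\theta_\star$, a second-order Taylor expansion of the Bernoulli KL divergence in its second argument gives
\[
d\bigl(\sigma(a),\sigma(b)\bigr)=\tfrac12\,\sigma'(a)(a-b)^2+O(|a-b|^3),
\]
because the Bernoulli Fisher information at $p=\sigma(a)$ is $1/(p(1-p))$ and $(\sigma'(a))^2/(\sigma(a)(1-\sigma(a)))=\sigma'(a)$. Setting $a=\theta_\star^\top z_{ij}$ and $b=\theta^\top z_{ij}$, the weighted objective becomes
\[
\sum_{(i,j)\in\mathcal S}\omega_{ij}\,d\bigl(\mu(i,j),\lambda(i,j)\bigr)\approx \tfrac12\,\|\theta-\theta_\star\|_{H(\theta_\star,\omega)}^2 .
\]
The ``local hard-instance approximation'' in the statement is precisely the step that drops the cubic remainder, justified because the least favorable alternatives are the near-boundary ones. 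Assumptions (A2) and (A3) give uniform control of the remainder constants.

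\textbf{Step 3: solve each half-space problem.} For each $i$ we need
\[
\min_{\theta}\ \tfrac12\|\theta-\theta_\star\|_H^2\quad\text{subject to}\quad \theta^\top z_{i\,i^\star}\ge0 .
\]
Since $\theta_\star^\top z_{i\,i^\star}<0$, the constraint is active at the optimum. A Lagrangian or Cauchy--Schwarz argument in the $H$-geometry gives the minimizer $\theta=\theta_\star-\frac{\theta_\star^\top z}{\|z\|_{H^{-1}}^2}H^{-1}z$ with $z=z_{i\,i^\star}$, and minimal value $\frac12(\theta_\star^\top z)^2/\|z\|_{H^{-1}}^2$. Taking the minimum over $i$ and the supremum over $\omega$ yields $\mathcal U^\star(\mu)^{-1}$.

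\textbf{Main obstacle: rigor of the approximation.} We expect the hard step to be making ``upper bound'' rigorous, which needs two things.
\begin{itemize}
\item \emph{Singular $H$.} If $H(\theta_\star,\omega)$ is singular, the right-hand side is $0$ (or we interpret $\|z\|_{H^{-1}}=\infty$ when $z\notin\mathrm{range}(H)$), so the inequality is trivial. It therefore suffices to treat $\omega$ with $H\succ0$, or to replace $H^{-1}$ by the pseudo-inverse.
\item \emph{Direction of the inequality.} The quadratic approximation must not overstate the exact inner infimum. Restricting the alternatives to the structured class only enlarges the infimum relative to the unstructured $\mathrm{Alt}(\mu)$, and in the paper's setting $\mathrm{Alt}(\mu)$ is read as the structured set. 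The remaining gap, exact KL versus its quadratic form, is exactly what the local approximation hypothesis asserts. We would state the comparison up to a factor $1+o(1)$ in the gap, or via the convexity bound $d(p,q)\ge$ its quadratic form with curvature taken at an intermediate point, and then absorb it into the hypothesis.
\end{itemize}
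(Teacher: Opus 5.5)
Your proposal follows essentially the paper's route. You reduce each inner infimum to the half-space constraint $\theta^\top z_{i\,i^\star}\ge 0$ in parameter space, replace the weighted Bernoulli KL by its local quadratic form $\tfrac12\|\theta-\theta_\star\|^2_{H(\theta_\star,\omega)}$, and solve the constrained quadratic program to obtain $\tfrac12(\theta_\star^\top z_{i\,i^\star})^2/\|z_{i\,i^\star}\|^2_{H(\theta_\star,\omega)^{-1}}$, followed by $\min_i$ and $\sup_\omega$. The differences are minor: the paper gets the half-space by relaxing $\mathcal C_i$ to $\bar{\mathcal C}_i(\mu)$ (an inequality) and runs KKT on the exact problem with a secant matrix before passing to the limit, whereas you identify the structured alternative set exactly and solve the quadratic problem directly; both arguments rest on the same local-approximation hypothesis and the same implicit reading of $\mathrm{Alt}(\mu)$ as the structured set.
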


Theorem~\ref{thm: combinatorial_structure} suggests that the evaluation budget should be concentrated on policy pairs whose comparisons are most informative for distinguishing the best policy from competing policies, that is, for estimating $\theta_{\star}$ along the separating direction $z_{ii^*(\mu)}$, rather than being spent on clearly uninformative comparisons. As a result, $ \mathcal{U}^\star(\mu)$ not only serves as a tractable approximation to the original term $\mathcal{T}^\star(\mu)$, but also provides an interpretable measure of how the structure of the policy space shapes the difficulty and cost of reliable LLM policy optimization.

\section{Adaptive Simulation Experiment}
\label{sec: experiment design}

In this section, we develop an adaptive simulation experiment for the policy optimization problem in LLMs under different policy spaces. We begin by introducing estimators for the problem instance. We then specify the sampling, stopping, and decision rules, and combine them into a unified experimental design. Finally, we establish the optimality guarantee of the proposed procedure.

\subsection{Parameter Estimation}
Since the problem instance $\mu$ is unknown, the optimal sampling proportions $\omega^\star$ cannot be computed directly. To address this issue, we construct an empirical estimator of $\mu$ at time $t$, denoted by $\hat{\mu}(t)= (\hat{\mu}(i,j;t))_{i\neq j}$, based on the observed binary pairwise comparison data.

Suppose that by the end of time $t-1$, we have observed the comparison data
$\{(i_s,j_s,D_s)\}_{s=1}^{t-1}$, where $D_s:=D(i_s,j_s)\in\{0,1\}$. In the unstructured policy space, we estimate $\mu(i,j)$ at time $t$ by the Monte Carlo estimator
$$
    \hat\mu(i,j;t) = \frac{1}{N_{ij}(t)} \sum_{s=1}^{t-1} D(i_s,j_s) \mathbb{I}(i_s=i,j_s=j), \quad N_{ij}(t) = \sum_{s=1}^{t-1} \mathbb{I}(i_s=i,j_s=j),
$$
where $N_{ij}(t)$ is the number of times that the policy pair $(i,j)\in\mathcal{S}$ has been sampled up to time $t$.

The statistical properties of this estimator, including its consistency, depend on the sampling rule used to select policy pairs. We specify this sampling rule in the next subsection.

In the structured policy space, fully exploiting the available structure requires estimating the global unknown parameter $\theta_\star$.  Under the conditional independence assumption in $(A1)$, the log-likelihood of $\theta$ based on the observations collected up to time $t-1$ is
$$
\mathcal{L}_t(\theta)
=\sum_{s=1}^{t-1}\Big[
D_s \log \sigma\big(\theta^\top(x_{i_s}-x_{j_s})\big)
+(1-D_s)\log \sigma\big(-\theta^\top(x_{i_s}-x_{j_s})\big)
\Big].
$$
We estimate $\theta_\star$ by the $\ell_2$-regularized maximum likelihood estimator
\begin{equation}
\label{eq:reg_mle_def}
\hat{\zeta}_t \in \arg\max_{\theta\in\mathbb{R}^d}\ \Big\{\mathcal{L}_t(\theta)-\frac{\lambda_t}{2}\|\theta\|_2^2\Big\},
\end{equation}
where $\lambda_t>0$ is a regularization parameter satisfying $\lambda_t\to 0$ as $t\to\infty$. The $\ell_2$ regularization stabilizes the estimation problem and makes the objective function strictly concave. It ensures that the estimator is well defined even under small sample budgets. Letting $\lambda_t$ vanish asymptotically eliminates the bias introduced by regularization and ensures convergence to the true parameter $\theta_{\star}$. In practice, \eqref{eq:reg_mle_def} can be solved efficiently using standard convex optimization methods, such as Newton or quasi-Newton algorithms. 

For $t\ge 1$ and $\theta\in\mathbb{R}^d$, let $z_s := x_{i_s}-x_{j_s}$ for notational convenience, and define $g_t(\theta)$ through the gradient decomposition
$$
\nabla_{\theta}\left(\mathcal{L}_t(\theta)-\frac{\lambda_t}{2}\|\theta\|_2^2\right)
= \sum_{s=1}^{t-1}D_s z_s-\underbrace{\left(\sum_{s=1}^{t-1}\sigma(\theta^\top z_s)z_s+\lambda_t\theta\right)}_{=:~g_t(\theta)}.
$$
We also define the Hessian 
\begin{equation*}
    H_t(\theta) := \sum_{s=1}^{t-1} \sigma^\prime(\theta^\top z_s)z_sz^\top_s + \lambda_t I_d,
\end{equation*}
where $I_d$ is the $d$-dimensional identity matrix. In addition, define the matrix $V_t$ at time $t$ by
\[
V_t:=\sum_{s=1}^{t-1} z_s z_s^\top,
\]
and let $\Lambda_{\min}(t)$ and $\Lambda_{\max}(t)$ denote the minimum and maximum eigenvalues of $V_t$, respectively. We next introduce the projected regularized maximum likelihood estimator
\begin{equation}
\label{eq: theta_estimator}
    \hat{\theta}_t = \argmin_{\theta\in\Theta} \left\lVert g_t(\theta) - g_t(\hat{\zeta}_t) \right\rVert_{H^{-1}_t(\theta)}.
\end{equation}
Rather than using $\hat{\zeta}_t$ directly, we base our experiment on the projected estimator $\hat{\theta}_t$, since it admits a time-uniform concentration inequality (see Lemma \ref{lemma: param_concentration} below) that is crucial for the optimality analysis.

The following two lemmas establish key statistical properties of the projected estimator $\hat{\theta}_t$. The analysis is nontrivial because of the time-varying regularization term $\lambda_t$ and the two-step construction of the estimator; see the Electronic Companion for technical details. These results may also be of independent interest in preference learning. 

Lemma \ref{lemma: param_concentration} shows that, under sufficient exploration, the estimated preference parameter $\hat{\theta}_t$ concentrates around the true parameter $\theta_{\star}$ exponentially fast, up to a vanishing bias term $b_t$.

\begin{lemma}
\label{lemma: param_concentration}
Suppose there exist constants $c>0$ and $t_0\ge 1$ such that $\Lambda_{\min}(t)\ge c\sqrt{t}$ $\text{a.s.}$ for all $t\ge t_0$. Then there exist constants $c_1,c_2,C_b>0$, independent of $\varepsilon$ and $t$, such that
for all $t\ge t_0$ and all $\varepsilon>0$,
$$
\mathbb{P}\left(\|\hat\theta_t-\theta_\star\|_2 \ge \varepsilon + b_t\right)
\ \le\
c_2\,t^{d/2}\,\exp\big(-c_1\,\varepsilon^2\,t^{1/2}\big),
$$
where $b_t := C_b\,t^{-1/4}\sqrt{\lambda_t}.
$
\end{lemma}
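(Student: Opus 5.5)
The plan follows the projected-MLE analysis of \citet{faury2020improved}, with two changes: the regularization $\lambda_t$ now varies with time, and the required concentration is in probability at a fixed $t$ rather than time-uniform. Write $S_t:=\sum_{s=1}^{t-1}(D_s-\sigma(\theta_\star^\top z_s))z_s$ for the martingale noise. The first-order condition for \eqref{eq:reg_mle_def} gives $g_t(\hat\zeta_t)=\sum_{s<t}D_s z_s$. Hence
\[
g_t(\theta_\star)-g_t(\hat\zeta_t) = -S_t+\lambda_t\theta_\star .
\]

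\textbf{Step 1 (reduction to the noise).} Since $\theta_\star\in\Theta$ is feasible in \eqref{eq: theta_estimator}, optimality of $\hat\theta_t$ and the triangle inequality should give
\[
\|g_t(\hat\theta_t)-g_t(\theta_\star)\|_{H_t^{-1}(\hat\theta_t)}\le 2\,\|S_t-\lambda_t\theta_\star\|_{H_t^{-1}(\cdot)},
\]
where the norm on the right is taken at $\hat\theta_t$ or $\theta_\star$. To pass between these Hessians I will use the mean-value form $g_t(\theta_1)-g_t(\theta_2)=G_t(\theta_1,\theta_2)(\theta_1-\theta_2)$. Here
\[
G_t=\sum_s\alpha_s z_sz_s^\top+\lambda_t I,\qquad \alpha_s=\int_0^1\sigma'\big(v\theta_1^\top z_s+(1-v)\theta_2^\top z_s\big)\,dv .
\]
Assumptions (A2)--(A3) bound $|\theta^\top z_s|\le BL$ on $\Theta$. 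Therefore $\kappa^{-1}:=\sigma'(BL)\le\alpha_s\le 1/4$, and $G_t$, $H_t(\theta_\star)$, $H_t(\hat\theta_t)$ are all within constant factors of $V_t+\lambda_t I$. This avoids self-concordance arguments at the cost of constants depending on $\kappa$, which is acceptable because the lemma only claims unspecified constants. Combining these facts,
\[
\|\hat\theta_t-\theta_\star\|_2\le C\,\Lambda_{\min}(t)^{-1/2}\,\|S_t-\lambda_t\theta_\star\|_{(V_t+\lambda_tI)^{-1}} ,
\]
and, using $\Lambda_{\min}(t)\ge c\sqrt t$,
\[
\|\hat\theta_t-\theta_\star\|_2 \le C'\,t^{-1/4}\Big(\|S_t\|_{(V_t+\lambda_tI)^{-1}}+\sqrt{\lambda_t}\,B\Big).
\]
The second term is exactly the bias $b_t=C_bt^{-1/4}\sqrt{\lambda_t}$, since $\|\lambda_t\theta_\star\|_{(V_t+\lambda_t I)^{-1}}\le\sqrt{\lambda_t}\,B$.

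\textbf{Step 2 (self-normalized concentration).} It remains to show
\[
\mathbb P\big(\|S_t\|_{(V_t+\lambda_tI)^{-1}}\ge \varepsilon t^{1/4}/C'\big)\le c_2t^{d/2}e^{-c_1\varepsilon^2 t^{1/2}}.
\]
The increments $(D_s-\sigma(\theta_\star^\top z_s))z_s$ are martingale differences given $\mathcal F_{s-1}$, because $z_s$ is predictable and (A1) holds. Since $D_s$ is Bernoulli, they are $1/2$-sub-Gaussian. The method of mixtures (Abbasi-Yadkori et al.) with a Gaussian prior of covariance $\gamma^{-1}I$ then gives
\[
\mathbb P\Big(\|S_t\|^2_{(V_t+\gamma I)^{-1}}\ge \tfrac12\log\tfrac{\det(V_t+\gamma I)}{\gamma^d}+\log\tfrac1\eta\Big)\le\eta .
\]
I will take a fixed $\gamma$, say $\gamma=1$, rather than $\lambda_t$. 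This sidesteps the time-varying regularizer, because $(V_t+\lambda_tI)^{-1}\preceq (V_t+\lambda_t\wedge 1\,I)^{-1}$ and for $t\ge t_0$ we have $V_t\succeq c\sqrt t\, I\gg\lambda_t I$, so the two norms agree up to a factor 2. By (A3), $\det(V_t+I)\le(1+tL^2)^d$. Setting $\log(1/\eta)=\varepsilon^2t^{1/2}/C'^2$ gives
\[
\eta\le (1+tL^2)^{d/2}e^{-c_1\varepsilon^2t^{1/2}},
\]
which has the claimed $t^{d/2}$ prefactor.

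\textbf{Main obstacle.} I expect the main difficulty in Step 1 to be the projection step. The minimization in \eqref{eq: theta_estimator} uses the $\hat\theta_t$-dependent metric $H_t^{-1}(\theta)$, so the comparison between $\hat\theta_t$ and $\theta_\star$ must be done under a common metric. I will handle this by uniformly sandwiching every $H_t(\theta)$, $\theta\in\Theta$, between $\kappa^{-1}(V_t+\lambda_tI)$ and $V_t+\lambda_t I$. This is possible only because $\hat\theta_t$ is forced into the compact set $\Theta$, which is precisely why the projected estimator is used instead of $\hat\zeta_t$.

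A second point to check is that $\hat\zeta_t$ itself may lie outside $\Theta$. The argument above never evaluates Hessians at $\hat\zeta_t$; it uses only the identity for $g_t(\hat\zeta_t)$. Finally, the constants $c_1,c_2,C_b$ depend only on $B,L,c,d$, as required.
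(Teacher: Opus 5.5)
Your proof is correct. It shares the paper's skeleton: the score identity $g_t(\hat\zeta_t)-g_t(\theta_\star)=S_t-\lambda_t\theta_\star$, projection optimality against the feasible point $\theta_\star$, a Hessian comparison giving an $\ell_2$ error of order $\Lambda_{\min}(t)^{-1/2}\lesssim t^{-1/4}$, a Gaussian-mixture self-normalized bound, and $\det V_t\lesssim t^d$ for the prefactor. The reduction step, however, is genuinely different.

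\textbf{Reduction step.} The paper triangulates in parameter space, $\|\hat\theta_t-\theta_\star\|_2\le\|\hat\theta_t-\hat\zeta_t\|_2+\|\hat\zeta_t-\theta_\star\|_2$. It controls both pieces with the self-concordance comparison $G_t\succeq(1+2LB)^{-1}H_t$ of \citet{faury2020improved}. You instead triangulate in $g_t$-space and use the mean-value identity only for the pair $(\hat\theta_t,\theta_\star)$. You also replace self-concordance by the crude sandwich $\kappa^{-1}(V_t+\lambda_tI)\preceq G_t,\,H_t(\theta)\preceq V_t+\lambda_tI$ on the $B$-ball. This costs a worse constant (a factor $\kappa^{1/2}$ where the paper has $1+2LB$), which is irrelevant here. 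In exchange it buys rigor. The constant $1+2LB$ in the comparison lemma comes from $|z^\top(\theta_1-\theta_2)|\le 2LB$, so the lemma needs both arguments in $\Theta$. The paper applies it with $\hat\zeta_t$ as an argument, although $\hat\zeta_t$ need not lie in $\Theta$ at a fixed finite $t$. Your route only uses the value $g_t(\hat\zeta_t)$, so this issue never arises.

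\textbf{Noise step.} The differences here are cosmetic. The paper uses prior precision $\lambda(t_0)$ and normalizes by $V_t^{-1}$; you use precision $1$ and normalize by $(V_t+\lambda_tI)^{-1}$. Both switch regularizations via $\Lambda_{\min}(t)\ge c\sqrt t$. On squared norms the resulting factor is $1+1/(c\sqrt{t_0})$, not necessarily $2$.

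\textbf{One slip to fix.} Since $\det(V_t+I)$ is random, first substitute $\det(V_t+I)\le(1+tL^2)^d$ into the threshold. Then choose $\log(1/\eta)=\varepsilon^2t^{1/2}/C''^2-\tfrac d2\log(1+tL^2)$, where $C''$ absorbs $C'$ and the comparison factor; the case $\eta\ge 1$ is trivial. As written, setting $\log(1/\eta)=\varepsilon^2t^{1/2}/C'^2$ gives a threshold strictly larger than the level you need the norm to exceed.
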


Lemma~\ref{lemma: param_consistence} shows that, under sufficient exploration and vanishing regularization, the projected estimator $\hat{\theta}_t$ converges almost surely to the true parameter $\theta_{\star}$ and also establishes an almost-sure convergence rate.

\begin{lemma}
\label{lemma: param_consistence}
Suppose there exist constants \(c>0\) and \(t_0\ge 1\) such that $\Lambda_{\min}(t)\ge c\sqrt{t}$ $\text{a.s.}$ for all $t\ge t_0$,
and assume that the regularization sequence satisfies \(\lambda_t>0\) for all \(t\) and \(\lambda_t\to 0\) $\text{a.s.}$ Then the projected estimator \(\hat{\theta}_t\) is strongly consistent:
$
\hat{\theta}_t \xrightarrow{\mathrm{a.s.}} \theta_\star.
$
In addition, for every \(\beta\in(0,1/4)\),
$
\|\hat{\theta}_t-\theta_\star\|_2=o\bigl(t^{-\beta}\bigr)
$ $\text{a.s.}$
\end{lemma}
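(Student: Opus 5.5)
The plan is to derive Lemma~\ref{lemma: param_consistence} from the tail bound in Lemma~\ref{lemma: param_concentration} using a Borel--Cantelli argument along the integers $t\ge t_0$. The hypotheses are the same exploration condition $\Lambda_{\min}(t)\ge c\sqrt{t}$ a.s. for $t\ge t_0$, so Lemma~\ref{lemma: param_concentration} applies directly. The only extra ingredient is that the bias term $b_t=C_b t^{-1/4}\sqrt{\lambda_t}$ is itself small.

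First, fix $\beta\in(0,1/4)$. I choose a time-dependent deviation level $\varepsilon_t := t^{-\beta}/2$. Then
\[
\varepsilon_t^2 t^{1/2}=\tfrac14 t^{1/2-2\beta}, \qquad 1/2-2\beta>0,
\]
so Lemma~\ref{lemma: param_concentration} gives
\[
\mathbb{P}\big(\|\hat\theta_t-\theta_\star\|_2\ge \varepsilon_t+b_t\big)\le c_2 t^{d/2}\exp\big(-\tfrac{c_1}{4}t^{1/2-2\beta}\big).
\]
This is a stretched-exponential decay, so it dominates the polynomial factor $t^{d/2}$ and the right-hand side is summable over $t$. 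By Borel--Cantelli, almost surely
\[
\|\hat\theta_t-\theta_\star\|_2<\tfrac12 t^{-\beta}+b_t \quad \text{for all sufficiently large } t.
\]

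Second, I handle the bias. Since $\lambda_t\to0$, we have $b_t = C_b t^{-1/4}\sqrt{\lambda_t}=o(t^{-1/4})=o(t^{-\beta})$, so $b_t\le \tfrac12 t^{-\beta}$ eventually. One subtlety is that $\lambda_t$ may be random, since the statement says $\lambda_t\to0$ a.s. In that case I will apply this bound pathwise on the full-measure event where $\lambda_t\to0$ and intersect it with the Borel--Cantelli event. This still gives only $\|\hat\theta_t-\theta_\star\|_2\le t^{-\beta}$ eventually, which is $O(t^{-\beta})$ rather than $o(t^{-\beta})$. To get little-$o$, I apply the same argument with a slightly larger exponent $\beta'\in(\beta,1/4)$. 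That gives $\|\hat\theta_t-\theta_\star\|_2\le t^{-\beta'}=o(t^{-\beta})$ a.s., using that $b_t=o(t^{-1/4})\le t^{-\beta'}/2$ eventually.

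Strong consistency then follows immediately, since $t^{-\beta'}\to0$. Alternatively, one can fix $\varepsilon>0$, sum $c_2t^{d/2}e^{-c_1\varepsilon^2 t^{1/2}}$, and use $b_t\to0$.

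I expect the main obstacle to be justifying that Lemma~\ref{lemma: param_concentration} can be applied with a $t$-dependent $\varepsilon_t$ and, if the $\lambda_t$ are data-dependent, with a random $b_t$. The lemma's constants $c_1,c_2,C_b$ are uniform in $\varepsilon$ and $t$, so plugging in $\varepsilon_t$ is legitimate. For random $\lambda_t$, I would note that the lemma's event is defined with $b_t$ inside, so Borel--Cantelli controls $\|\hat\theta_t-\theta_\star\|_2-b_t$ and the pathwise bias argument above completes the proof. If instead $\lambda_t$ is only known to satisfy $\lambda_t\le\bar\lambda$, then $b_t=O(t^{-1/4})$ already suffices.
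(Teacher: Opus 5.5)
Your proof is correct, and its rate part is essentially the paper's. The paper also inserts a deterministic level $\varepsilon_t=\eta t^{-\beta}$ into Lemma~\ref{lemma: param_concentration}, sums $c_2t^{d/2}\exp(-c_1\eta^2t^{1/2-2\beta})$, applies Borel--Cantelli, and uses $t^{\beta}b_t\to 0$. It then upgrades $O(t^{-\beta})$ to $o(t^{-\beta})$ by letting $\eta\downarrow 0$, whereas you pass to an exponent $\beta'\in(\beta,1/4)$; the two devices are equivalent.

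The genuine difference is strong consistency. You read it off from the rate bound, whereas the paper proves it separately in three steps:
\begin{enumerate}
\item It invokes strong consistency of the unregularized MLE $\tilde\theta_t$ under adaptive designs (Chen et al.\ 1999), via $\Lambda_{\min}(t)/\log\Lambda_{\max}(t)\to\infty$.
\item It shows $\|\hat\zeta_t\|_2\le\|\tilde\theta_t\|_2$, then bounds the regularization bias by $\|\hat\zeta_t-\tilde\theta_t\|_2\le B\lambda_t/(m_0\Lambda_{\min}(t))$ using the two score equations and a mean-value argument.
\item It notes that once $\hat\zeta_t\in\Theta$ the projection is inactive, so $\hat\theta_t=\hat\zeta_t$ eventually.
\end{enumerate}

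Your route is shorter and self-contained given Lemma~\ref{lemma: param_concentration}. Indeed, the paper's own rate argument already implies consistency, so its first part is logically redundant for this statement. The paper's route buys three things:
\begin{itemize}
\item a consistency proof that does not rest on the concentration lemma, and hence not on the self-concordance comparison of Lemma~\ref{lemma: tranformation} used to prove it;
\item validity under the weaker design growth $\Lambda_{\min}(t)/\log\Lambda_{\max}(t)\to\infty$ with $\lambda_t/\Lambda_{\min}(t)\to 0$;
\item the extra structural fact that the projected and unprojected estimators eventually coincide.
\end{itemize}
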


\subsection{Experiment Design}
To design an adaptive simulation experiment, we need to specify the sampling, stopping, and decision rules. Given the estimator $\hat{\mu}(t)$ of the problem instance, a natural plug-in decision rule is
\begin{equation*} 
    \hat{i}_{\tau} \in \argmax_{i\in[K]}\min_{j\neq i}\hat{\mu}(i,j;\tau).
\end{equation*}
Thus, the main challenge is to design the sampling and stopping rules so that the resulting procedure is $\delta$-PAC and attains the minimum data requirements.

In the unstructured policy space, Corollary~\ref{corollary: optimal ratio} provides a closed-form optimal sampling proportion $\omega^\star(\mu)$. By plugging in the estimator $\hat{\mu}(t)$, we obtain the empirical allocation $\omega^\star(\hat{\mu}(t))$, which is then used to guide the sampling process.

In the structured policy space, the problem instance $\mu$ is fully characterized by the unknown parameter $\theta_\star$. Therefore, we use $\mu$ and $\theta_\star$, as well as their estimators $\hat{\mu}(t)$ and $\hat{\theta}_t$, interchangeably whenever this causes no ambiguity. The optimal sampling proportion is defined as a solution to
\begin{equation*}
    \omega^\star(\theta_\star)\in \argmax_{\omega\in\Omega}\;
    \min_{i\neq i^\star(\theta_\star)}
    \frac{1}{2}\,
    \frac{\big(\theta_\star^\top z_{i\,i^\star(\theta_\star)}\big)^2}{\big\|z_{i\,i^\star(\theta_\star)}\big\|^2_{H(\theta_\star,\omega)^{-1}}},
\end{equation*}
and is thus computed by solving this optimization problem.

However, a major technical challenge is that the optimal allocation $\omega^\star(\theta_\star)$ may not be unique. This occurs naturally in symmetric instances where several pairs are equally informative for distinguishing the optimal policy from its opponents and induce the same Fisher information. Consequently, multiple sampling proportions may achieve the same optimal objective value. This non-uniqueness substantially complicates the analysis, because a naive tracking rule may oscillate among these optimal allocations, so the empirical sampling proportion need not converge, even though the total information collected is asymptotically optimal.

To address this issue, we solve a regularized empirical allocation problem at each iteration. Specifically, for a regularization parameter $\gamma_t>0$, we define $ \omega^\star(\hat{\theta}_t)$ as
\begin{equation}
\label{eq: empirical opt}
    \argmax_{\omega\in\Omega}\;
    \min_{i\neq i^\star(\hat{\theta}_t)}
    \,
    \frac{\big(\hat{\theta}_t^\top z_{i\,i^\star(\hat{\theta}_t)}\big)^2}
    {2\big\|z_{i\,i^\star(\hat{\theta}_t)}\big\|^2_{H(\hat{\theta}_t,\omega)^{-1}}}
    - \frac{\gamma_t\|\omega\|_2^2}{2}  .
\end{equation}
The regularization term renders the objective strictly concave in $\omega$, and hence guarantees a unique optimum at each iteration. This uniqueness is crucial for stabilizing the sampling process and preventing oscillations caused by the existence of multiple asymptotically optimal allocations. At the same time, we choose $\gamma_t \to 0$ so that the bias introduced by regularization vanishes asymptotically. With an appropriate choice of $\gamma_t$, one can show that the regularized solution converges to an optimal solution of the original problem, and more precisely, it selects the minimum-$\ell_2$-norm element from the set of optimal allocations.

In addition to tracking the optimal sampling proportions described above, the sampling rule must ensure sufficient exploration to guarantee consistency of the problem-instance estimator. We therefore adopt the following rule, following \citet{garivier2016optimal}. Let 
\begin{equation}
\label{eq: sampling rule}
\mathcal{C}_t :=
\begin{cases}
\displaystyle \argmin_{(i,j)\in U_t} N_{ij}(t),
& \text{if } U_t\neq \emptyset,
\\[2ex]
\displaystyle \argmax_{(i,j)\in \mathcal{S}}
\Bigl[t\,\omega^\star_{ij}(\hat{\mu}(t))-N_{ij}(t)\Bigr],
& \text{o.w}.
\end{cases}
\end{equation}
We then choose $(i_{t+1},j_{t+1})$ as any element of $\mathcal{C}_t$. Here, 
\[
U_t:=\{(i,j)\in\mathcal{A}_0: N_{ij}(t)<c^\prime\sqrt{t}\}
\] denotes the set of under-sampled pairs in $\mathcal{A}_0$, where $c^\prime>0$ is a fixed constant. This set serves as the exploration set and helps control the growth rate of the minimum eigenvalue $\Lambda_{\min}(t)$ in the structured policy space. In the unstructured policy space, consistency requires $\mathcal{A}_0=\mathcal{S}$, since every pair must be sampled infinitely often. In the structured policy space, it suffices to choose $\mathcal{A}_0\subset \mathcal{S}$ so that the associated feature vectors span $\mathbb{R}^d$. Thus, the first case in \eqref{eq: sampling rule} ensures sufficient exploration, while the second tracks the empirical optimal allocation and drives the realized sampling proportions toward the target allocation.

We next design stopping rules for $\delta$-PAC identification. In the unstructured policy space, we use the test statistic
\begin{equation*}
     Z(t) =  \inf_{\lambda\in \mathrm{Alt}(\hat{\mu}(t))}
    \sum_{(i,j)\in\mathcal{S}} N_{ij}(t)\, d\big(\hat{\mu}(i,j;t),\lambda(i,j)\big),
\end{equation*}
with stopping time
\begin{equation}
\label{eq: unstructed_stop}
    \tau:= \inf\left\{t\ge1: Z(t) >\rho(\delta,t)\right\},
\end{equation}
where 
\begin{equation*}
   \rho(\delta,t) = \log\left(\frac{Ct^{2}\log(1/\delta)^{2|\mathcal{S}|+1}}{\delta}\right).
\end{equation*}
for some appropriately chosen constant $C$. This test statistic and stopping rule were proposed by \citet{garivier2016optimal}.
Here, $Z(t)$ quantifies the evidence provided by the data against competing instances, and $\rho(\delta,t)$ sets the confidence level required for stopping. Larger values of $\rho(\delta,t)$ make the rule more conservative and thus reduce the error probability.

In the structured policy space, we consider the test statistic
\begin{equation*}
    Z(t) =  \min_{i\neq i^\star(\hat{\theta}_t)}
    \frac{1}{2}\,
    \frac{\big(\hat{\theta}_t^\top z_{i\,i^\star(\hat{\theta}_t)}\big)^2}{\big\|z_{i\,i^\star(\hat{\theta}_t)}\big\|^2_{H_t(\hat{\theta}_t)^{-1}}}
\end{equation*}
and define the stopping time
\begin{equation}
\label{eq: stop}
    \tau := \inf\left\{t\ge t_0: \Lambda_{\min}(t)\ge c\sqrt{t}, Z(t)>\beta(\delta,t)\right\},
\end{equation}
where 
\begin{equation*}
    \quad \beta(\delta,t) = \left(2(1+2LB)\big(\sqrt{m_0^{-1}\Psi_t(\delta)}
+
\sqrt{\lambda_t}B\big)\right)^2,
\end{equation*}
with
$$
m_0:=\min_{|u|\le BL}\sigma'(u)>0,
\quad
\Psi_t(\delta):=
\Big(1+\frac{\lambda_0}{\lambda(t)}\Big)
\left(
2\log\frac{1}{\delta}
+
\log \frac{\det(V_t)}{\lambda(t)^d}
+
d\log \Big(1+\frac{\lambda_0}{\lambda(t)}\Big)
+
d\log \frac{\lambda(t)}{\lambda_0}
\right).
$$
This stopping rule is motivated by the bound on the estimating error $g_t(\hat{\zeta}_t)-g_t(\theta_\star)$ established in the proof of Lemma~\ref{lemma: param_concentration}. The following lemma provides an explicit upper bound on the threshold $\beta(\delta,t)$, which facilitates both practical implementation and the $\delta$-PAC analysis in Lemma \ref{lemma: delta-guarantee}.

\begin{lemma}
\label{eq: beta_order}
Suppose there exist constants $c>0$ and $t_0\ge 1$ such that $\Lambda_{\min}(t)\ge c\sqrt{t}$ $\text{a.s.}$ for all $t\ge t_0$. Then there exists
constants \(c_1,c_2>0\) and \(\alpha>0\), independent of \(\delta\) and \(t\), such that
for all \(\delta\in(0,1)\) and all \(t\ge t_0\),
\[
\beta(\delta,t)\le c_1\log\Big(\frac{c_2 t^\alpha}{\delta}\Big).
\]   
\end{lemma}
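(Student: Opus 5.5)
We bound $\beta(\delta,t)$ directly from its definition. By $(a+b)^2\le 2a^2+2b^2$,
\[
\beta(\delta,t)\le 8(1+2LB)^2\Big(m_0^{-1}\Psi_t(\delta)+\lambda_t B^2\Big).
\]
The term $\lambda_t B^2$ is bounded by a constant, since $\lambda_t$ is a vanishing (hence bounded) regularization sequence. It can therefore be absorbed into the logarithm, using $\log(c_2t^\alpha/\delta)\ge \log c_2$ once $c_2\ge e$. The whole task thus reduces to showing $\Psi_t(\delta)\le c\log(c' t^{\alpha}/\delta)$ for $t\ge t_0$.

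For $\Psi_t(\delta)$ we treat the three ingredients in turn.

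\textbf{Prefactor.} I read $\lambda(t)$ as the regularization level $\lambda_t$. I will assume, as the analysis implicitly requires, that $\lambda_t$ is bounded below relative to $\lambda_0$ or decays at most polynomially, say $\lambda_t\ge \lambda_0 t^{-\kappa}$.
\begin{itemize}
\item If $\lambda_t$ is bounded below, then $1+\lambda_0/\lambda(t)$ is a constant, and the terms $d\log(1+\lambda_0/\lambda(t))$ and $d\log(\lambda(t)/\lambda_0)$ are bounded.
\item If $\lambda_t$ decays polynomially, these terms are $O(\log t)$ and the prefactor grows polynomially. This would spoil a pure $\log$ bound, so the intended regime is that the prefactor is $O(1)$.
\end{itemize}
This is the step I expect to be the main obstacle: pinning down which growth condition on $\lambda_t$ makes $1+\lambda_0/\lambda(t)$ uniformly bounded. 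I would first try to show from the construction that $\lambda(t)\ge\lambda_0$, or $\lambda(t)\asymp\lambda_0$ (e.g.\ $\lambda(t)$ a fixed design constant distinct from the vanishing $\lambda_t$), and then record the resulting constant.

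\textbf{Determinant term.} By (A3), $\|z_s\|_2\le L$, so $\Lambda_{\max}(t)\le \operatorname{tr}(V_t)\le (t-1)L^2$. Hence
\[
\log\det V_t\le d\log(tL^2).
\]
Combined with the lower bound on $\lambda(t)$, this gives $\log(\det V_t/\lambda(t)^d)\le d\log t + d(\kappa\log t)+O(1)$. The hypothesis $\Lambda_{\min}(t)\ge c\sqrt t$ is used only to guarantee that $V_t$ is nonsingular for $t\ge t_0$, so that the logarithm is well defined and, where needed, bounded below.

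\textbf{Assembly.} Collecting the pieces,
\[
\Psi_t(\delta)\le C_0\big(2\log(1/\delta)+a\log t+b\big)\le 2C_0\log\!\big(e^{b/2}t^{a/2}/\delta\big).
\]
Substituting this into the reduction above gives
\[
\beta(\delta,t)\le c_1\log(c_2t^\alpha/\delta),
\]
with $\alpha=a/2$, $c_2=\max(e,e^{b/2})$, and $c_1$ absorbing $16(1+2LB)^2C_0/m_0$ together with the constant from $\lambda_tB^2$. All of these constants are independent of $\delta$ and $t$.
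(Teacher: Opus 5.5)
\textbf{There is a genuine gap: the prefactor step fails because $\lambda(t)$ is misidentified.} In $\Psi_t(\delta)$, $\lambda(\cdot)$ is not the regularization sequence $\lambda_t$. It is the nondecreasing deterministic lower bound on $\lambda_{\min}(V_t)$ from the uniform self-normalized bound (Lemma~\ref{lem:St_bound_lambda0_is_lambda_t0}), with $\lambda_0:=\lambda(t_0)$.

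Under your reading the statement would actually be false. Since $\lambda_t\to0$, the factor $1+\lambda_0/\lambda_t$ is unbounded in $t$. For fixed $t$, $\liminf_{\delta\to0}\beta(\delta,t)/\log(1/\delta)$ is then bounded below by a constant multiple of $1+\lambda_0/\lambda_t$, so no $t$-independent $c_1$ can work.

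You notice this, but you resolve it only by assumption:
\begin{itemize}
\item ``$\lambda_t$ bounded below'' contradicts the vanishing regularization you invoked one paragraph earlier.
\item The polynomial-decay case is the one you yourself concede breaks the bound.
\item Your assembly step then silently uses a constant $C_0$, as if the prefactor were bounded.
\end{itemize}
So the step you flag as the main obstacle is never proved. Relatedly, the hypothesis $\Lambda_{\min}(t)\ge c\sqrt t$ is not there merely to make $V_t$ nonsingular.

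\textbf{The missing idea} is that this hypothesis is exactly what permits the choice $\lambda(t)=c\sqrt t$, $\lambda_0=c\sqrt{t_0}$, which is the choice the paper uses. For $t\ge t_0$ this gives:
\begin{itemize}
\item $\lambda_0/\lambda(t)=\sqrt{t_0/t}\le 1$, so the prefactor is at most $2$ and $d\log(1+\lambda_0/\lambda(t))\le d\log 2$;
\item $d\log(\lambda(t)/\lambda_0)=\tfrac d2\log(t/t_0)$;
\item from $\det V_t\le(\mathrm{tr}(V_t)/d)^d\le (tL^2/d)^d$, the bound $\log(\det V_t/\lambda(t)^d)\le d\log(L^2/(cd))+\tfrac d2\log t$.
\end{itemize}
Hence $\Psi_t(\delta)\le C_1\bigl(\log(1/\delta)+\tfrac d2\log t+1\bigr)$, with no condition on $\lambda_t$ beyond boundedness.

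Your reduction via $(u+v)^2\le 2u^2+2v^2$ and $\lambda_t\le\bar\lambda$ matches the paper's. With the bound above it finishes the proof, e.g.\ with $c_2=e$ and $\alpha=d$.

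Your parenthetical guess that $\lambda(t)$ is a design quantity distinct from $\lambda_t$ was the right instinct. But it is the growing $c\sqrt t$, not a fixed constant. In either case $\lambda_0/\lambda(t)\le 1$ holds automatically because $\lambda(\cdot)$ is nondecreasing, so no extra growth assumption is needed.
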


We combine the sampling, stopping, and decision rules into the following adaptive simulation procedure, termed LLM-PO. Starting from an initial set of pairwise comparisons, the procedure repeatedly updates the problem parameter using the accumulated preference data, computes the empirically optimal sampling allocation based on the current estimate, and then collects a new comparison according to this adaptive rule. In this way, the procedure gradually concentrates simulation effort on the most informative policy pairs, while maintaining sufficient exploration to ensure reliable parameter estimation and correct policy identification.

\begin{experiment}{(LLM-PO)}
\label{experiment}
\item \textbf{Initialization:} Fix the risk level $\delta\in(0,1/2)$ and the initial sample size $n_0$. For each pair $(i,j)\in\mathcal{A}_0$, collect $n_0$ samples, set $N_{ij}(t)=n_0$ for all $(i,j)\in\mathcal{A}_0$, and initialize $t \gets n_0|\mathcal{A}_0|$.

\item \textbf{Parameter estimation:} Based on the data collected up to time $t$, estimate the problem parameter $\hat{\mu}(t)$ or $\hat{\theta}_t$, and determine the empirical best policy $i^\star(\hat{\mu}(t))$.

\item \textbf{Adaptive pair selection:}
Obtain the empirical optimal sampling proportion $\omega^\star(\hat{\mu}(t))$ by applying Corollary~\ref{corollary: optimal ratio} in the unstructured policy space or by solving \eqref{eq: empirical opt} in the structured policy space. Then, select the next policy pair $(i_{t+1},j_{t+1})$ according to the sampling rule in \eqref{eq: sampling rule}.

\item \textbf{Data collection:} Conduct the simulation experiment for the selected policy pair $(i_{t+1},j_{t+1})$ and observe the binary comparison outcome $D_{t+1}$.

\item \textbf{Stopping:} Terminate the procedure if the stopping rule in \eqref{eq: unstructed_stop} for the unstructured policy space, or that in \eqref{eq: stop} for the structured policy space, is satisfied. Otherwise, update $t \gets t+1$ and return to Step 2.
\end{experiment}

\subsection{Optimality Analysis}
In this subsection, we analyze the optimality of the LLM-PO experiment design. Lemma~\ref{lemma: unstructured-PAC-guarantee} establishes the $\delta$-PAC guarantee in the unstructured policy space. Its proof relies on a self-normalized deviation inequality for one-dimensional exponential families from \citet{magureanu2014lipschitz}.

\begin{lemma}
\label{lemma: unstructured-PAC-guarantee}
For any \(\delta\in(0,1)\), any adaptive simulation experiment equipped with the stopping rule \eqref{eq: unstructed_stop} in the unstructured policy space is \(\delta\)-PAC. That is,
\[
\mathbb{P}\left(\tau<\infty,\ \hat{i}_{\tau}\neq i^\star(\mu)\right)\le \delta.
\]
\end{lemma}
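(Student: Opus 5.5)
The argument follows the GLR stopping analysis of \citet{garivier2016optimal}. It reduces the error event to a deviation of the empirical pairwise means from the true instance, measured in the self-normalized KL statistic, and then controls that deviation with the deviation inequality of \citet{magureanu2014lipschitz}.

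\textbf{Step 1: the error event forces a large KL deviation.} Suppose $\tau<\infty$ and $\hat{i}_\tau\neq i^\star(\mu)$. The stopping condition gives $Z(\tau)>\rho(\delta,\tau)$, so in particular $Z(\tau)<\infty$. Hence $\hat{\mu}(\tau)$ has a well-defined empirical optimal policy $i^\star(\hat{\mu}(\tau))=\hat{i}_\tau$; otherwise the alternative set is defined relative to all of $\mathcal{H}$ and the infimum degenerates. One can also simply adopt the convention that $Z(t)=0$ whenever $\hat{\mu}(t)\notin\mathcal{H}$. Since $i^\star(\mu)\neq \hat{i}_\tau$, the true instance satisfies $\mu\in\mathrm{Alt}(\hat{\mu}(\tau))$. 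Taking $\lambda=\mu$ in the infimum defining $Z(\tau)$ gives
\[
\sum_{(i,j)\in\mathcal{S}} N_{ij}(\tau)\, d\bigl(\hat{\mu}(i,j;\tau),\mu(i,j)\bigr)\;\ge\; Z(\tau)\;>\;\rho(\delta,\tau).
\]
Therefore
\[
\{\tau<\infty,\ \hat{i}_\tau\neq i^\star(\mu)\}\subseteq \bigl\{\exists t\ge1:\ \textstyle\sum_{(i,j)\in\mathcal{S}} N_{ij}(t)\, d(\hat{\mu}(i,j;t),\mu(i,j))>\rho(\delta,t)\bigr\}.
\]
This step uses nothing about the sampling rule, which is why the lemma holds for \emph{any} experiment with this stopping rule.

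\textbf{Step 2: a union bound over time and a self-normalized deviation bound.} By a union bound over $t$,
\[
\mathbb{P}(\text{error})\le\sum_{t\ge1}\mathbb{P}\Bigl(\textstyle\sum_{(i,j)} N_{ij}(t)\,d(\hat{\mu}(i,j;t),\mu(i,j))\ge\rho(\delta,t)\Bigr).
\]
Each summand is controlled by the inequality of \citet{magureanu2014lipschitz} for one-dimensional exponential families with adaptively chosen sample counts. For $|\mathcal{S}|$ Bernoulli streams, with $\delta'>0$ the threshold in the inequality, it states
\[
\mathbb{P}\Bigl(\textstyle\sum_{(i,j)} N_{ij}(t)\, d(\hat{\mu}(i,j;t),\mu(i,j))\ge\delta'\Bigr)\le e^{|\mathcal{S}|+1}\Bigl(\tfrac{\lceil\delta'\log t\rceil\,\delta'}{|\mathcal{S}|}\Bigr)^{|\mathcal{S}|}e^{-\delta'}.
\]
The inequality applies because each pair's outcomes are i.i.d.\ Bernoulli$(\mu(i,j))$: tasks are i.i.d.\ and the judge's draws are fresh. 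Conditional on the adaptively chosen pair, the outcome therefore has the correct law, and the counts $N_{ij}(t)$ are predictable.

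\textbf{Step 3: summing the bounds.} Substitute $\delta'=\rho(\delta,t)=\log\bigl(Ct^2\log(1/\delta)^{2|\mathcal{S}|+1}/\delta\bigr)$. The factor $e^{-\rho}$ contributes $\delta/(Ct^2\log(1/\delta)^{2|\mathcal{S}|+1})$. The polynomial prefactor is of order $(\rho^2\log t)^{|\mathcal{S}|}$, and $\rho\lesssim\log t+\log(1/\delta)+\log C$. This is the step I expect to be the main obstacle: the factors $\rho^{2|\mathcal{S}|}(\log t)^{|\mathcal{S}|}$ must be absorbed, uniformly in $\delta$, into the spare $\log(1/\delta)^{2|\mathcal{S}|+1}$ in the denominator and into a summable $t^{-2}\,\mathrm{polylog}(t)$. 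To handle this, bound $\rho^{2|\mathcal{S}|}\le c\,(\log t)^{2|\mathcal{S}|}\log(1/\delta)^{2|\mathcal{S}|}$ for $\delta<1/2$ and $t\ge2$, up to constants depending on $C$ and $|\mathcal{S}|$. Then $\sum_t (\log t)^{3|\mathcal{S}|+1}/t^2<\infty$, so the total is at most $\delta\cdot c'/C$. Because $c'$ depends only on $|\mathcal{S}|$ and on $\log C$, at most polynomially in $\log C$, choosing $C$ large enough makes $c'/C\le1$ and yields $\mathbb{P}(\tau<\infty,\hat{i}_\tau\neq i^\star(\mu))\le\delta$. The case $\delta\ge1/2$ follows from the same bound after adjusting $C$, or from monotonicity of $\rho$ in $\delta$ on the relevant range.
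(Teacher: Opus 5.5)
Your proposal follows the paper's proof essentially step for step: the error event forces $\mu\in\mathrm{Alt}(\hat\mu(\tau))$, you evaluate the infimum at $\lambda=\mu$, take a union bound over $t$, apply Theorem~2 of \citet{magureanu2014lipschitz}, and take $C$ large; your Step~3 also usefully checks, for $\delta<1/2$, that such a $C$ actually exists, which the paper merely postulates. Drop the closing claim about $\delta\ge 1/2$, though, because $\rho(\delta,t)$ is decreasing in $\delta$ and tends to $-\infty$ as $\delta\to 1$ (since $\log(1/\delta)^{2|\mathcal{S}|+1}\to 0$), so neither monotonicity nor a single enlarged $C$ covers all of $(0,1)$, and an experiment may then stop immediately with an arbitrary recommendation. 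The paper's proof silently shares this limitation, so the lemma should be read for $\delta$ bounded away from $1$.
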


Theorem~\ref{thm: unstructured almost surely opt} establishes an almost-sure upper bound on the stopping time $\tau$, showing that it matches the fundamental data requirements $\mathcal{T}^\star(\mu)$ up to a constant factor and is therefore asymptotically optimal. This result is proved under the assumption that the optimal sampling proportion is unique. This condition holds, for example, when all pairwise means $\mu(i,j)$, $(i,j)\in \mathcal{S}$, are distinct, so that each suboptimal policy has a unique most informative opponent. When the optimal allocation is not unique, the same approach used for the structured policy space can be applied. We can introduce an $\ell_2$-regularized optimization problem to select a unique empirical optimizer. The choice of the regularization decay, the convergence of the realized sampling proportions, and the corresponding optimality analysis are similar to those in the structured setting (as in Theorem~\ref{prop: ratio_convergence} and Theorem~\ref{thm: almost_sure_optimality}).

\begin{theorem}
\label{thm: unstructured almost surely opt}
Assume that the optimal sampling proportion $\omega^\star(\mu)$ is unique. Then, under the sampling rule \eqref{eq: sampling rule} and the stopping rule \eqref{eq: unstructed_stop}, the stopping time \(\tau\) of the adaptive simulation experiment in the unstructured policy space satisfies
\[
\mathbb{P}\left(
\limsup_{\delta\to 0}
\frac{\tau}{\log(1/\delta)}
\lesssim
\mathcal T^\star(\mu)
\right)=1,
\]
where \(\lesssim\) denotes inequality up to a multiplicative constant independent of \(\delta\) and \(t\).
\end{theorem}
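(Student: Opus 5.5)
The plan follows the Track-and-Stop template of \citet{garivier2016optimal}: first show that the estimates and the empirical sampling proportions converge almost surely, then show that on this event the statistic $Z(t)$ grows linearly in $t$ at a rate tied to $\mathcal T^\star(\mu)^{-1}$, while the threshold $\rho(\delta,t)$ grows only logarithmically.

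\textbf{Consistency.} The forced-exploration branch of \eqref{eq: sampling rule}, with $\mathcal A_0=\mathcal S$, gives $N_{ij}(t)\ge c'\sqrt t-1$ for every pair $(i,j)\in\mathcal S$. Each pair is therefore sampled infinitely often. By the strong law of large numbers applied pair by pair, $\hat\mu(t)\to\mu$ almost surely, which forces $i^\star(\hat\mu(t))=i^\star(\mu)$ for all large $t$.

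\textbf{Convergence of the empirical optimal allocation.} I would show that $\omega^\star(\hat\mu(t))\to\omega^\star(\mu)$. Since $\omega^\star(\mu)$ is unique, no pair outside $\{(i,\tilde j(i))\}$ is a maximizer of $d_i^\star$. The sets $\mathcal I_1(i)$ and $\mathcal I_2(i)$ are determined by strict inequalities, so they are eventually correctly identified. The map $\mu\mapsto d_i^\star$ is continuous, and the maximizer $\tilde j(i)$ is eventually constant. Corollary~\ref{corollary: optimal ratio} then gives $\omega^\star(\hat\mu(t))\to\omega^\star(\mu)$. The tracking lemma of \citet{garivier2016optimal}, which combines the C-tracking/D-tracking argument with the $\sqrt t$ forced exploration, yields $N_{ij}(t)/t\to\omega^\star_{ij}(\mu)$ almost surely.

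\textbf{Linear growth of $Z(t)$.} The map $(\omega,\mu')\mapsto \inf_{\lambda\in\mathrm{Alt}(\mu')}\sum\omega_{ij}d(\mu'(i,j),\lambda(i,j))$ is continuous at $(\omega^\star(\mu),\mu)$. This is the step I expect to be the main obstacle. $\mathrm{Alt}$ is not closed, and $d$ is unbounded near the boundary of $[0,1]$.

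My approach would be to compute the infimum explicitly, so that continuity can be read off the formula. For the unstructured space, the most confusing alternative makes some suboptimal $i$ the winner. The key observation is that $\min_{j\neq i}\lambda(i,j)>1/2$ requires every pair in which $i$ currently loses to be flipped past $1/2$. Also, $\mu$ stays bounded away from $\{0,1\}$, since the forced exploration eventually keeps the estimates in a compact subinterval.

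This yields a lower bound of the form
\[
Z(t)\ge t\min_{i\neq i^\star}\sum_{j\in\mathcal I_1(i)\cup\mathcal I_2(i)}\frac{N_{\cdot}(t)}{t}\,d\bigl(\hat\mu,\tfrac12\bigr),
\]
possibly with a constant-factor loss relative to the exact infimum. This loss is why the statement uses $\lesssim$. Its right-hand side is continuous in $(N/t,\hat\mu)$. The right-hand side converges to a quantity of order $\mathcal T^\star(\mu)^{-1}$ at $\omega^\star$, so for any $\epsilon>0$ there is a finite random time $T_\epsilon$ such that $Z(t)\ge t(\mathcal T^\star(\mu)^{-1}/C'-\epsilon)$ for all $t\ge T_\epsilon$.

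\textbf{Conclusion.} By definition, $\tau\le \max\{T_\epsilon,\inf\{t: t(\mathcal T^\star(\mu)^{-1}/C'-\epsilon)>\rho(\delta,t)\}\}$. Since $\rho(\delta,t)=\log(1/\delta)+2\log t+O(\log\log(1/\delta))$, the standard inversion (Lemma~18 of \citet{garivier2016optimal}) gives
\[
\limsup_{\delta\to0}\frac{\tau}{\log(1/\delta)}\le \frac{1}{\mathcal T^\star(\mu)^{-1}/C'-\epsilon}.
\]
Letting $\epsilon\downarrow0$ along a countable sequence, which preserves the almost-sure statement, gives the claim.
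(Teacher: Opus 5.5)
Your proposal is correct and follows essentially the same route as the paper: forced exploration and the SLLN for consistency, convergence of the plug-in allocation (the paper invokes Berge's theorem where you read it off the closed form of Corollary~\ref{corollary: optimal ratio}), the tracking lemma, continuity of $\Gamma$ via the explicit representation \eqref{eq: simple opt}, and a logarithmic inversion lemma. One minor point: the infimum in \eqref{eq: simple opt} is computed exactly, so $Z(t)/t=\Gamma(\hat\mu(t),N(t)/t)\to\mathcal T^\star(\mu)^{-1}$ with no constant-factor loss, and your caveat about keeping $\hat\mu$ away from $\{0,1\}$ is unnecessary because only $d(\cdot,\tfrac12)\le\log 2$ enters the formula.
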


We next establish several analytic properties of the optimization problem in \eqref{eq: empirical opt}. For a given parameter $\theta$, define
\begin{equation*}
\Psi(\theta,\omega)
:=
\min_{i\neq i^\star(\theta)}
\frac{1}{2}\,
\frac{\big(\theta^\top z_{i\,i^\star(\theta)}\big)^2}
{\big\|z_{i\,i^\star(\theta)}\big\|^2_{H(\theta,\omega)^{-1}}},
\end{equation*}
and let $
C^\star(\theta):=\argmax_{\omega\in\Omega}\Psi(\theta,\omega)
$ 
denote the corresponding set of optimal solutions. Lemma~\ref{lemma: opt property} shows that $\Psi(\theta,\omega)$ is locally continuous and that $C^\star(\theta)$ is nonempty, compact, and convex.

\begin{lemma}
\label{lemma: opt property}
Assume that $H(\theta,\omega)\succ 0$ for all $(\theta,\omega)$ under consideration.
Then $\Psi(\theta,\omega)$ is continuous in $(\theta,\omega)$ on a neighborhood of $\theta_\star\times\Omega$. Moreover, there exist constants $r_0>0$ and $L_{\Psi}>0$ such that 
$$
\sup_{\omega\in\Omega}
\big|
\Psi(\theta,\omega)-\Psi(\theta_\star,\omega)
\big|
\le
L_{\Psi}\|\theta-\theta_\star\|_2,
\qquad
\forall \theta:\ \|\theta-\theta_\star\|_2\le r_0.
$$
In addition, for each fixed $\theta$, the optimal solution set $
C^\star(\theta)
$
is non-empty, compact, and convex.
\end{lemma}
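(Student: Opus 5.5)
The plan is to write $\Psi(\theta,\omega)=\min_{i\neq i^\star(\theta)} f_i(\theta,\omega)$, where
\[
f_i(\theta,\omega):=\frac{1}{2}\,\frac{\big(\theta^\top z_{i\,i^\star(\theta)}\big)^2}{z_{i\,i^\star(\theta)}^\top H(\theta,\omega)^{-1} z_{i\,i^\star(\theta)}},
\]
and to treat continuity, the Lipschitz bound, and the structure of $C^\star(\theta)$ separately.

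\textbf{Step 1: freeze the argmax.} Since $\theta_\star$ is the true parameter and $\mu\in\mathcal H$, the best policy is unique. Hence $\min_{j\neq i^\star}\theta_\star^\top z_{i^\star j}=:\Delta>0$. By (A3) each map $\theta\mapsto \theta^\top z_{i^\star j}$ is $L$-Lipschitz. For $\|\theta-\theta_\star\|_2\le r_1:=\Delta/(2L)$ we therefore get $i^\star(\theta)=i^\star(\theta_\star)$. On this ball the index set of the minimum and the vectors $z_{i\,i^\star}$ are fixed, so the discontinuity that could come from a switch of $i^\star(\theta)$ disappears.

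\textbf{Step 2: uniform control of $H$.} The entries of $H(\theta,\omega)=\sum\omega_{ij}\sigma'(\theta^\top z_{ij})z_{ij}z_{ij}^\top$ are smooth in $\theta$ and linear in $\omega$. The assumption $H(\theta_\star,\omega)\succ0$ on the compact set $\Omega$ gives $\inf_{\omega}\Lambda_{\min}(H(\theta_\star,\omega))=:h_0>0$, because the minimum eigenvalue is continuous. This is the step I expect to need care: positive definiteness must hold \emph{uniformly}, and I will use compactness of $\Omega$ to get it. Next, $|\sigma''|\le 1/4$ implies $\|H(\theta,\omega)-H(\theta_\star,\omega)\|\le \tfrac14 L^3\|\theta-\theta_\star\|_2$ uniformly in $\omega$. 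Choosing $r_0\le r_1$ small enough that $\tfrac14L^3r_0\le h_0/2$ yields $H(\theta,\omega)\succeq (h_0/2)I$ on the ball. Consequently $H^{-1}$ is jointly continuous there and bounded by $2/h_0$. The resolvent identity $A^{-1}-B^{-1}=A^{-1}(B-A)B^{-1}$ then gives $\|H(\theta,\omega)^{-1}-H(\theta_\star,\omega)^{-1}\|\le (4/h_0^2)\tfrac14L^3\|\theta-\theta_\star\|$.

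\textbf{Step 3: Lipschitz bound.} The denominator $z^\top H^{-1}z$ is bounded below by $\|z\|^2/\Lambda_{\max}(H)\ge \|z\|^2/(L^2/4)>0$, since $\sigma'\le 1/4$ and $\sum\omega=1$; here $z_{i\,i^\star}\neq0$ because $\theta_\star^\top z_{i\,i^\star}\neq0$. It is bounded above by $2L^2/h_0$. The numerator is Lipschitz in $\theta$ with constant $2BL^2$, using (A2) and the fact that the ball lies in a bounded set. A quotient of Lipschitz functions with denominators bounded away from zero is Lipschitz, with a constant independent of $\omega$. Since a minimum of finitely many functions that are Lipschitz uniformly in $\omega$ is again Lipschitz, $\sup_\omega|\Psi(\theta,\omega)-\Psi(\theta_\star,\omega)|\le L_\Psi\|\theta-\theta_\star\|_2$. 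Joint continuity on the ball times $\Omega$ follows from the same formulas.

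\textbf{Step 4: structure of $C^\star(\theta)$.} For fixed $\theta$, the map $\omega\mapsto\Psi(\theta,\omega)$ is continuous on the compact set $\Omega$. The map $M\mapsto 1/(z^\top M^{-1}z)$ is continuous on $M\succ0$, so the maximum is attained and $C^\star(\theta)$ is nonempty. It is the preimage of a point under a continuous map, so it is closed in $\Omega$ and hence compact. For convexity, use the identity $1/(z^\top M^{-1}z)=\min_{u:\,u^\top z=1}u^\top M u$. This is an infimum of functions linear in $M$, so it is concave in $M$. Since $H$ is affine in $\omega$, each $f_i(\theta,\cdot)$ is concave, being a nonnegative constant $(\theta^\top z)^2/2$ times a concave function. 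Their minimum $\Psi(\theta,\cdot)$ is therefore concave, and the set of maximizers of a concave function over a convex set is convex.
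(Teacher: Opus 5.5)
Your proposal is correct and follows essentially the same route as the paper: you freeze $i^\star(\theta)$ on a small ball, get uniform eigenvalue bounds on $H$, combine a Lipschitz bound on $H$ with the resolvent identity and a quotient/finite-minimum argument, and obtain nonemptiness, compactness and convexity of $C^\star(\theta)$ from continuity on the compact simplex plus concavity in $\omega$. The differences are minor: you derive uniform positive definiteness near $\theta_\star$ by perturbing from $\theta_\star$ rather than from compactness of $\Theta_0\times\Omega$, and your identity $1/(z^\top M^{-1}z)=\min_{u^\top z=1}u^\top M u$ spells out the paper's brief remark that $\Psi(\theta,\cdot)$ is an infimum of linear functions of $\omega$.
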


Theorem~\ref{prop: ratio_convergence} shows that, with appropriate decay rates for the regularization parameters $\lambda_t$ and $\gamma_t$, the realized sampling proportions $(N_{ij}(t)/t)_{(i,j)\in\mathcal{S}}$ converge to a unique optimal allocation $\omega^{\dagger}\in C^\star(\theta_\star)$, namely, the minimum-$\ell_2$-norm element of the optimal solution set $C^\star(\theta_\star)$.
\begin{theorem}
\label{prop: ratio_convergence}
Under the policy pair sampling rule defined in \eqref{eq: sampling rule}, and choosing the regularization parameters $\lambda_t=t^{-1/2}$ and $\gamma_t=t^{-1/8}$,  it holds that
\[
\mathbb{P}\left(
\forall (i,j)\in\mathcal S,\ 
\lim_{t\to\infty}\frac{N_{ij}(t)}{t}
=
\omega^\dagger_{ij}
\right)=1,
\]
where $\omega^\dagger
:=
\argmin_{\omega\in C^\star(\theta_\star)} \|\omega\|^2_2 $. 
\end{theorem}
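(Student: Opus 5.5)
Three steps: make $\hat\theta_t$ converge with a rate, show the regularized empirical allocation $\omega^\star(\hat\theta_t)$ converges to $\omega^\dagger$, and transfer this to the realized proportions through the tracking rule \eqref{eq: sampling rule}.

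\textbf{Step 1 (forced exploration).} Every pair in $\mathcal{A}_0$ satisfies $N_{ij}(t)\ge c'\sqrt{t}-1$ eventually: whenever $U_t\neq\emptyset$ the rule samples the least-sampled pair in $U_t$, so no pair falls far behind $c'\sqrt t$. Since $\{z_{ij}:(i,j)\in\mathcal{A}_0\}$ spans $\mathbb{R}^d$,
\[
V_t\succeq \Bigl(\min_{(i,j)\in\mathcal{A}_0}N_{ij}(t)\Bigr)\sum_{(i,j)\in\mathcal{A}_0} z_{ij}z_{ij}^\top ,
\]
which gives $\Lambda_{\min}(t)\ge c\sqrt t$ a.s. for large $t$. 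With $\lambda_t=t^{-1/2}\to0$, Lemma~\ref{lemma: param_consistence} then yields $\hat\theta_t\to\theta_\star$ a.s. and $\|\hat\theta_t-\theta_\star\|_2=o(t^{-\beta})$ for every $\beta<1/4$. Because $i^\star$ is unique and $\theta_\star^\top z_{i\,i^\star}\neq0$, we also get $i^\star(\hat\theta_t)=i^\star(\theta_\star)$ eventually.

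\textbf{Step 2 (regularized argmax converges to the min-norm optimum).} This is the main obstacle. Write $\omega_t:=\omega^\star(\hat\theta_t)$, the unique maximizer of $\Psi(\hat\theta_t,\omega)-\tfrac{\gamma_t}{2}\|\omega\|_2^2$. Let $\epsilon_t:=L_\Psi\|\hat\theta_t-\theta_\star\|_2$ from Lemma~\ref{lemma: opt property}. By Step 1, $\epsilon_t=o(t^{-\beta})$ with $\beta$ close to $1/4$, while $\gamma_t=t^{-1/8}$. So $\epsilon_t/\gamma_t\to0$ a.s., and this is exactly where the exponent choice enters.

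Compare $\omega_t$ with $\omega^\dagger$. Optimality of $\omega_t$ together with the uniform perturbation bound gives
\[
\Psi(\theta_\star,\omega_t)-\tfrac{\gamma_t}{2}\|\omega_t\|^2\ \ge\ \Psi(\theta_\star,\omega^\dagger)-\tfrac{\gamma_t}{2}\|\omega^\dagger\|^2-2\epsilon_t .
\]
Since $\Psi(\theta_\star,\omega_t)\le\Psi(\theta_\star,\omega^\dagger)$, this yields
\[
\|\omega_t\|^2\le\|\omega^\dagger\|^2+4\epsilon_t/\gamma_t
\qquad\text{and}\qquad
\Psi(\theta_\star,\omega^\dagger)-\Psi(\theta_\star,\omega_t)\le 2\epsilon_t+\gamma_t .
\]
Take any limit point $\bar\omega$ of $(\omega_t)$, which exists by compactness of $\Omega$. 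Continuity of $\Psi$ gives $\bar\omega\in C^\star(\theta_\star)$, and the norm bound gives $\|\bar\omega\|\le\|\omega^\dagger\|$. Since $C^\star(\theta_\star)$ is convex and compact (Lemma~\ref{lemma: opt property}), its minimum-norm element is unique, so $\bar\omega=\omega^\dagger$. Hence $\omega_t\to\omega^\dagger$ a.s. The delicate point is the case $i^\star(\hat\theta_t)\neq i^\star(\theta_\star)$, which Step 1 shows happens only finitely often. One also needs $H(\theta,\omega)\succ0$ near $\theta_\star\times\Omega$ so that Lemma~\ref{lemma: opt property} applies; if this fails on the boundary of $\Omega$, I would handle it by restricting to $\omega$ with a uniform positive floor on $\mathcal{A}_0$, which is harmless asymptotically.

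\textbf{Step 3 (tracking).} Apply the standard C-tracking argument of \citet{garivier2016optimal}, their Lemma~7 adapted to the $\sqrt t$ forced-exploration rule. If $\omega_t\to\omega^\dagger$, then for every $\varepsilon>0$ there is $t_\varepsilon$ with $\|\omega_t-\omega^\dagger\|_\infty\le\varepsilon$ for $t\ge t_\varepsilon$. The rule samples $\arg\max\,[t\omega_{t,ij}-N_{ij}(t)]$ except on the $O(\sqrt t)$ exploration rounds, which are negligible relative to $t$. Therefore
\[
\max_{ij}\bigl|N_{ij}(t)-t\,\omega^\dagger_{ij}\bigr|\le 3|\mathcal S|\,\varepsilon t+O(\sqrt t)+C(t_\varepsilon),
\]
so $\limsup_t\max_{ij}|N_{ij}(t)/t-\omega^\dagger_{ij}|\le 3|\mathcal S|\varepsilon$. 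Letting $\varepsilon\downarrow0$ along a countable sequence on the probability-one event from Steps 1–2 gives the claim.
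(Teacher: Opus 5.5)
Your proposal is correct and matches the paper's proof essentially step for step. Forced exploration on $\mathcal A_0$ gives $\Lambda_{\min}(t)\ge c\sqrt t$, and Lemma~\ref{lemma: param_consistence} with $\beta\in(1/8,1/4)$ together with the Lipschitz bound of Lemma~\ref{lemma: opt property} makes the perturbation $\epsilon_t$ of order $o(\gamma_t)$. The same inequality $\|\omega_t\|_2^2\le\|\omega^\dagger\|_2^2+4\epsilon_t/\gamma_t$, combined with uniqueness of the minimum-norm element of $C^\star(\theta_\star)$, forces $\omega_t\to\omega^\dagger$, and the tracking lemma transfers this to $N_{ij}(t)/t$. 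The only slip is a label: \eqref{eq: sampling rule} is the D-tracking rule of \citet{garivier2016optimal} (the paper invokes their Lemma 17), not C-tracking; since the $3|\mathcal S|\varepsilon$ bound you state is the D-tracking guarantee, nothing in your argument changes.
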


The main technical novelty in the proof of Theorem \ref{prop: ratio_convergence} lies in balancing the convergence of the parameter estimator with the decay of the regularization. Since the oracle allocation set $C^\star(\theta_\star)$ may contain multiple optimal solutions, we introduce a vanishing quadratic regularizer to select a unique representative, namely the minimum-norm solution $\omega^{\dagger}$. Our analysis addresses this by choosing the decay rate of $\gamma_t$ so that the estimation-induced perturbation
\begin{equation*}
    \delta_t
:=
\sup_{\omega\in\Omega}
\big|
\Psi(\hat\theta_t,\omega)-\Psi(\theta_\star,\omega)
\big|
\end{equation*}
satisfies $\delta_t = o(\gamma_t)$. This scale separation ensures that the statistical error becomes asymptotically negligible compared with the regularization, allowing the regularizer to consistently act as a tie-breaking mechanism. As a result, the regularized optimizer $\omega^\star(\hat{\theta}_t)$ is shown to converge to the unique minimum-norm element $\omega^{\dagger}$ of  $C^\star(\theta_\star)$. This interplay between estimation accuracy and regularization decay is the central ingredient of the proof.

Lemma \ref{lemma: delta-guarantee} establishes the $\delta$-PAC guarantee of the stopping rule \eqref{eq: stop} over the structured policy space. 
\begin{lemma}
\label{lemma: delta-guarantee}
For any \(\delta\in(0,1/2)\), any adaptive simulation experiment equipped with the stopping rule \eqref{eq: stop} in the structured policy space is \(\delta\)-PAC. That is,
\[
\mathbb{P}\left(\tau<\infty,\ \hat{i}_{\tau}\neq i^\star(\mu)\right)\le \delta.
\]
\end{lemma}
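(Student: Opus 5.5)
We reduce the error event to a deviation of the regularized MLE from the truth in the self-normalized norm $\|g_t(\hat\zeta_t)-g_t(\theta_\star)\|_{H_t(\theta_\star)^{-1}}$, and then control that deviation uniformly in time with a self-normalized martingale bound. \textbf{Step 1 (good event).} Define
\[
\mathcal E:=\Bigl\{\forall t\ge t_0\ \text{with}\ \Lambda_{\min}(t)\ge c\sqrt t:\ \bigl\|g_t(\hat\zeta_t)-g_t(\theta_\star)\bigr\|_{H_t(\theta_\star)^{-1}}\le \sqrt{m_0^{-1}\Psi_t(\delta)}+\sqrt{\lambda_t}B\Bigr\}.
\]
By the first-order condition of \eqref{eq:reg_mle_def}, $g_t(\hat\zeta_t)=\sum_{s<t}D_sz_s$, so
\[
g_t(\hat\zeta_t)-g_t(\theta_\star)=\sum_{s<t}\eta_s z_s-\lambda_t\theta_\star,\qquad \eta_s:=D_s-\sigma(\theta_\star^\top z_s).
\]
The $\eta_s$ are conditionally centered and $1/2$-sub-Gaussian given $\mathcal F_{s-1}$ by (A1). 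Since $H_t(\theta_\star)\succeq m_0V_t+\lambda_tI_d$ by (A2)--(A3), a time-uniform self-normalized inequality (Abbasi-Yadkori type, with the peeling over $\lambda(t)$ against a reference $\lambda_0$ that produces the $(1+\lambda_0/\lambda(t))$ factors in $\Psi_t$) gives $\|\sum\eta_sz_s\|_{H_t^{-1}}\le\sqrt{m_0^{-1}\Psi_t(\delta)}$ for all $t$ with probability at least $1-\delta$. Separately, $\|\lambda_t\theta_\star\|_{H_t^{-1}}\le\sqrt{\lambda_t}B$. Hence $\mathbb P(\mathcal E)\ge1-\delta$. This mirrors the bound already used in Lemma~\ref{lemma: param_concentration}, so we can largely invoke it.

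\textbf{Step 2 (from $g$-error to parameter error in the $H_t$ norm).} On $\mathcal E$, the projection \eqref{eq: theta_estimator} together with $\theta_\star\in\Theta$ gives
\[
\|g_t(\hat\theta_t)-g_t(\theta_\star)\|_{H_t(\hat\theta_t)^{-1}}\le 2\,\|g_t(\hat\zeta_t)-g_t(\theta_\star)\|_{\cdot}
\]
by the triangle inequality. The mean value form $g_t(\theta_1)-g_t(\theta_2)=G_t(\theta_1,\theta_2)(\theta_1-\theta_2)$ holds with $G_t=\sum\alpha_sz_sz_s^\top+\lambda_tI$, where $\alpha_s$ is an averaged $\sigma'$. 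The self-concordance-type bound $\sigma'(u)/\sigma'(v)\ge (1+|u-v|)^{-1}$, together with $|u-v|\le 2LB$, yields $G_t\succeq(1+2LB)^{-1}H_t(\theta)$ for $\theta\in\{\hat\theta_t,\theta_\star\}$. We conclude
\[
\|\hat\theta_t-\theta_\star\|_{H_t(\hat\theta_t)}\le 2(1+2LB)\Bigl(\sqrt{m_0^{-1}\Psi_t(\delta)}+\sqrt{\lambda_t}B\Bigr)=\sqrt{\beta(\delta,t)}.
\]
Getting the norm right ($H_t(\hat\theta_t)$ rather than $H_t(\theta_\star)$) is the delicate step, and is what fixes the constant $2(1+2LB)$. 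I expect this to be the main obstacle, since the mismatch between the projection norm and the target norm must be absorbed by self-concordance in both directions.

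\textbf{Step 3 (stopping implies correct identification).} Suppose $\tau<\infty$ and $\hat i:=i^\star(\hat\theta_\tau)\ne i^\star:=i^\star(\mu)$. Take $i=i^\star$ in the minimum defining $Z(\tau)$ and write $z=z_{i^\star\hat i}$. Then $\hat\theta_\tau^\top z<0<\theta_\star^\top z$ (uniqueness of the optimum under $\mathcal H$ and the Bradley--Terry ordering), so
\[
|\hat\theta_\tau^\top z|\le|(\hat\theta_\tau-\theta_\star)^\top z|\le\|z\|_{H_\tau(\hat\theta_\tau)^{-1}}\,\|\hat\theta_\tau-\theta_\star\|_{H_\tau(\hat\theta_\tau)}
\]
by Cauchy--Schwarz. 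Hence $Z(\tau)\le\tfrac12\beta(\delta,\tau)<\beta(\delta,\tau)$ on $\mathcal E$, contradicting the stopping condition \eqref{eq: stop}. So $\{\tau<\infty,\hat i_\tau\ne i^\star\}\subset\mathcal E^c$, which has probability at most $\delta$. We also check that the plug-in argmax-min rule agrees with $i^\star(\hat\theta_\tau)$, which follows because BT scores order $\hat\mu$. The condition $\Lambda_{\min}(t)\ge c\sqrt t$ built into \eqref{eq: stop} ensures Step 1 is only needed at times where the lemma hypotheses hold.
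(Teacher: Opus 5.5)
Your proposal is correct and has the same architecture as the paper's proof. Both use a good event from Lemma~\ref{lemma: bound g}, then the bound $\|\hat\theta_t-\theta_\star\|_{H_t(\hat\theta_t)}\le\sqrt{\beta(\delta,t)}$, then Cauchy--Schwarz against the competitor $i^\star(\mu)$ inside $Z(\tau)$. Your Step~2, however, uses a different decomposition.

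The paper splits in parameter space, $\hat\theta_t-\theta_\star=(\hat\theta_t-\hat\zeta_t)+(\hat\zeta_t-\theta_\star)$, reusing the split from the proof of Lemma~\ref{lemma: param_concentration}. It then applies Lemma~\ref{lemma: tranformation} to the pairs $(\hat\theta_t,\hat\zeta_t)$ and $(\hat\zeta_t,\theta_\star)$. You split in score space, $g_t(\hat\theta_t)-g_t(\theta_\star)=[g_t(\hat\theta_t)-g_t(\hat\zeta_t)]+[g_t(\hat\zeta_t)-g_t(\theta_\star)]$, and apply the transformation lemma once, to $(\hat\theta_t,\theta_\star)$.

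Your route is the cleaner one. Both of your points lie in $\Theta$, which is what $G_t\succeq(1+2LB)^{-1}H_t$ actually needs, through $|z^\top(\theta_1-\theta_2)|\le 2LB$. The unprojected $\hat\zeta_t$ need not lie in $\Theta$. In addition, \eqref{eq:Hnorm_zeta_star_LB} pairs an $H_t(\hat\theta_t)$-norm with an $H_t(\theta_\star)^{-1}$-norm without justification. Restricting the good event to times with $\Lambda_{\min}(t)\ge c\sqrt t$ is also the right way to handle an arbitrary experiment. The reason is that \eqref{eq:base_ineq} holds for all $t$, and the eigenvalue condition is needed only at the current $t$.

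Two small repairs are needed.
\begin{itemize}
\item \textbf{The unspecified norm in Step~2.} The triangle inequality leaves the term $\|g_t(\hat\zeta_t)-g_t(\theta_\star)\|_{H_t(\hat\theta_t)^{-1}}$. Your event $\mathcal E$, stated in the $H_t(\theta_\star)^{-1}$-norm, does not literally bound it. The fix is not self-concordance; instead, state the event as $\|S_t\|_{V_t^{-1}}^2\le\Psi_t(\delta)$. The difference $g_t(\hat\zeta_t)-g_t(\theta_\star)=S_t-\lambda_t\theta_\star$ does not depend on $\theta$, and $H_t(\theta)\succeq m_0V_t+\lambda_tI_d$ for every $\theta\in\Theta$. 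Hence the bound $\sqrt{m_0^{-1}\Psi_t(\delta)}+\sqrt{\lambda_t}B$ holds in every such $H_t(\theta)^{-1}$-norm. After that, you only need $G_t(\hat\theta_t,\theta_\star)\succeq(1+2LB)^{-1}H_t(\hat\theta_t)$, used twice, to obtain the factor $(1+2LB)$.
\item \textbf{The self-concordance inequality.} The pointwise inequality $\sigma'(u)/\sigma'(v)\ge(1+|u-v|)^{-1}$ is false; take $v=0$ and $u$ large. What holds is $\sigma'(u)\ge\sigma'(v)e^{-|u-v|}$. Integrating this gives the averaged bound $\int_0^1\sigma'(v+s(u-v))\,ds\ge\sigma'(v)/(1+|u-v|)$, which is the statement that actually underlies Lemma~\ref{lemma: tranformation}.
\end{itemize}
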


Theorem~\ref{thm: almost_sure_optimality} establishes an almost-sure upper bound on the stopping time $\tau$, showing that in the structured policy space, it matches the surrogate complexity $\mathcal{U}^\star(\mu)$ up to a constant factor. As the experiment collects more pairwise comparison data, the procedure estimates the underlying preference parameter $\theta_{\star}$ more accurately and gradually concentrates the evaluation budget on the most informative policy comparisons. Consequently, the statistical evidence for policy optimization accumulates at an asymptotically linear rate, whereas the stopping threshold grows only logarithmically. This implies that the procedure eventually stops almost surely, with sample complexity on the order of $\mathcal{U}^\star(\mu)\log(1/\delta)$.
\begin{theorem}
\label{thm: almost_sure_optimality}
Under the sampling rule \eqref{eq: sampling rule} and the stopping rule \eqref{eq: stop}, the stopping time \(\tau\) of the adaptive simulation experiment in the structured policy space satisfies
\label{thm: almost surely opt}
\begin{equation*}
    \mathbb{P} \left(
\limsup_{\delta\rightarrow 0}
\frac{\tau}{\log(1/\delta)}
\lesssim
\mathcal U^\star(\mu)
\right)=1.
\end{equation*}
\end{theorem}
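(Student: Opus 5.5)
The argument is the standard Garivier--Kaufmann almost-sure route. We work on a single probability-one event and show that $Z(t)$ grows linearly in $t$ at rate $\mathcal U^\star(\mu)^{-1}$, while the threshold $\beta(\delta,t)$ grows only logarithmically.

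\emph{Step 1: the good event.} The forced-exploration branch of \eqref{eq: sampling rule} gives $N_{ij}(t)\ge c'\sqrt{t}-1$ for every $(i,j)\in\mathcal A_0$. Since the features $\{z_{ij}\}_{(i,j)\in\mathcal A_0}$ span $\mathbb R^d$, this yields $\Lambda_{\min}(t)\ge c\sqrt t$ for all $t\ge t_0$, for a suitable $c>0$. This is exactly the hypothesis needed for Lemma~\ref{lemma: param_consistence} and Lemma~\ref{eq: beta_order}, and it makes the first condition in \eqref{eq: stop} hold eventually. Lemma~\ref{lemma: param_consistence} then gives $\hat\theta_t\to\theta_\star$ a.s. Because $i^\star(\theta_\star)$ is unique, $i^\star(\hat\theta_t)=i^\star(\theta_\star)$ for all large $t$. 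Theorem~\ref{prop: ratio_convergence} gives $N_{ij}(t)/t\to\omega^\dagger_{ij}$ a.s. We fix $\mathcal E$ to be the intersection of these events, so $\mathbb P(\mathcal E)=1$.

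\emph{Step 2: linear growth of $Z(t)$ on $\mathcal E$.} We write
\[
H_t(\hat\theta_t)/t=\sum_{(i,j)\in\mathcal S}\frac{N_{ij}(t)}{t}\,\sigma'(\hat\theta_t^\top z_{ij})\,z_{ij}z_{ij}^\top+\frac{\lambda_t}{t}I_d .
\]
By continuity of $\sigma'$, the frequency convergence, and $\lambda_t/t\to0$, this converges to $H(\theta_\star,\omega^\dagger)$. We need $H(\theta_\star,\omega^\dagger)\succ0$. We argue this from optimality: if it were singular, then $\|z_{i i^\star}\|_{H^{-1}}=\infty$ for some direction, which would give objective value $0$. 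That is strictly worse than, for example, the uniform allocation over $\mathcal A_0$, which has positive value. (If needed, we also note that $\min_{|u|\le BL}\sigma'(u)=m_0>0$.) Hence $H_t(\hat\theta_t)^{-1}\cdot t\to H(\theta_\star,\omega^\dagger)^{-1}$. Since $Z(t)$ is a finite minimum of continuous functions of $(\hat\theta_t, t^{-1}H_t)$, we get $Z(t)/t\to\Psi(\theta_\star,\omega^\dagger)=\mathcal U^\star(\mu)^{-1}$. The last equality holds because $\omega^\dagger\in C^\star(\theta_\star)$, and it also uses the continuity from Lemma~\ref{lemma: opt property}. Consequently, for any $\epsilon>0$ there is a random $T_\epsilon<\infty$ such that $Z(t)\ge (1-\epsilon)\,t\,\mathcal U^\star(\mu)^{-1}$ for all $t\ge T_\epsilon$.

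\emph{Step 3: inversion.} By Lemma~\ref{eq: beta_order}, $\beta(\delta,t)\le c_1\log(c_2t^\alpha/\delta)$. Therefore
\[
\tau\le \max\Big\{T_\epsilon,\ \inf\big\{t:\ (1-\epsilon)t\,\mathcal U^\star(\mu)^{-1}> c_1\log(c_2t^\alpha/\delta)\big\}\Big\}.
\]
The standard inequality for $t$ solving $t> a\log(bt^\alpha/\delta)$ gives $t\le a\log(1/\delta)+O(\log\log(1/\delta))$ with $a=c_1\,\mathcal U^\star(\mu)/(1-\epsilon)$. Dividing by $\log(1/\delta)$ and sending $\delta\to0$ yields $\limsup \tau/\log(1/\delta)\le c_1\,\mathcal U^\star(\mu)/(1-\epsilon)$ on $\mathcal E$. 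This is the claim, with constant $c_1$; the constant arises because $\beta$ carries the factor $4(1+2LB)^2/m_0$, which explains the $\lesssim$.

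\emph{Main obstacle.} The delicate point is Step 2. We must verify that $t^{-1}H_t(\hat\theta_t)$ converges to a \emph{nonsingular} limit, so that the inverse-norm map is continuous along the trajectory. Nonsingularity of $H(\theta_\star,\omega^\dagger)$ has to be extracted from the optimality of $\omega^\dagger$, because the $\sqrt t$ exploration alone vanishes at scale $t$. I would first try to show this by arguing that any $\omega$ with singular $H$ has value $0<\Psi(\theta_\star,\text{uniform on }\mathcal A_0)$.
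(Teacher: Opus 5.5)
Your proposal is correct and follows essentially the same route as the paper's proof: a probability-one event from Lemma~\ref{lemma: param_consistence} and Theorem~\ref{prop: ratio_convergence}, the rescaling $H_t(\hat\theta_t)=t\widetilde H_t$ with $\widetilde H_t\to H(\theta_\star,\omega^\dagger)$ to get $Z(t)/t\to\mathcal U^\star(\mu)^{-1}$, and inversion against the logarithmic bound of Lemma~\ref{eq: beta_order}. Where you go beyond the paper, which simply asserts $H(\theta_\star,\omega^\dagger)\succ0$, is your optimality argument for nonsingularity; it is valid once you note that $\{z_{i\,i^\star(\mu)}\}_{i\neq i^\star(\mu)}$ spans $\mathbb R^d$ (because $z_{ab}=z_{a\,i^\star(\mu)}-z_{b\,i^\star(\mu)}$ and the $\mathcal A_0$ features span), so a singular $H$ leaves some $z_{i\,i^\star(\mu)}$ outside its range and forces $\Psi=0$.
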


\section{Numerical Experiments}
\label{sec: experiments}
In this section, we evaluate the performance of LLM-PO through extensive synthetic and real-world experiments. In both unstructured and structured policy spaces, we compare LLM-PO with several benchmark methods. In particular, Thompson Sampling and UCB are state-of-the-art methods from the literature that we adapt to our problem setting.
\begin{itemize}
    \item \textbf{RoundRobin:} employs a non-adaptive strategy that allocates comparisons uniformly across all candidate pairs in a deterministic order, without adjusting the sampling scheme based on past observations.
    \item \textbf{RandomPair:} uses a non-adaptive strategy that selects a candidate pair uniformly at random at each round.
    \item \textbf{EpsGreedy:} uses an $\epsilon$-greedy strategy that explores by sampling a random pair with probability $\epsilon$, and otherwise compares the current best policy with its empirically strongest opponent.
    \item \textbf{Thompson Sampling:} uses a practical Thompson-sampling strategy that samples pairwise preference probabilities from Beta posteriors to select a candidate policy and its opponent; this benchmark is inspired by Double Thompson Sampling \citep{wu2016double}.
    \item \textbf{RUCB:} uses an RUCB-style strategy that builds upper confidence bounds for pairwise preference probabilities, selects a candidate from the plausible winner set, and compares it against the opponent most likely to beat it; this benchmark is inspired by RUCB \citep{zoghi2014relative}.
\end{itemize}

\subsection{Synthetic Experiments}
We consider an unstructured policy space with $16$ candidate policies, which induces $|\mathcal{S}|=120$ admissible comparison pairs. Pairwise comparison outcomes are generated from a preference matrix $\mu$, where the comparison outcome between policies $i$ and $j$ follows a Bernoulli distribution with mean $\mu(i,j)$. To construct a challenging synthetic instance, we first assign each policy $i\in\{0,\ldots,15\}$ a latent score
\begin{equation}
    r_i = 0.55 - \frac{1.90i}{15} + \varepsilon_i, \qquad \varepsilon_i \sim \mathcal{N}(0,0.005^2),
\end{equation}
and then define the pairwise preference probabilities as
\begin{equation*}
    \mu(i,j) = \frac{1}{1+\exp(-(r_i-r_j))},\quad \forall i\neq j
\end{equation*}
For completeness, we set $\mu(j,i) = 1-\mu(i,j)$ and $\mu(i,i)=1/2$. This construction induces a clear global ranking over the $16$ candidate policies, while keeping neighboring policies close in latent score and hence close in pairwise preference probability. As a result, several suboptimal policies remain competitive with the best one, so identifying the optimal policy with fixed confidence requires a substantial number of noisy pairwise comparisons across the full set of $|\mathcal{S}|=120$ possible pairs. This makes the instance well-suited for evaluating the effectiveness of fixed-confidence adaptive experiment design in large unstructured policy spaces.

For each method, we perform $200$ independent replications to evaluate empirical performance at the risk level $\delta=0.05$. To control computational cost, the simulation budget is capped at $30000$ pairwise comparisons. Any method that does not satisfy the stopping criterion within this budget is automatically terminated at the maximum budget. All performance metrics are estimated by sample averages across replications, and 95\% confidence intervals are reported accordingly. 

Figure \ref{fig:syn-pcs} shows the empirical probability of correct selection (PCS), that is, the frequency of correctly selecting the optimal policy, as a function of the simulation budget. The proposed LLM-PO experiment consistently achieves the highest PCS over the entire budget range. It approaches perfect identification rapidly, reaching a PCS close to $1$ after only a few thousand comparisons. In contrast, Thompson Sampling and RUCB improve more gradually, although both eventually achieve high accuracy as the budget increases. The simpler exploration strategies, including RoundRobin, RandomPair, and EpsGreedy, perform substantially worse and require significantly more samples to attain comparable accuracy.

\begin{figure}
    \centering    \includegraphics[width=0.6\linewidth]{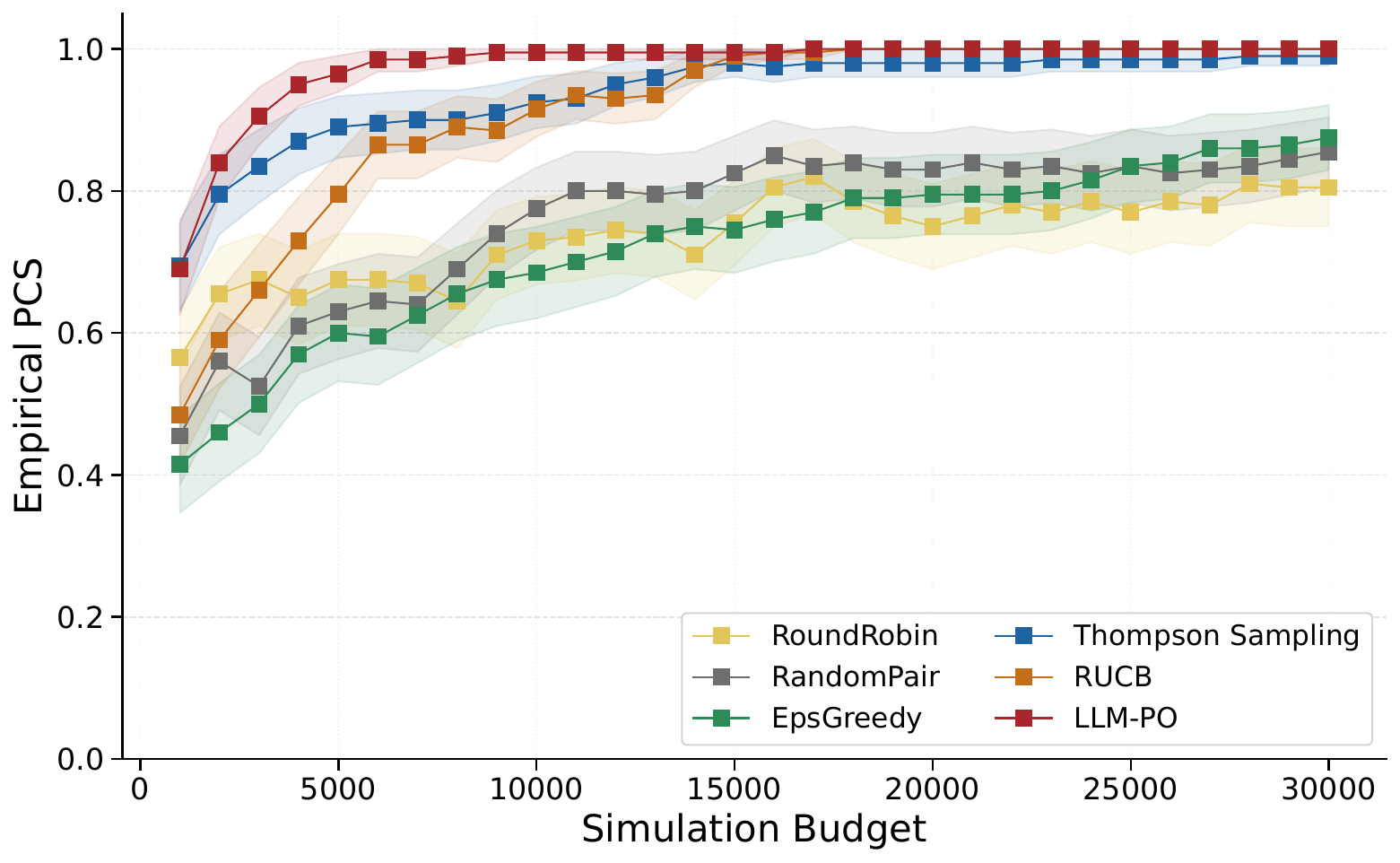}
    \caption{Empirical Probability of Correct Selection ($\delta=0.05$) with 95\% Confidence Intervals}
    \label{fig:syn-pcs}
\end{figure}

In implementation, we adopt the heuristic stopping threshold $\rho(\delta,t)= 2\log((\log t+1)/\delta)$, which has been adopted in prior work on best arm identification, such as \citet{garivier2016optimal,wang2021fast}. Although this choice is not directly justified by theory, it is computationally simple to implement and, importantly, provides reliable empirical performance together with a valid fixed-confidence guarantee.
Because the three simpler exploration strategies fail to stop within the maximum budget of $30000$ comparisons, we report average stopping times only for the remaining three experiment designs. Figure \ref{fig:syn-stop} shows that the proposed LLM-PO experiment stops substantially earlier than Thompson Sampling and RUCB, demonstrating superior sample efficiency. In addition, the stopping criterion is both effective and somewhat conservative. Specifically, at the mean stopping time of each experiment, the corresponding empirical PCS is significantly higher than the required $95\%$ confidence threshold. This indicates that the stopping rule succeeds in delivering the desired fixed-confidence guarantee, while tending to stop only after sufficient evidence has been accumulated.

\begin{figure}
    \centering
    \includegraphics[width=0.5\linewidth]{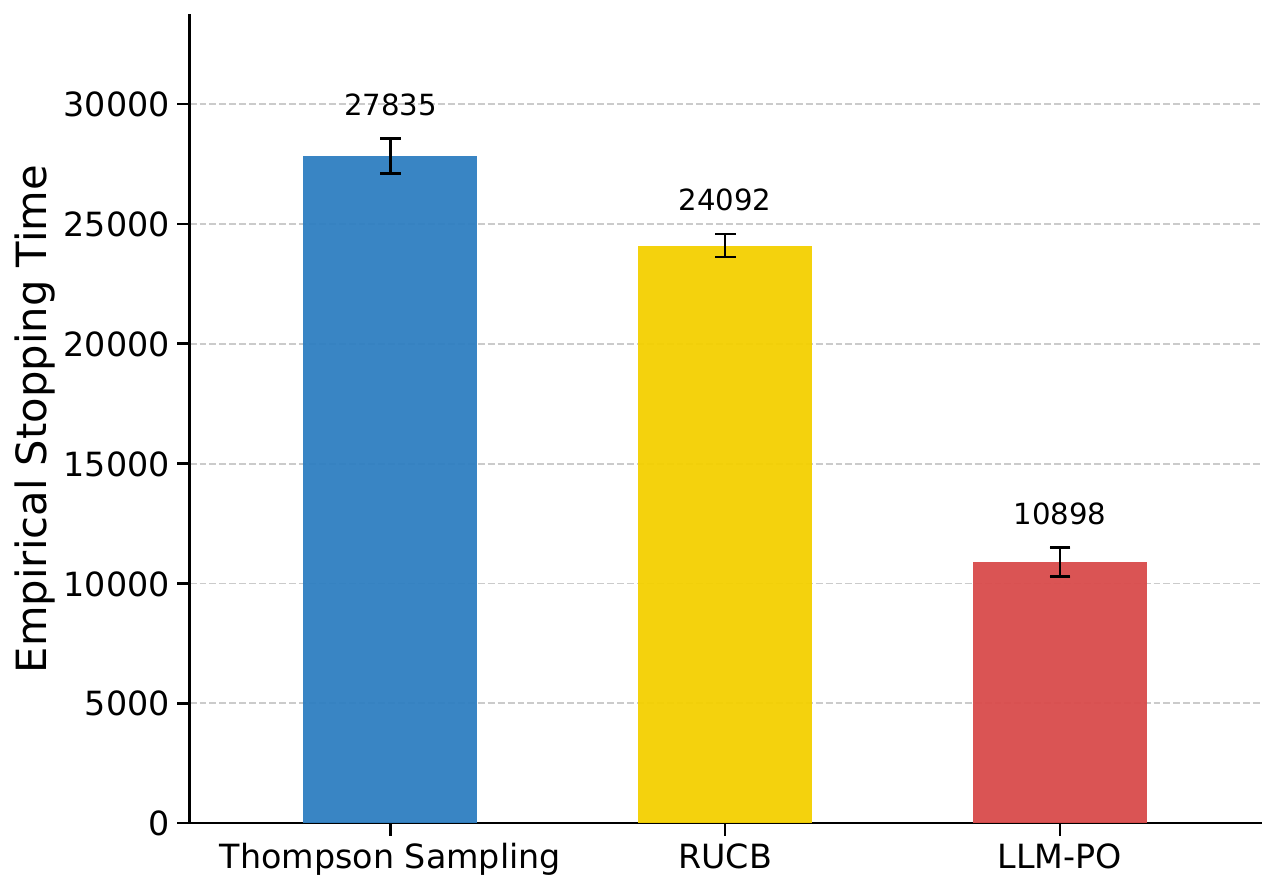}
    \caption{Empirical Stopping Time ($\delta=0.05$) with 95\% Confidence Intervals}
    \label{fig:syn-stop}
\end{figure}

We consider a structured policy space with $K=32$ candidate policies, each associated with a $d=6$-dimensional feature vector. This induces $|\mathcal{S}|=496$ admissible comparison pairs. Pairwise comparison outcomes are generated from a linear Bradley-Terry model: for each pair of policies $i\neq j$, the comparison outcome follows a Bernoulli distribution with mean $\mu(i,j)$. To construct the synthetic instance, we first generate a latent parameter vector $\theta_\star\in\mathbb{R}^d$ from a standard Gaussian distribution and normalize it to have unit $\ell_2$-norm. We then define a sequence of base scores evenly spaced from  $0.3$ to $-0.3$, and for each policy $i\in\{0,\ldots,31\}$, generate its feature vector as
\begin{equation*}
    x_i = b_i\theta_\star + \xi_i,
\end{equation*}
where $b_i\in \mathbb{R}$ is the corresponding base score and $\xi_i\sim \mathcal{N}(0, 0.25^2I_d)$ is an independent Gaussian perturbation. After generation, all policies are reordered according to their true linear scores $x_i^\top \theta_\star$, so that the best policy is the one with the largest true score. The pairwise preference probabilities are then defined by
$$
\mu(i,j)=\sigma\big(\theta_\star^\top(x_i-x_j)\big)
=\frac{1}{1+\exp\big(-\theta_\star^\top(x_i-x_j)\big)},\qquad \forall i\neq j.
$$
For completeness, we set $\mu(j,i) = 1-\mu(i,j)$ and $\mu(i,i)=1/2$. This construction yields a structured preference model in which pairwise comparisons are governed by a shared low-dimensional parameter $\theta_\star$. 

We extend the benchmark to structured policy spaces. Specifically, all methods use the collected preference data to estimate the unknown parameter $\theta_\star$ through a regularized logistic maximum likelihood estimator. The resulting estimate is then used to identify the empirically best policy and to guide the adaptive sampling process.

Figure \ref{fig:structured pcs} shows that LLM-PO consistently outperforms all benchmark methods in terms of empirical PCS in the structured policy space. In particular, LLM-PO exhibits a substantially faster increase in PCS as the simulation budget grows and attains near-perfect selection accuracy with fewer comparisons, demonstrating its superior sample efficiency for policy optimization. Owing to the availability of structured information, the benchmark methods also perform reasonably well, even though the number of policy pairs is much larger than in the unstructured policy space.

\begin{figure}
    \centering
    \includegraphics[width=0.6\linewidth]{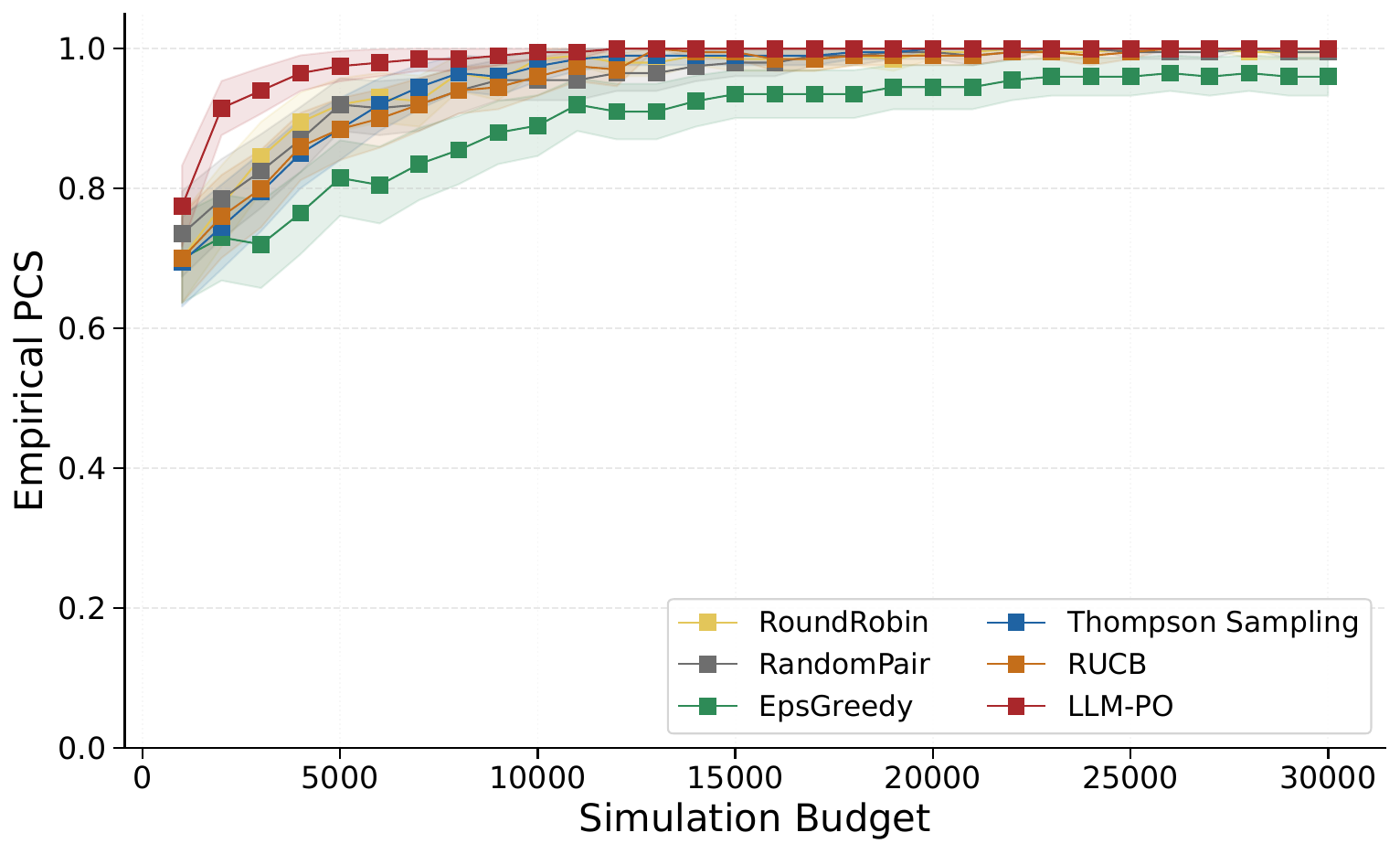}
    \caption{Empirical Probability of Correct Selection ($\delta=0.05$) with 95\% Confidence Intervals}
    \label{fig:structured pcs}
\end{figure}

Figure \ref{fig:structured stop} presents the empirical stopping time of different methods. The results show that LLM-PO stops substantially earlier than all benchmark methods, requiring only about $6542$ comparisons on average, whereas the competing methods require roughly $15000$ to $23000$ comparisons. This gap is considerable and indicates that LLM-PO can reach a reliable decision much more quickly. Combined with the PCS results in Figure \ref{fig:structured pcs}, these findings provide strong evidence for the effectiveness of LLM-PO. it not only achieves higher selection accuracy, but also does so with significantly fewer samples. Therefore, LLM-PO is more sample-efficient and better suited for policy optimization in the structured policy space.
\begin{figure}
    \centering
    \includegraphics[width=0.6\linewidth]{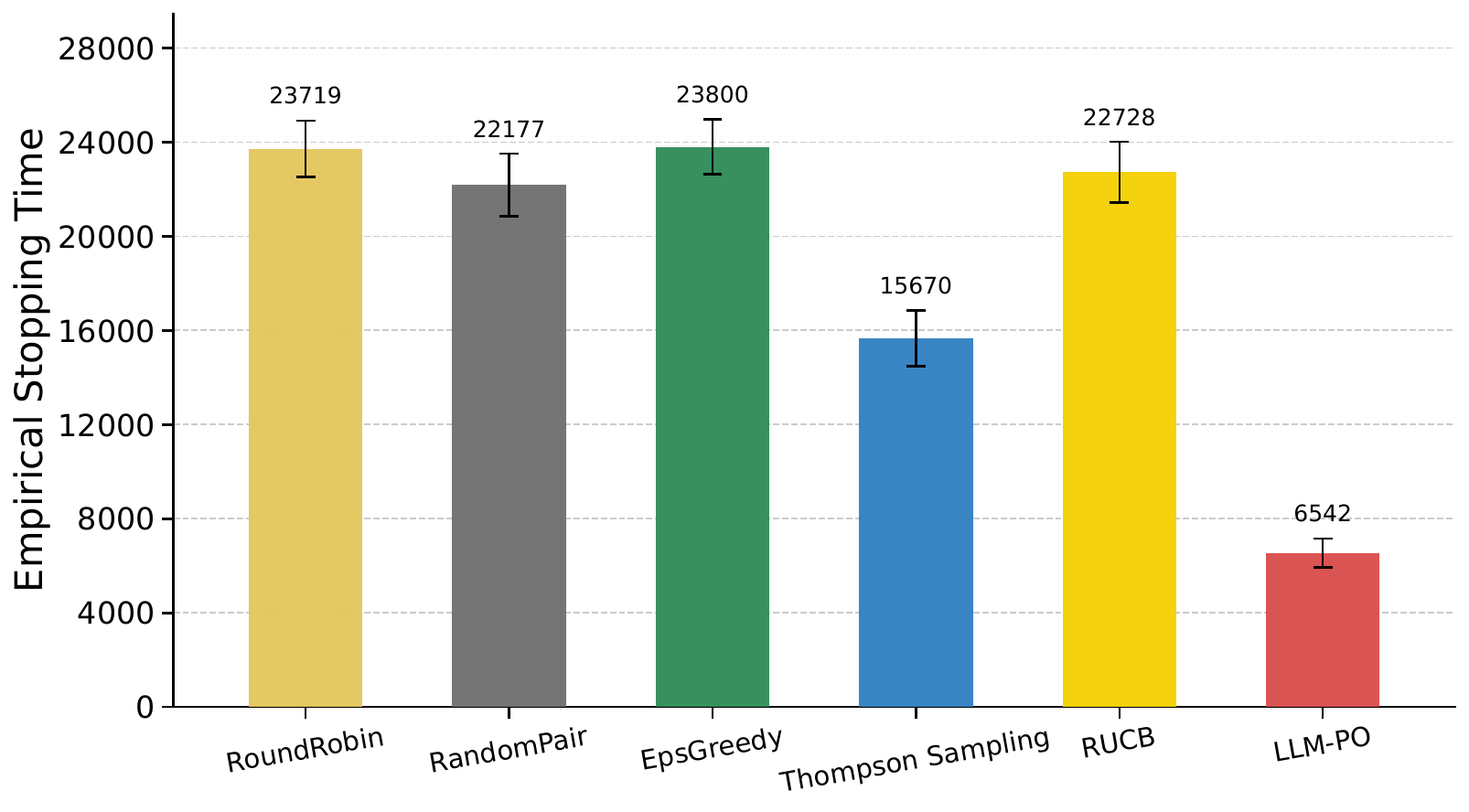}
    \caption{Empirical Stopping Time ($\delta=0.05$) with 95\% Confidence Intervals}
    \label{fig:structured stop}
\end{figure}

\subsection{Real Experiments}
In this subsection, we evaluate the performance of LLM-PO through extensive real-world experiments on policy optimization for LLMs. The evaluation tasks are drawn from two standard benchmark datasets, Instruction Induction \citep{honovich2023instruction} and BIG-bench \citep{suzgun2023challenging}. We use Llama-3:8B as the response-generating model.

We consider four tasks that evaluate different core abilities of language models. More details are provided in Section \ref{sec: real details} of the Electronic Companion:
\begin{itemize}
    \item \textbf{Object Counting:} requires the model to determine the number of specified objects mentioned in a given scene description.
    \item \textbf{Word Unscrambling:} requires the model to recover the correct word from a set of scrambled letters.
    \item \textbf{Second Word Letter:} requires the model to identify a specified letter in the second word of a given text sequence.
    \item \textbf{Sum:} requires the model to compute the result of an arithmetic addition problem.
\end{itemize}

Given the nature of the evaluated tasks, we define each policy as a combination of a system prompt and a reasoning strategy. The system prompt controls the model’s response style, while the reasoning strategy specifies how the model solves the task. In our experiments, we consider two system prompts and three reasoning strategies for each task, which gives $6$ candidate policies and $15$ policy pairs.
To evaluate different methods, we first construct a ground-truth oracle policy for each task. Specifically, each policy pair is compared via Monte Carlo simulation with $30$ pairwise comparisons, and the procedure is repeated over $10$ independent replications to estimate the preference matrix $\mu$. The oracle optimal policy is then defined as the policy, or set of policies, that significantly outperforms the others under the empirical preference matrix $\mu$.
This exhaustive oracle evaluation is used only for benchmarking and is not needed in practical applications.

Since reference answers are available for these tasks, we primarily adopt a rule-based judge in implementation. The judge first checks the correctness of the two responses and always prefers the correct one. If both responses are correct, it favors either the one with clearer reasoning steps or the one with a more concise final answer. If both responses are incorrect, it selects between the two policies uniformly at random. When the answer cannot be reliably verified by rules, we instead employ an LLM (Qwen2.5-7B) as a Judge with a strict prompt-based scoring criterion.

Our experiments were conducted on a Linux server equipped with eight NVIDIA A100 PCIe 80GB GPUs. Since the sum task is relatively easy for all
methods, we set its maximum simulation budget to $60$ comparisons. To control computational cost, we set the maximum simulation budget for the remaining tasks to $90$ comparisons. Due to variation in task difficulty, each task was evaluated with a different number of independent replications, and the corresponding $90\%$ confidence intervals are reported.

Figure \ref{fig:real result} shows that the proposed method LLM-PO consistently outperforms or remains highly competitive with all benchmark methods in the four tasks. On the more challenging Object Counting task, LLM-PO demonstrates a substantial improvement in empirical PCS and clearly outperforms all baselines by a wide margin. On World Unscrambling, it again achieves the best performance, with a noticeably higher PCS than the benchmark methods. On Second Word Letter, LLM-PO attains perfect selection accuracy, while the competing methods remain slightly below this level. On the relatively easier Sum task, where several methods already achieve near-perfect performance, LLM-PO remains fully competitive with the best baselines and still achieves perfect PCS. Overall, these results demonstrate that our adaptive simulation framework is not only robust across tasks with varying levels of difficulty but also consistently more sample-efficient and reliable than existing benchmark methods. They show that simulation-based policy optimization can identify better policies for LLMs and translate limited simulation budgets into real performance gains on downstream tasks.

\begin{figure}[htbp]
    \centering
    \begin{subfigure}[b]{0.45\textwidth}
        \centering
        \includegraphics[width=\textwidth]{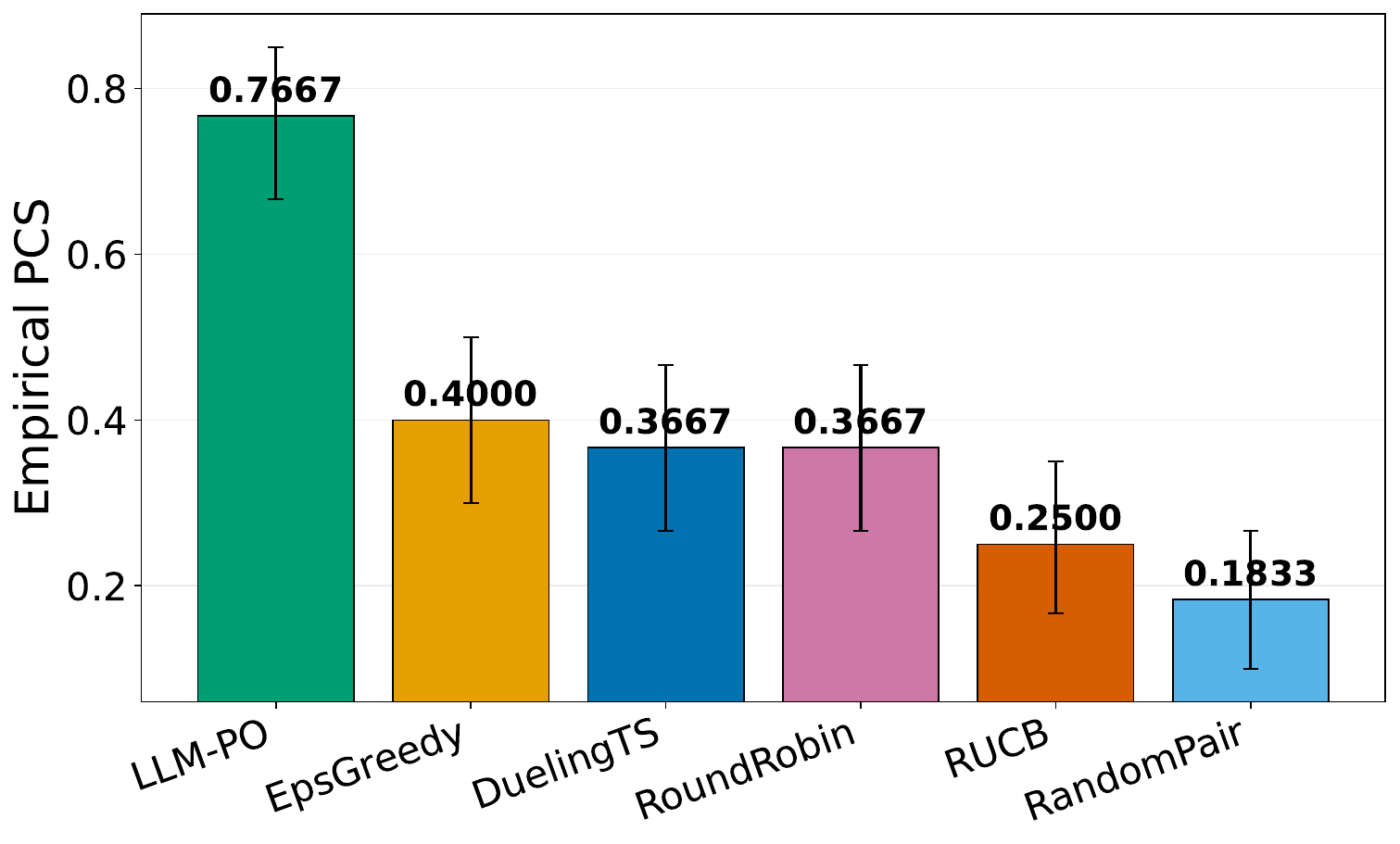}
        \caption{Object Counting (60 replications)}
        \label{fig:sub1}
    \end{subfigure}
    \hfill
    \begin{subfigure}[b]{0.45\textwidth}
        \centering
        \includegraphics[width=\textwidth]{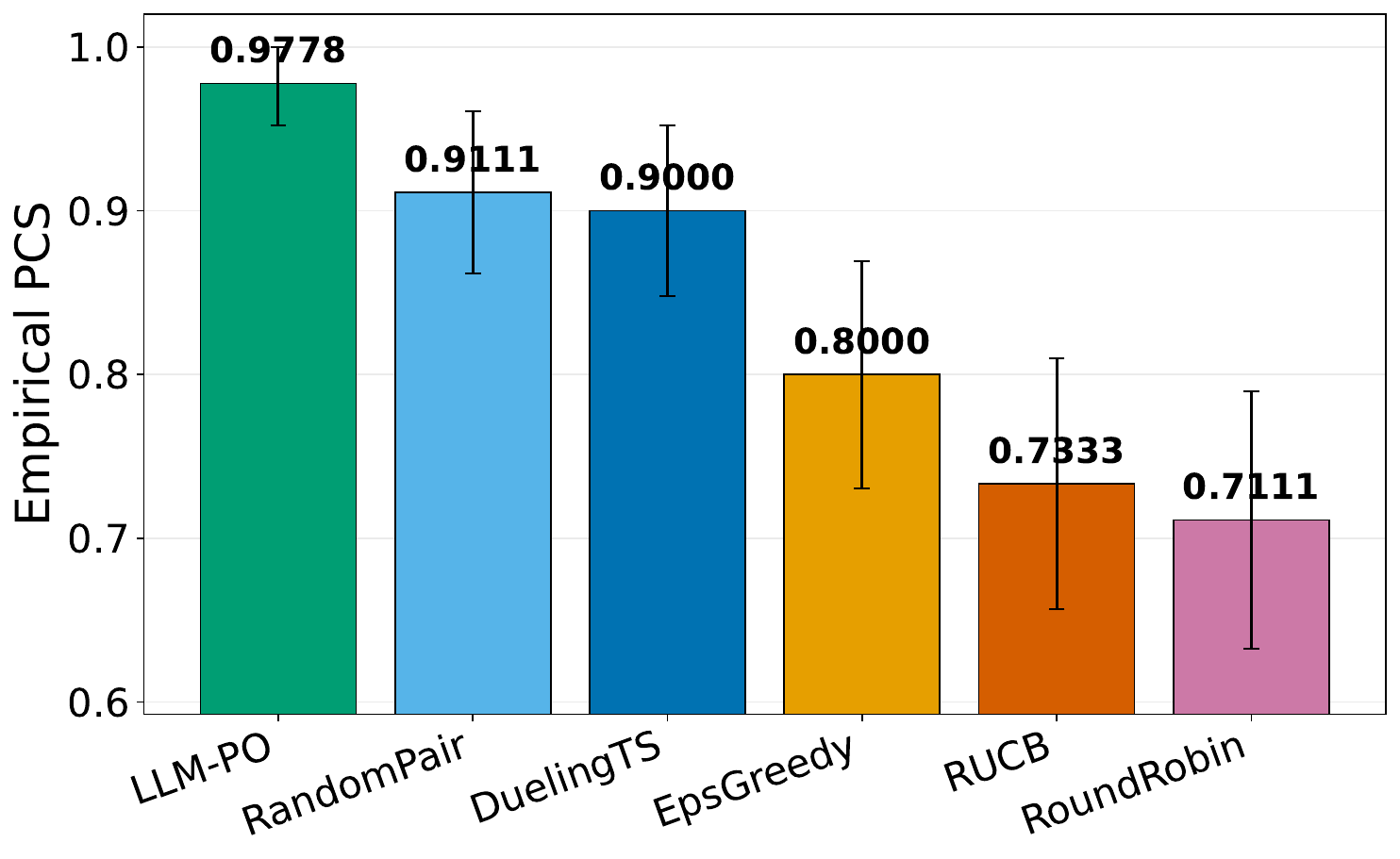}
        \caption{World Unscrambling (90 replications)}
        \label{fig:sub2}
    \end{subfigure}
    
    \vspace{0.5cm}
    
    \begin{subfigure}[b]{0.45\textwidth}
        \centering
        \includegraphics[width=\textwidth]{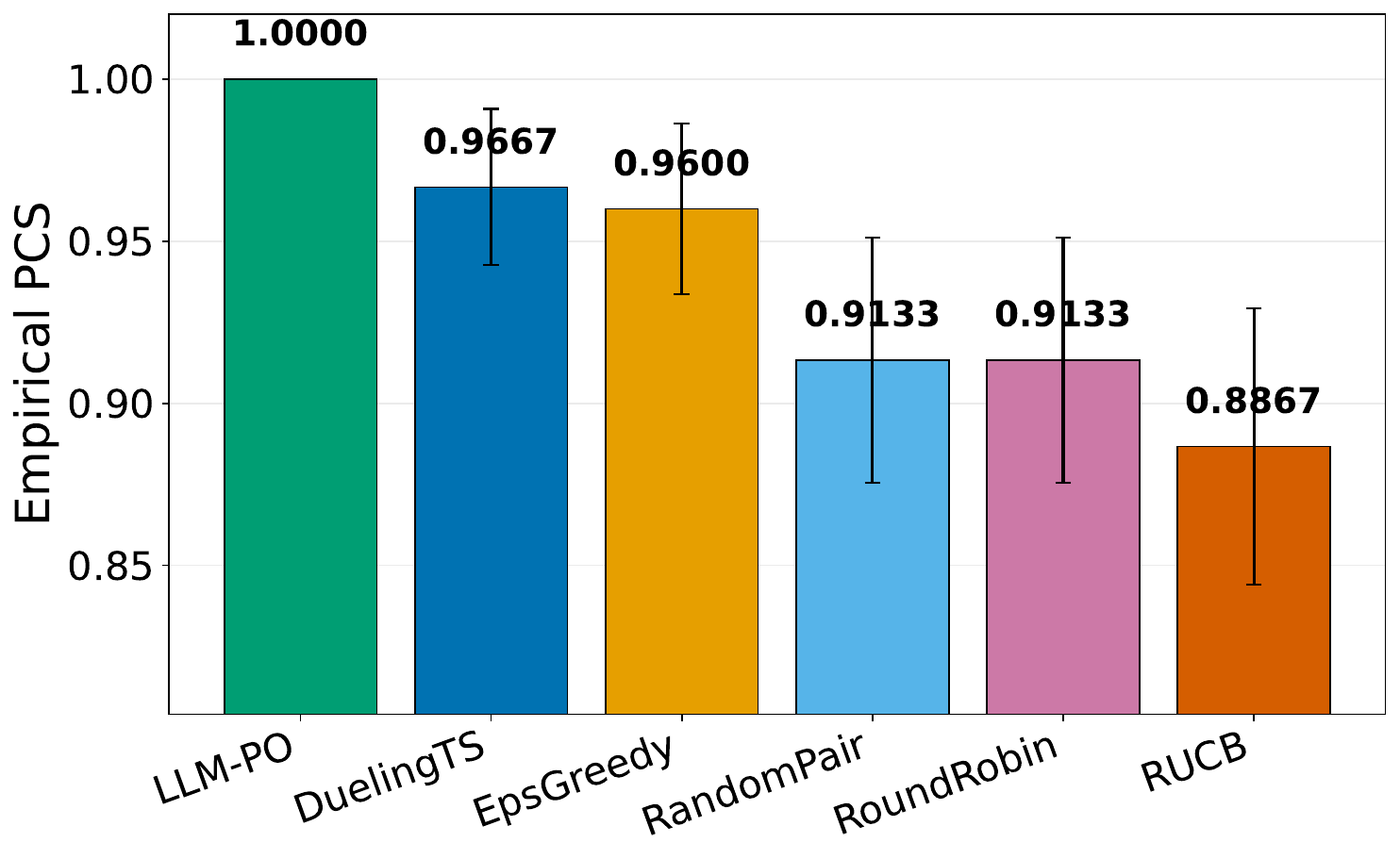}
        \caption{Second Word Letter (150 replications)}
        \label{}
    \end{subfigure}
    \hfill
    \begin{subfigure}[b]{0.45\textwidth}
        \centering
        \includegraphics[width=\textwidth]{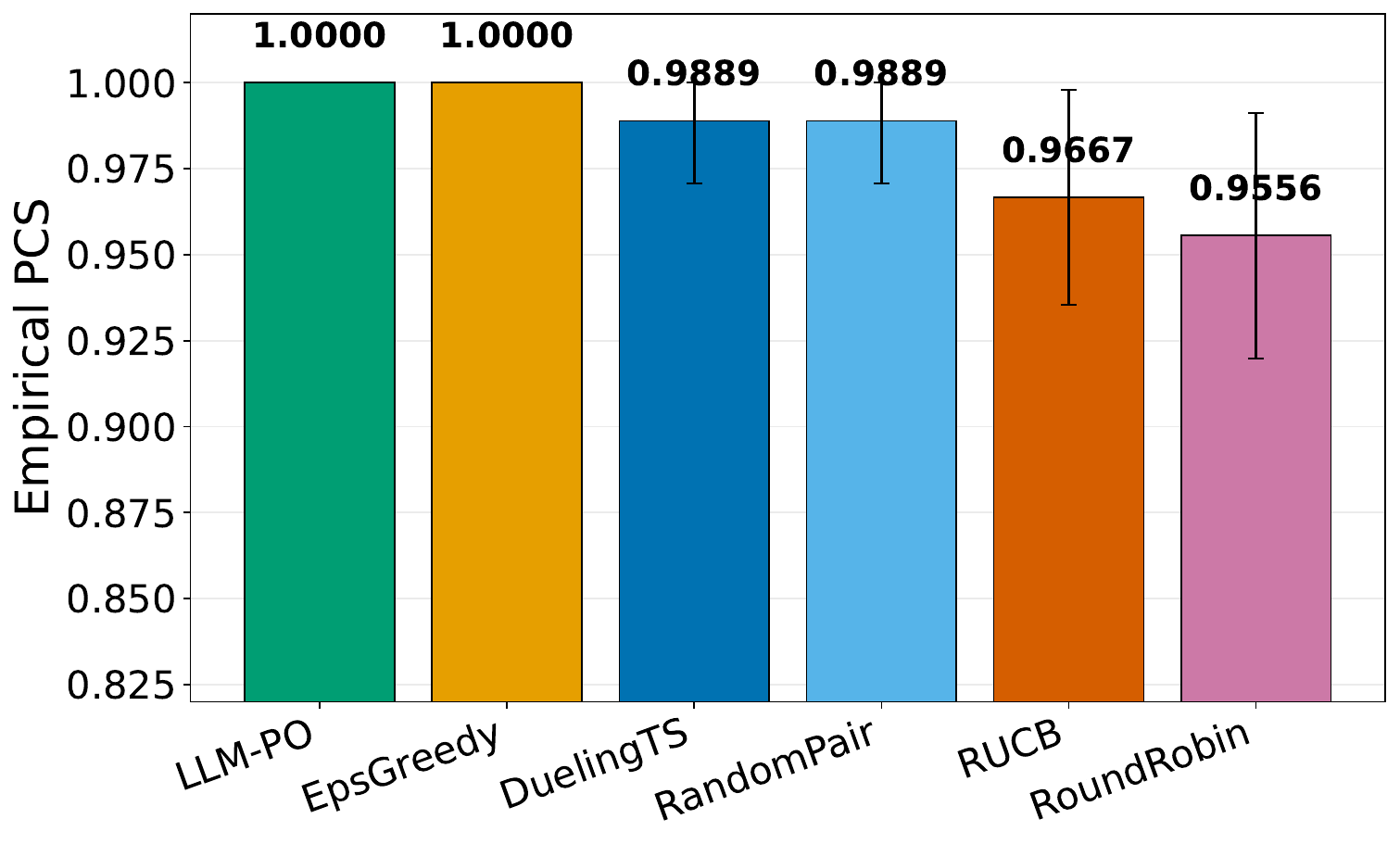}
        \caption{Sum (90 replications)}
        \label{fig:sub4}
    \end{subfigure}
    
    \caption{Empirical Probability of Correction Selection with 90\% Confidence Intervals}
    \label{fig:real result}
\end{figure}

\section{Conclusion}
\label{sec: conclusion}
This paper studies policy optimization for LLMs deployed in operational management settings. We propose a pairwise-comparison-based adaptive simulation experiment framework to identify the optimal policy from a finite set of candidates with a high probability. The framework covers both an unstructured policy space without parametric assumptions and a structured policy space under the Bradley-Terry preference model. For both settings, we characterize the fundamental data requirements and develop methods for computing the optimal sampling proportions. Building on these results, we design a procedure termed LLM-PO and show that it attains the desired statistical guarantee while asymptotically matching the fundamental data requirements. Numerical experiments further demonstrate the effectiveness of the proposed method. 

This work points to an important direction for studying LLMs in operations management. As foundation models become more capable and more widely available, the key challenge is often not only how to build better models, but also how to deploy existing models in a controllable and effective way. In this context, policy choices such as prompts, guardrails, and sampling parameters are not just technical details. They directly shape the response quality and user experience. Our work suggests that these choices can be improved through principled adaptive experimentation. We hope this perspective can encourage future research on how to systematically evaluate, manage, and improve LLM-based systems in real operational environments.

\footnotesize
\bibliographystyle{abbrvnat}
\bibliography{reference}

\ECSwitch


\ECHead{Proofs of Statements}
\section{Proof of Theorem \ref{thm: lower bound}}
\begin{lemma}[Lemma~1 in \citet{kaufmann2016complexity}]
\label{lem:kaufmann16-lem1}
Let $\mu\in \mathcal{H}$ and $\lambda\in \mathrm{Alt}(\mu)$ denote two instances of the LLM policy optimization problem. For any experiment design, every stopping time $\tau$ and event $\mathcal{E}$, it holds that
\begin{equation*}
    \sum_{(i,j)\in\mathcal{S}}\mathbb{E}_{\mu}[N_{ij}(\tau)] d(\mu(i,j),\lambda(i,j))\ge \text{kl}(\mathbb{P}_{\mu}(\mathcal{E}),\mathbb{P}_{\lambda}(\mathcal{E})),
\end{equation*}
where $N_{ij}(\tau)$ is the number of samples from policy pair $(i,j)$ up to time step $\tau$, and $\text{kl}(x,y)$ is defined as 
$$\text{kl}(x,y) := x\log\left(\frac{x}{y}\right) + (1-x)\log\left(\frac{1-x}{1-y}\right).$$
\end{lemma}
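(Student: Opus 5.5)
The statement at the end is Lemma~\ref{lem:kaufmann16-lem1}, the change-of-measure inequality. Since it is quoted from \citet{kaufmann2016complexity}, I would re-derive it in the present pairwise-comparison setting rather than re-prove it from scratch.

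\textbf{Step 1: log-likelihood ratio.} Let $\mathcal{F}_\tau$ be the information at the stopping time. Write $L_t$ for the log-likelihood ratio of the observations under $\mu$ versus $\lambda$. The task draws $x_s$ have the same law $\mathcal{Q}$ under both instances, and the pair choices $(i_s,j_s)$ are $\mathcal{F}_{s-1}$-measurable and depend only on past data. So only the Bernoulli outcomes contribute:
$$L_t=\sum_{s=1}^{t}\log\frac{\mu(i_s,j_s)^{D_s}(1-\mu(i_s,j_s))^{1-D_s}}{\lambda(i_s,j_s)^{D_s}(1-\lambda(i_s,j_s))^{1-D_s}}.$$
I treat $D(i,j)$ under instance $\mu$ as Bernoulli with mean $\mu(i,j)$, independent of the past given the chosen pair. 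Here the randomness over tasks, responses and the judge is folded into this marginal Bernoulli law. This is where the modeling assumption enters.

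\textbf{Step 2: Wald's identity.} Conditional on $\mathcal{F}_{s-1}$ and on $(i_s,j_s)=(i,j)$, each summand has $\mu$-expectation $d(\mu(i,j),\lambda(i,j))$. By Wald's identity (optional stopping for the martingale $L_t-\sum_{s\le t} d(\mu(i_s,j_s),\lambda(i_s,j_s))$), I get
$$\mathbb{E}_\mu[L_\tau]=\sum_{(i,j)\in\mathcal{S}}\mathbb{E}_\mu[N_{ij}(\tau)]\,d(\mu(i,j),\lambda(i,j)).$$
If $\mathbb{E}_\mu[\tau]=\infty$, the inequality is trivial whenever some pair has $d>0$ and infinite expected count, so assume $\mathbb{E}_\mu[\tau]<\infty$. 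To make the increments integrable I also need $\lambda(i,j)\in(0,1)$. Otherwise either $d=\infty$, making the left side infinite, or the degenerate case is handled by convention.

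\textbf{Step 3: data-processing inequality.} I expect this to be the main obstacle. The left side equals $\mathrm{KL}(\mathbb{P}_\mu|_{\mathcal{F}_\tau},\mathbb{P}_\lambda|_{\mathcal{F}_\tau})$. For any $\mathcal{E}\in\mathcal{F}_\tau$, the indicator $\mathbb{I}_{\mathcal{E}}$ is a measurable map from $\mathcal{F}_\tau$ to $\{0,1\}$, and KL does not increase under measurable maps. Its pushforwards are Bernoulli with means $\mathbb{P}_\mu(\mathcal{E})$ and $\mathbb{P}_\lambda(\mathcal{E})$, so the KL is at least $\mathrm{kl}(\mathbb{P}_\mu(\mathcal{E}),\mathbb{P}_\lambda(\mathcal{E}))$.

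An alternative is the Kaufmann et al. route. First show $\mathbb{E}_\mu[-L_\tau\mid\mathcal{E}]\le\log(\mathbb{P}_\lambda(\mathcal{E})/\mathbb{P}_\mu(\mathcal{E}))$ by Jensen, using $\mathbb{P}_\lambda(\mathcal{E})=\mathbb{E}_\mu[e^{-L_\tau}\mathbb{I}_{\mathcal{E}}]$. Then repeat the argument on $\mathcal{E}^c$ and add the two bounds.

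The delicate points are rigor on $\{\tau=\infty\}$ and the change of measure on $\mathcal{F}_\tau$, which requires absolute continuity. I would restrict to events in $\mathcal{F}_\tau$ with $\tau<\infty$ almost surely under $\mu$, as is implicit for a $\delta$-PAC design with finite expected sample size.
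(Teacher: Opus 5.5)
Your proposal is correct and is essentially the standard argument behind the result, which the paper imports without proof from \citet{kaufmann2016complexity}: Wald's identity turns $\mathbb{E}_\mu[L_\tau]$ into the left-hand side, and the transportation step bounds it below by $\mathrm{kl}(\mathbb{P}_\mu(\mathcal{E}),\mathbb{P}_\lambda(\mathcal{E}))$, whether done via your data-processing inequality or via the Jensen argument on $\mathcal{E}$ and $\mathcal{E}^c$ used there. The one loose end is your reduction to $\mathbb{E}_\mu[\tau]<\infty$, which misses the case where every pair with infinite expected count has $d(\mu(i,j),\lambda(i,j))=0$; for such pairs $\mu(i,j)=\lambda(i,j)$, so their log-likelihood increments vanish identically, optional stopping can be applied to the remaining pairs alone (whose counts are integrable), and in any event the paper only invokes the lemma in Theorem~\ref{thm: lower bound}, where $\mathbb{E}_\mu[\tau]=\infty$ makes the conclusion trivial.
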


Fix $\delta\in(0,1/2)$ and an instance $\mu\in\mathcal{H}$. Consider any $\delta$-PAC adaptive simulation experiment with stopping time $\tau$ and final decision $\hat{i}_\tau$.
By the $\delta$-PAC requirement, for all $\mu\in\mathcal{H}$,
\begin{equation}
\label{eq:pac_mu}
\mathbb{P}_\mu(\hat{i}_\tau = i^\star(\mu)) \ge 1-\delta,
\qquad\text{equivalently}\qquad
\mathbb{P}_\mu(\hat{i}_\tau \neq i^\star(\mu)) \le \delta .
\end{equation}

Moreover, for any alternative instance $\lambda\in\mathrm{Alt}(\mu)$ (so that $i^\star(\lambda)\neq i^\star(\mu)$), the same decision rule must satisfy
\begin{equation}
\label{eq:pac_lambda}
\mathbb{P}_\lambda(\hat{i}_\tau = i^\star(\lambda)) \ge 1-\delta
\quad\Rightarrow\quad
\mathbb{P}_\lambda(\hat{i}_\tau \neq i^\star(\mu)) \ge 1-\delta .
\end{equation}

Define the error event
\[
\mathcal{E}:=\{\hat{i}_\tau \neq i^\star(\mu)\}.
\]
Then \eqref{eq:pac_mu}--\eqref{eq:pac_lambda} give
\[
\mathbb{P}_\mu(\mathcal{E}) \le \delta,
\qquad
\mathbb{P}_\lambda(\mathcal{E}) \ge 1-\delta
\qquad \forall \lambda\in\mathrm{Alt}(\mu).
\]

Applying Lemma \ref{lem:kaufmann16-lem1} yields: for every $\lambda\in\mathrm{Alt}(\mu)$,
\begin{equation}
\label{eq:transport_pair}
\sum_{(i,j)\in\mathcal{S}}\mathbb{E}_\mu\big[N_{ij}(\tau)\big]\; d\big(\mu(i,j),\lambda(i,j)\big)
\;\ge\;
\text{kl}\big(\mathbb{P}_\mu(\mathcal{E}),\mathbb{P}_\lambda(\mathcal{E})\big)
\;\ge\;
\text{kl}(\delta,1-\delta),
\end{equation}
where the last inequality follows since 
$\delta<1/2$, for $x<y$ the binary relative entropy $\text{kl}(x,y)$ is decreasing in $x$ and increasing in $y$.

Now combine the inequalities over all alternatives $\lambda\in\mathrm{Alt}(\mu)$:
\begin{align}
\text{kl}(\delta,1-\delta)
&\le
\inf_{\lambda\in\mathrm{Alt}(\mu)}
\sum_{(i,j)\in\mathcal{S}}\mathbb{E}_\mu\big[N_{ij}(\tau)\big]\; d\big(\mu(i,j),\lambda(i,j)\big) \nonumber\\
&=
\inf_{\lambda\in\mathrm{Alt}(\mu)}
\mathbb{E}_\mu[\tau]\sum_{(i,j)\in\mathcal{S}}\frac{\mathbb{E}_\mu\big[N_{ij}(\tau)\big]}{\mathbb{E}_\mu[\tau]}\;
d\big(\mu(i,j),\lambda(i,j)\big) \nonumber\\
&\le
\mathbb{E}_\mu[\tau]\;
\sup_{\omega\in\Omega}
\inf_{\lambda\in\mathrm{Alt}(\mu)}
\sum_{(i,j)\in\mathcal{S}}\omega_{ij}\; d\big(\mu(i,j),\lambda(i,j)\big),
\label{eq:maxmin_step}
\end{align}
where in the second line we used $\mathbb{E}_\mu[\tau]=\sum_{(i,j)\in\mathcal{S}}\mathbb{E}_\mu[N_{ij}(\tau)]$, and in the last line we introduced
\[
\omega_{ij}:=\frac{\mathbb{E}_\mu[N_{ij}(\tau)]}{\mathbb{E}_\mu[\tau]},
\qquad\text{so that}\qquad
\omega\in\Omega:=\Big\{\omega\ge 0:\sum_{i<j}\omega_{ij}=1\Big\}.
\]

Rearranging \eqref{eq:maxmin_step} gives
\begin{equation}
\label{eq:lower_bound_tau}
\mathbb{E}_\mu[\tau]
\;\ge\;
\frac{\text{kl}(\delta,1-\delta)}{\sup_{\omega\in\Omega}\inf_{\lambda\in\mathrm{Alt}(\mu)}\sum_{(i,j)\in\mathcal{S}}\omega_{ij} d(\mu(i,j),\lambda(i,j))}
\;=\;
\mathcal{T}^\star(\mu)\,\text{kl}(\delta,1-\delta),
\end{equation}
where $\mathcal{T}^\star(\mu)$ is defined by \eqref{eq: complexity opt}. This proves the first claim.

Finally, since $\text{kl}(\delta,1-\delta)\sim \log(1/\delta)$ as $\delta\to 0$, dividing \eqref{eq:lower_bound_tau} by $\log(1/\delta)$ and taking $\liminf_{\delta\to 0}$ yields
\[
\liminf_{\delta\to 0}\frac{\mathbb{E}_\mu[\tau]}{\log(1/\delta)}\ge \mathcal{T}^\star(\mu),
\]
which completes the proof.
\hfill$\square$
\qed

\section{Proof of Corollary \ref{corollary: optimal ratio}}
For each $i\in[K]$, define the dominance region
\begin{equation}
\label{eq:Ci}
\mathcal{C}_i
:=\Big\{\lambda\in\mathcal{H}:\ \min_{j\neq i}\lambda(i,j)>\tfrac12\Big\}.
\end{equation}
Then $i^\star(\lambda)=i$ if and only if $\lambda\in\mathcal{C}_i$. In particular,
\begin{equation}
\label{eq:Alt_union_rigorous}
\mathrm{Alt}(\mu)
=\{\lambda\in\mathcal{H}: i^\star(\lambda)\neq i^\star(\mu)\}
=\bigcup_{i\neq i^\star(\mu)} \mathcal{C}_i .
\end{equation}

The optimization problem in \eqref{eq: complexity opt} is furthermore equivalent to
\begin{equation*}
        \mathcal{T}^\star(\mu)^{-1} = \max_{\omega\in\Omega}\inf_{\lambda\in \mathrm{Alt} (\mu)} \sum_{(i,j)\in\mathcal{S}} \omega_{ij} d(\mu(i,j),\lambda(i,j)) = \max_{\omega \in \Omega} \min_{i\neq i^\star(\mu)}\inf_{\lambda\in\mathcal{C}_i}\sum_{(i,j)\in\mathcal{S}} \omega_{ij} d(\mu(i,j),\lambda(i,j)).
\end{equation*}

Fix $i\neq i^\star(\mu)$ and $\omega\in\Omega$.
Since the constraints defining $\mathcal{C}_i$ only involve comparisons between $i$ and other policies, the inner objective decomposes, and only terms involving $i$ may change. Define
\begin{equation}
\label{eq:Fi}
F_i(\omega,\lambda)
:=
\sum_{j<i}\omega_{ji}\, d\big(\mu(j,i),\lambda(j,i)\big)
+\sum_{h>i}\omega_{ih}\, d\big(\mu(i,h),\lambda(i,h)\big).
\end{equation}
Then
\begin{equation}
\label{eq:reduce_to_Fi}
\inf_{\lambda\in\mathcal{C}_i}\sum_{i<j} \omega_{ij} d(\mu(i,j),\lambda(i,j))
=
\inf_{\lambda\in\mathcal{C}_i} F_i(\omega,\lambda).
\end{equation}

We next characterize the infimum over \(\mathcal C_i\). For \(\lambda\in\mathcal{C}_i\), we require \(\lambda(i,j)>\tfrac12\) for all \(j\neq i\). Thus, for every \(j<i\) with \(\mu(j,i)>\tfrac12\), the value \(\lambda(j,i)\) must be moved from above \(\tfrac12\) to below \(\tfrac12\); similarly, for every \(h>i\) with \(\mu(i,h)<\tfrac12\), the value \(\lambda(i,h)\) must be moved from below \(\tfrac12\) to above \(\tfrac12\).

Note that if \(p<\tfrac12\), then
\begin{equation}
\label{eq:projection}
\inf_{q>\tfrac12} d(p,q) = d\left(p,\tfrac12\right),
\end{equation}
and the infimum is approached by \(q\downarrow\tfrac12\). Likewise, if \(p>\tfrac12\), then
\[
\inf_{q<\tfrac12} d(p,q)=d\left(p,\tfrac12\right),
\]
and the infimum is approached by \(q\uparrow \tfrac12\).

Applying these facts coordinate-wise, an infimizing sequence over \(\mathcal C_i\) is obtained by setting
\[
\lambda(j,i)\uparrow \tfrac12 \quad \text{for all } j\in\mathcal I_1(i),
\qquad
\lambda(i,h)\downarrow \tfrac12 \quad \text{for all } h\in\mathcal I_2(i),
\]
while keeping all other pairs unchanged. Consequently,
\begin{equation}
\label{eq: simple opt}
     \mathcal{T}^\star(\mu)^{-1} = \max_{\omega \in \Omega} \min_{i\neq i^\star(\mu)} \sum_{j\in \mathcal{I}_1(i)} \omega_{ji} d\left(\mu(j,i),\frac{1}{2}\right)+\sum_{h\in\mathcal{I}_2(i)} \omega_{ih} d\left(\mu(i,h),\frac{1}{2}\right),
\end{equation}
where the set $\mathcal{I}_1(i) = \{j\in[K]: j<i,\mu(j,i)>\frac{1}{2}\}$, and $\mathcal{I}_2(i) = \{h\in[K]: i<h,\mu(i,h)<\frac{1}{2}\}$.

Consider the epigraph formulation \eqref{eq: epi opt}:
\begin{equation}
\label{eq: epi opt}
\begin{aligned}
    &\max\quad\nu\\
    \text{s.t.}  &\sum_{j\in \mathcal{I}_1(i)} \omega_{ji} d\left(\mu(j,i),\frac{1}{2}\right)+\sum_{h\in\mathcal{I}_2(i)} \omega_{ih} d\left(\mu(i,h),\frac{1}{2}\right) \geq \nu,\quad \forall i\neq i^\star(\mu),\quad (\rho_i)\\
    & \sum_{i<j}\omega_{ij} = 1,\quad(\gamma)\\
    & \omega_{ij} \geq 0, \quad\forall  i<j, \quad(\xi_{ij}).
\end{aligned}
\end{equation}

Let $(\nu,\omega)$ be a primal optimal solution and $(\rho,\gamma,\xi)$ be KKT multipliers associated with
the Lagrangian 
\begin{equation}
\label{eq:Lagrangian_aligned}
\begin{aligned}
\mathcal{L}(\nu,\omega,\rho,\gamma,\xi)
=
&\ \nu
+\sum_{i\neq i^\star(\mu)}\rho_i
\left(
\sum_{j\in \mathcal{I}_1(i)} \omega_{ji} d\left(\mu(j,i),\tfrac12\right)
+\sum_{h\in\mathcal{I}_2(i)} \omega_{ih} d\left(\mu(i,h),\tfrac12\right)
-\nu
\right)\\
&\ -\gamma\left(\sum_{i<j}\omega_{ij}-1\right)
+\sum_{i<j}\xi_{ij}\omega_{ij},
\end{aligned}
\end{equation}
where $\rho_i\ge 0$ and $\xi_{ij}\ge 0$.

The KKT conditions include:
\begin{itemize}
    \item Stationarity with respect to $\nu$:
    \begin{equation}
    \label{eq: normalization}
        \sum_{i\neq i^\star(\mu)} \rho_i = 1,
    \end{equation}
    \item Stationarity with respect to $\omega_{ij}$ with $i<j$:
    \begin{equation}
    \label{eq: stationary 1}
       \rho_i d\left(\mu(i,j),\frac{1}{2}\right)\mathbb{I}\left(\mu(i,j)<\frac{1}{2}\right) + \rho_j d\left(\mu(i,j),\frac{1}{2}\right)\mathbb{I}\left(\mu(i,j)>\frac{1}{2}\right)+ \xi_{ij} = \gamma
    \end{equation}
    \item Complementary Slackness:
    \begin{equation}
    \label{eq: cs1}
        \rho_i \left(\sum_{j\in \mathcal{I}_1(i)} \omega_{ji} d\left(\mu(j,i),\frac{1}{2}\right)+\sum_{h\in\mathcal{I}_2(i)} \omega_{ih} d\left(\mu(i,h),\frac{1}{2}\right) - \nu\right) = 0,\quad \forall i\neq i^\star(\mu),
    \end{equation}
    and
    \begin{equation}
    \label{eq: cs2}
        \xi_{ij}\omega_{ij} = 0, \quad \forall i<j.
    \end{equation}
\end{itemize}

Since \(d(\cdot,\cdot)\ge 0\) and \(\omega_{ij}\ge 0\), the left-hand side of each inequality constraint in \eqref{eq: epi opt} is nonnegative. Therefore, for any \(\omega\in\Omega\), the pair \((\nu,\omega)=(0,\omega)\) is feasible. Hence the optimal value of \eqref{eq: epi opt} is at least \(0\), that is, \(\nu^\star\ge 0\).

We next show that the optimal value is strictly positive. Fix any \(i\neq i^\star(\mu)\). Since \(\mu\in\mathcal H\) admits a unique optimal policy \(i^\star(\mu)\), policy \(i\) must be defeated by at least one other policy. Equivalently,
\[
\mathcal I_1(i)\cup \mathcal I_2(i)\neq \varnothing .
\]
Hence there exists at least one comparison involving \(i\) whose associated divergence term is strictly positive, that is,
\[
d\left(\mu(j,i),\tfrac12\right)>0 \quad \text{for some } j\in\mathcal I_1(i),
\]
or
\[
d\left(\mu(i,h),\tfrac12\right)>0 \quad \text{for some } h\in\mathcal I_2(i).
\]

Now choose any \(\omega\in\Omega\) with full support, i.e., \(\omega_{ij}>0\) for all \(i<j\). Then, for every \(i\neq i^\star(\mu)\), the left-hand side of the \(i\)-th constraint in \eqref{eq: epi opt} is strictly positive. Since there are only finitely many such \(i\), their minimum is also strictly positive. Therefore, there exists some \(\bar\nu>0\) such that \((\bar\nu,\omega)\) is feasible for \eqref{eq: epi opt}. It follows that the optimal value satisfies
$
\nu^\star>0.
$

According to \eqref{eq: normalization}, there exists some \(p\neq i^\star(\mu)\) such that \(\rho_p>0\). Since \(p\) is suboptimal under \(\mu\), it must be defeated by at least one other policy. Equivalently, either there exists \(k<p\) such that \(\mu(k,p)>\tfrac12\), or there exists \(k>p\) such that \(\mu(p,k)<\tfrac12\). In either case, the corresponding divergence term is strictly positive. If \(k<p\), then applying \eqref{eq: stationary 1} to the pair \((k,p)\) gives
\[
\gamma
=
\rho_p\, d\left(\mu(k,p),\tfrac12\right)+\xi_{kp}.
\]
Since \(\rho_p>0\), \(d(\mu(k,p),\tfrac12)>0\), and \(\xi_{kp}\ge 0\), it follows that \(\gamma>0\).
If instead \(k>p\), then applying \eqref{eq: stationary 1} to the pair \((p,k)\) yields
\[
\gamma
=
\rho_p\, d\left(\mu(p,k),\tfrac12\right)+\xi_{pk}.
\]
Again, since \(\rho_p>0\), \(d(\mu(p,k),\tfrac12)>0\), and \(\xi_{pk}\ge 0\), we obtain \(\gamma>0\).
Hence, \(\gamma>0\).

Suppose, for contradiction, that there exists \(i\neq i^\star(\mu)\) such that \(\rho_i=0\). 
First, consider any \(h\in \mathcal I_2(i)\). Then \(i<h\) and \(\mu(i,h)<\tfrac12\). Applying \eqref{eq: stationary 1} to the pair \((i,h)\) gives
\[
\rho_i d\left(\mu(i,h),\tfrac12\right)-\gamma+\xi_{ih}=0.
\]
Since \(\rho_i=0\) and \(\gamma>0\), we obtain \(\xi_{ih}=\gamma>0\). By complementary slackness \eqref{eq: cs2}, it follows that
\[
\omega_{ih}=0, \qquad \forall h\in\mathcal I_2(i).
\]

Next, consider any \(j\in \mathcal I_1(i)\). Then \(j<i\) and \(\mu(j,i)>\tfrac12\). Applying \eqref{eq: stationary 1} to the pair \((j,i)\) yields
\[
\rho_i d\left(\mu(j,i),\tfrac12\right)-\gamma+\xi_{ji}=0.
\]
Again, since \(\rho_i=0\) and \(\gamma>0\), we have \(\xi_{ji}=\gamma>0\), and thus \(\omega_{ji}=0\) by \eqref{eq: cs2}. Therefore,
\[
\omega_{ji}=0, \qquad \forall j\in\mathcal I_1(i).
\]

Consequently, every term in the \(i\)-th constraint of \eqref{eq: epi opt} vanishes, so
\[
\sum_{j\in \mathcal I_1(i)} \omega_{ji} d\left(\mu(j,i),\tfrac12\right)
+\sum_{h\in\mathcal I_2(i)} \omega_{ih} d\left(\mu(i,h),\tfrac12\right)
=0.
\]
Since \((\nu,\omega)\) is primal feasible and \((\nu,\omega)\) is optimal, we have \(\nu=\nu^\star\). Therefore, the \(i\)-th constraint implies \(0\ge \nu=\nu^\star\), contradicting \(\nu^\star>0\). Hence \(\rho_i>0\) for all \(i\neq i^\star(\mu)\).

Since \(\rho_i>0\) for all \(i\neq i^\star(\mu)\), complementary slackness \eqref{eq: cs1} implies
\begin{equation}
\label{eq: all_tight}
\sum_{j\in \mathcal{I}_1(i)} \omega_{ji} d\left(\mu(j,i),\tfrac12\right)
+\sum_{h\in\mathcal{I}_2(i)} \omega_{ih} d\left(\mu(i,h),\tfrac12\right)
=\nu,\quad \forall i\neq i^\star(\mu).
\end{equation}

Fix \(i\neq i^\star(\mu)\).
If \(\omega_{ji}>0\) for some \(j\in\mathcal I_1(i)\), then \eqref{eq: cs2} gives \(\xi_{ji}=0\), and since \(\mu(j,i)>\tfrac12\), \eqref{eq: stationary 1} yields
\begin{equation}
\label{eq: rho_gamma_case1}
\rho_i\, d\left(\mu(j,i),\tfrac12\right)=\gamma.
\end{equation}
If \(\omega_{ih}>0\) for some \(h\in\mathcal I_2(i)\), then \eqref{eq: cs2} gives \(\xi_{ih}=0\), and since \(\mu(i,h)<\tfrac12\), \eqref{eq: stationary 1} yields
\begin{equation}
\label{eq: rho_gamma_case2}
\rho_i\, d\left(\mu(i,h),\tfrac12\right)=\gamma.
\end{equation}
Hence, within the \(i\)-th tight constraint, all comparisons carrying positive mass must have the same divergence value.

Now let
\[
d_i^\star
:=
\max\Big\{
\max_{j\in\mathcal{I}_1(i)} d\left(\mu(j,i),\tfrac12\right),
\ \max_{h\in\mathcal{I}_2(i)} d\left(\mu(i,h),\tfrac12\right)
\Big\},
\]
and let \(\tilde j(i)\in\mathcal I_1(i)\cup\mathcal I_2(i)\) be any maximizer achieving \(d_i^\star\). Since each comparison pair appears in exactly one suboptimal-policy constraint, for each \(i\neq i^\star(\mu)\) we may redistribute all mass assigned to the \(i\)-th constraint onto the single comparison associated with \(\tilde j(i)\), without affecting any other constraint or the total mass constraint. More precisely, if \(\tilde j(i)<i\), then the selected comparison is \((\tilde j(i),i)\); if \(i<\tilde j(i)\), then the selected comparison is \((i,\tilde j(i))\).

Moreover, any mass assigned to pairs outside
\[
\bigcup_{i\neq i^\star(\mu)}
\Big(
\{(j,i): j\in\mathcal I_1(i)\}\cup\{(i,h): h\in\mathcal I_2(i)\}
\Big)
\]
does not contribute to any constraint in \eqref{eq: epi opt}. Hence, such mass can be reallocated to the selected maximizing comparisons without decreasing the objective value. 
Therefore, there exists an optimal solution such that, for every \(i\neq i^\star(\mu)\), the only nonzero term in the \(i\)-th constraint is the one associated with \(\tilde j(i)\). In particular, \eqref{eq: all_tight} becomes
\[
\omega_{\tilde j(i),i}\, d_i^\star\,\mathbb{I}\{\tilde j(i)<i\}
+\omega_{i,\tilde j(i)}\, d_i^\star\,\mathbb{I}\{i<\tilde j(i)\}
=\nu,
\]
so that
\[
\omega_{\tilde j(i),i}=\frac{\nu}{d_i^\star}\qquad \text{if }\tilde j(i)<i,
\]
and
\[
\omega_{i,\tilde j(i)}=\frac{\nu}{d_i^\star}\qquad \text{if }i<\tilde j(i).
\]

Summing over all \(i\neq i^\star(\mu)\) and using \(\sum_{i<j}\omega_{ij}=1\), we obtain
\[
1=\sum_{i\neq i^\star(\mu)}\frac{\nu}{d_i^\star},
\qquad\Longrightarrow\qquad
\nu=\frac{1}{\sum_{k\neq i^\star(\mu)}\frac{1}{d_k^\star}}.
\]
Substituting this identity back yields
\[
\omega_{\tilde j(i),i}
=
\frac{\frac{1}{d_i^\star}}{\sum_{k\neq i^\star(\mu)}\frac{1}{d_k^\star}}
\qquad \text{if }\tilde j(i)<i,
\]
and
\[
\omega_{i,\tilde j(i)}
=
\frac{\frac{1}{d_i^\star}}{\sum_{k\neq i^\star(\mu)}\frac{1}{d_k^\star}}
\qquad \text{if }i<\tilde j(i),
\]
with all other \(\omega_{ab}=0\). If the maximizer defining \(d_i^\star\) is not unique, then the optimal solution is not unique.
\qed

\section{Proof of Theorem \ref{thm: combinatorial_structure}}
For any suboptimal policy \(i\neq i^\star(\mu)\), define $$\bar{\mathcal{C}}_i(\mu) := \left\{\lambda\in\mathcal{H}: \lambda(i,i^\star(\mu))>\frac{1}{2}\right\}.$$ Since $$\mathcal{C}_i=\left\{\lambda\in\mathcal{H}:\min_{j\neq i}\lambda(i,j)>\tfrac12\right\}\subset \bar{\mathcal{C}}_i(\mu),$$ we have, for every \(\omega\in\Omega\),
\begin{equation}
\label{eq:structured_relax}
\inf_{\lambda\in\mathcal C_i}
\sum_{a<b}\omega_{ab}\,d\bigl(\mu(a,b),\lambda(a,b)\bigr)
\ge
\inf_{\lambda\in\bar{\mathcal C}_i(\mu)}
\sum_{a<b}\omega_{ab}\,d\bigl(\mu(a,b),\lambda(a,b)\bigr).
\end{equation}

Under the Bradley-Terry parameterization, any $\lambda\in\mathcal{H}$ can be written as
$
\lambda(a,b)=\sigma(\tilde\theta^\top z_{ab})
$
for some parameter \(\tilde\theta\).
Hence, the relaxed inner problem becomes
\begin{equation}
\label{eq:relaxed_param_problem}
\inf_{\tilde\theta:\,\tilde\theta^\top z_{i\,i^\star(\mu)}>0}
\sum_{a<b}\omega_{ab}\,
d\Bigl(\sigma(\theta_\star^\top z_{ab}),\sigma(\tilde\theta^\top z_{ab})\Bigr).
\end{equation}
Because the objective is continuous in \(\tilde\theta\), the infimum over the open half-space above is equal to the infimum over its closure. Therefore,
\begin{equation}
\label{eq:relaxed_closed}
\inf_{\lambda\in\bar{\mathcal C}_i(\mu)}
\sum_{a<b}\omega_{ab}\,d\bigl(\mu(a,b),\lambda(a,b)\bigr)
=
\inf_{\tilde\theta:\,\tilde\theta^\top z_{i\,i^\star(\mu)}\ge 0}
\sum_{a<b}\omega_{ab}\,
d\Bigl(\sigma(\theta_\star^\top z_{ab}),\sigma(\tilde\theta^\top z_{ab})\Bigr).
\end{equation}

The KL divergence between two Bernoulli random variables can be represented as \citep{jun2021improved}
\begin{equation*}
\begin{aligned}
d(\sigma(\theta_\star^\top z_{ab}),\sigma(\tilde{\theta}^\top z_{ab}))
&=
\sigma(\theta_\star^\top z_{ab})
\log \frac{\sigma(\theta_\star^\top z_{ab})}{\sigma(\tilde{\theta}^\top z_{ab})}
+
\bigl(1-\sigma(\theta_\star^\top z_{ab})\bigr)
\log\frac{1-\sigma(\theta_\star^\top z_{ab})}{1-\sigma(\tilde{\theta}^\top z_{ab})}\\
&=
\sigma(\theta_\star^\top z_{ab})\,z_{ab}^\top(\theta_\star-\tilde{\theta})
+
\log\frac{1-\sigma(\theta_\star^\top z_{ab})}{1-\sigma(\tilde{\theta}^\top z_{ab})}\\
&=
\sigma(\theta_\star^\top z_{ab})\,z_{ab}^\top(\theta_\star-\tilde{\theta})
+
\log\bigl(1-\sigma(\theta_\star^\top z_{ab})\bigr)
-
\log\bigl(1-\sigma(\tilde{\theta}^\top z_{ab})\bigr).
\end{aligned}
\end{equation*}
Using \(\sigma^\prime(x)=\sigma(x)\bigl(1-\sigma(x)\bigr)\), we have
\begin{equation*}
\begin{aligned}
\nabla_{\tilde{\theta}} d(\sigma(\theta_\star^\top z_{ab}),\sigma(\tilde{\theta}^\top z_{ab}))
&=
-\sigma(\theta_\star^\top z_{ab})\,z_{ab}
+
\frac{\sigma^\prime(\tilde{\theta}^\top z_{ab})}{1-\sigma(\tilde{\theta}^\top z_{ab})}z_{ab}\\
&=
\bigl(\sigma(\tilde{\theta}^\top z_{ab})-\sigma(\theta_\star^\top z_{ab})\bigr)z_{ab}.
\end{aligned}
\end{equation*}
Therefore,
\begin{equation}
\label{eq:grad_sum_app}
\nabla_{\tilde{\theta}}
\sum_{a<b}\omega_{ab}\,
d(\sigma(\theta_\star^\top z_{ab}),\sigma(\tilde{\theta}^\top z_{ab}))
=
\sum_{a<b}\omega_{ab}
\bigl(\sigma(\tilde{\theta}^\top z_{ab})-\sigma(\theta_\star^\top z_{ab})\bigr)z_{ab}.
\end{equation}

Since the objective is convex in \(\tilde{\theta}\), the constraint is affine, and Slater's condition holds for the relaxed problem \eqref{eq:relaxed_closed}, the KKT conditions are necessary and sufficient. The Lagrangian is
\[
\mathcal{L}(\tilde{\theta},\psi)
=
\sum_{a<b}\omega_{ab}\,
d\Big(\sigma(\theta_\star^\top z_{ab}),\sigma(\tilde{\theta}^\top z_{ab})\Big)
-\psi\,\tilde{\theta}^\top z_{i\,i^\star(\mu)},
\qquad \psi\ge 0.
\]
The KKT conditions are
\begin{align}
\label{eq:kkt_stationarity_app}
\sum_{a<b}\omega_{ab}\bigl(\sigma(\tilde{\theta}^\top z_{ab})-\sigma(\theta_\star^\top z_{ab})\bigr)\,z_{ab}
-\psi\, z_{i\,i^\star(\mu)} &= 0,\\
\label{eq:kkt_primal_app}
\tilde{\theta}^\top z_{i\,i^\star(\mu)} &\ge 0,\\
\label{eq:kkt_dual_app}
\psi &\ge 0,\\
\label{eq:kkt_cs_app}
\psi\,\tilde{\theta}^\top z_{i\,i^\star(\mu)} &= 0.
\end{align}

Moreover, since $i\neq i^\star(\mu)$ implies $\mu\big(i,i^\star(\mu)\big)<\tfrac12$, we have
$\theta_\star^\top z_{i\,i^\star(\mu)}<0$.
Thus $\tilde{\theta}=\theta_\star$ is infeasible. Since the unconstrained objective is minimized at \(\theta_\star\), the optimum of the constrained problem must be attained on the boundary; hence, at any optimal solution,
\begin{equation}
\label{eq:active_app}
\tilde{\theta}^\top z_{i\,i^\star(\mu)}=0
.
\end{equation}

Next, define
\[
\alpha_{ab}(\tilde{\theta},\theta_\star)
:=
\begin{cases}
\displaystyle
\frac{\sigma(\tilde{\theta}^\top z_{ab})-\sigma(\theta_\star^\top z_{ab})}
{(\tilde{\theta}-\theta_\star)^\top z_{ab}},
& \text{if } (\tilde{\theta}-\theta_\star)^\top z_{ab}\neq 0,\\[1.2em]
\displaystyle
\sigma'(\theta_\star^\top z_{ab}),
& \text{if } (\tilde{\theta}-\theta_\star)^\top z_{ab}=0,
\end{cases}
\]
and
\[
G(\omega,\tilde{\theta},\theta_\star)
:=
\sum_{a<b}\omega_{ab}\,\alpha_{ab}(\tilde{\theta},\theta_\star)\,z_{ab}z_{ab}^\top.
\]
Then
\[
\sum_{a<b}\omega_{ab}\bigl(\sigma(\tilde{\theta}^\top z_{ab})-\sigma(\theta_\star^\top z_{ab})\bigr)z_{ab}
=
G(\omega,\tilde{\theta},\theta_\star)(\tilde{\theta}-\theta_\star).
\]
Hence, by \eqref{eq:kkt_stationarity_app},
\[
G(\omega,\tilde{\theta},\theta_\star)(\tilde{\theta}-\theta_\star)
=
\psi\,z_{i\,i^\star(\mu)}.
\]
In the local hard-instance regime, \(\tilde{\theta}\to\theta_\star\), and hence
\(G(\omega,\tilde{\theta},\theta_\star)\to H(\theta_\star,\omega)\).
At the maximizing \(\omega\) in the outer supremum, \(H(\theta_\star,\omega)\) is non-singular; therefore, for \(\tilde{\theta}\) sufficiently close to \(\theta_\star\), \(G(\omega,\tilde{\theta},\theta_\star)\) is also non-singular.
\begin{equation}
\label{eq:theta_diff_G_app}
\tilde{\theta}-\theta_\star
=
\psi\,G(\omega,\tilde{\theta},\theta_\star)^{-1}z_{i\,i^\star(\mu)}.
\end{equation}
Using \eqref{eq:active_app}, we obtain
\[
0
=
\tilde{\theta}^\top z_{i\,i^\star(\mu)}
=
\theta_\star^\top z_{i\,i^\star(\mu)}
+
\psi\,z_{i\,i^\star(\mu)}^\top
G(\omega,\tilde{\theta},\theta_\star)^{-1}
z_{i\,i^\star(\mu)},
\]
and thus
\[
\psi
=
-\frac{\theta_\star^\top z_{i\,i^\star(\mu)}}
{\|z_{i\,i^\star(\mu)}\|^2_{G(\omega,\tilde{\theta},\theta_\star)^{-1}}}.
\]
Substituting this into \eqref{eq:theta_diff_G_app} yields
\begin{equation}
\label{eq:tilde_theta_G_app}
\tilde{\theta}
=
\theta_\star
-
\frac{\theta_\star^\top z_{i\,i^\star(\mu)}}
{\|z_{i\,i^\star(\mu)}\|^2_{G(\omega,\tilde{\theta},\theta_\star)^{-1}}}
\,G(\omega,\tilde{\theta},\theta_\star)^{-1}z_{i\,i^\star(\mu)}.
\end{equation}

We next specialize this expression to the local hard-instance regime. By the integral form of Taylor's theorem,
\begin{equation*}
\begin{aligned}
d(\sigma(\theta_\star^\top z_{ab}),\sigma(\tilde{\theta}^\top z_{ab}))
&=
\left(\int_0^1
(1-t)\,
\sigma'\bigl(\theta_\star^\top z_{ab}
+t(\tilde{\theta}-\theta_\star)^\top z_{ab}\bigr)\,dt\right)
\bigl((\tilde{\theta}-\theta_\star)^\top z_{ab}\bigr)^2\\
&=
(\tilde{\theta}-\theta_\star)^\top
\left[
\left(\int_0^1
(1-t)\,
\sigma'\bigl(\theta_\star^\top z_{ab}
+t(\tilde{\theta}-\theta_\star)^\top z_{ab}\bigr)\,dt\right)
z_{ab}z_{ab}^\top
\right]
(\tilde{\theta}-\theta_\star).
\end{aligned}
\end{equation*}
Therefore,
\begin{equation*}
\sum_{a<b}\omega_{ab}\,
d(\sigma(\theta_\star^\top z_{ab}),\sigma(\tilde{\theta}^\top z_{ab}))
=
(\tilde{\theta}-\theta_\star)^\top
\left[
\sum_{a<b}\omega_{ab}
\left(\int_0^1
(1-t)\,
\sigma'\bigl(\theta_\star^\top z_{ab}
+t(\tilde{\theta}-\theta_\star)^\top z_{ab}\bigr)\,dt\right)
z_{ab}z_{ab}^\top
\right]
(\tilde{\theta}-\theta_\star).
\end{equation*}
In the local regime \(\tilde{\theta}\to\theta_\star\), we have
\[
\int_0^1
(1-t)\,
\sigma'\bigl(\theta_\star^\top z_{ab}
+t(\tilde{\theta}-\theta_\star)^\top z_{ab}\bigr)\,dt
\;\to\;
\int_0^1(1-t)\,\sigma'(\theta_\star^\top z_{ab})\,dt
=
\frac12\,\sigma'(\theta_\star^\top z_{ab}),
\]
and
\[
\alpha_{ab}(\tilde{\theta},\theta_\star)\to \sigma'(\theta_\star^\top z_{ab}).
\]
Hence, the objective admits the quadratic approximation
\[
\sum_{a<b}\omega_{ab}\,
d\Bigl(\sigma(\theta_\star^\top z_{ab}),\sigma(\tilde{\theta}^\top z_{ab})\Bigr)
=
\frac12(\tilde{\theta}-\theta_\star)^\top
H(\theta_\star,\omega)
(\tilde{\theta}-\theta_\star)
+
o\bigl(\|\tilde{\theta}-\theta_\star\|_2^2\bigr),
\]
where
\[
H(\theta_\star,\omega)
:=
\sum_{a<b}\omega_{ab}\sigma'(\theta_\star^\top z_{ab})\,z_{ab}z_{ab}^\top.
\]
Moreover,
\[
G(\omega,\tilde{\theta},\theta_\star)
=
H(\theta_\star,\omega)+o(1).
\]
Thus, \eqref{eq:tilde_theta_G_app} reduces to
\[
\tilde{\theta}
=
\theta_\star
-
\frac{\theta_\star^\top z_{i\,i^\star(\mu)}}
{\|z_{i\,i^\star(\mu)}\|^2_{H(\theta_\star,\omega)^{-1}}}
\,H(\theta_\star,\omega)^{-1}z_{i\,i^\star(\mu)}
+
o\bigl(\|\tilde{\theta}-\theta_\star\|_2\bigr).
\]
Plugging this into the quadratic approximation yields
\[
\inf_{\tilde{\theta}:\,\tilde{\theta}^\top z_{i\,i^\star(\mu)}\ge 0}
\sum_{a<b}\omega_{ab}\,
d\Bigl(\sigma(\theta_\star^\top z_{ab}),\sigma(\tilde{\theta}^\top z_{ab})\Bigr)
=
\frac12\,
\frac{\bigl(\theta_\star^\top z_{i\,i^\star(\mu)}\bigr)^2}
{\|z_{i\,i^\star(\mu)}\|^2_{H(\theta_\star,\omega)^{-1}}}
+
o\!\left(\|\tilde{\theta}-\theta_\star\|_2^2\right).
\]
Combining this with \eqref{eq:structured_relax}, we obtain
\[
\inf_{\lambda\in\mathcal C_i}
\sum_{a<b}\omega_{ab}\,d\bigl(\mu(a,b),\lambda(a,b)\bigr)
\ge
\frac12\,
\frac{\bigl(\theta_\star^\top z_{i\,i^\star(\mu)}\bigr)^2}
{\|z_{i\,i^\star(\mu)}\|^2_{H(\theta_\star,\omega)^{-1}}}
\]
under the local hard-instance approximation. Since this holds for every \(i\neq i^\star(\mu)\), taking the minimum over \(i\neq i^\star(\mu)\) and then the supremum over \(\omega\in\Omega\) yields
\[
\mathcal{T}^\star(\mu)^{-1}
\ge
\sup_{\omega\in\Omega}
\min_{i\neq i^\star(\mu)}
\frac12\,
\frac{\bigl(\theta_\star^\top z_{i\,i^\star(\mu)}\bigr)^2}
{\|z_{i\,i^\star(\mu)}\|^2_{H(\theta_\star,\omega)^{-1}}}
=
\mathcal{U}^\star(\mu)^{-1}.
\]
Equivalently,
\[
\mathcal{T}^\star(\mu)\le \mathcal{U}^\star(\mu).
\]
This completes the proof.
\qed

\section{Proof of Lemma \ref{lemma: param_concentration}}
To prove Lemma \ref{lemma: param_concentration}, we first establish several auxiliary lemmas, stated and proved below.

\begin{lemma}[Uniform Self-normalized Bound]
\label{lem:St_bound_lambda0_is_lambda_t0}
Let $\{z_t\}_{t\ge1}\subset \mathbb{R}^d$ be presictable with respect to $\{\mathcal{F}_t\}_{t\ge 0}$, i.e., $z_t\in\mathcal{F}_{t-1}$ for all $t\ge1$.
For each $s\ge 1$, define
\[
\varepsilon_s := D_s-\mu(\theta_*^\top z_s),\qquad
S_t := \sum_{s=1}^{t-1}\varepsilon_s z_s,\qquad
V_t := \sum_{s=1}^{t-1} z_s z_s^\top .
\]
Suppose there exist $t_0\ge 1$ and a nondecreasing function $\lambda(\cdot)$ such that
$\lambda_{\min}(V_t)\ge \lambda(t)$ for all $t\ge t_0$.
Define the constant $\lambda_0 := \lambda(t_0) > 0$.
Then for any $\delta\in(0,1)$, with probability at least $1-\delta$, for all $t\ge t_0$,
\[
\|S_t\|_{V_t^{-1}}^2
\;\le\;
\Big(1+\frac{\lambda_0}{\lambda(t)}\Big)
\left(
2\log\frac{1}{\delta}
+
\log \frac{\det(V_t)}{\lambda(t)^d}
+
d\log \Big(1+\frac{\lambda_0}{\lambda(t)}\Big)
+
d\log \frac{\lambda(t)}{\lambda_0}
\right).
\]
\end{lemma}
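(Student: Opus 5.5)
The plan is to reduce the claim to the standard self-normalized martingale inequality of Abbasi-Yadkori et al.\ (method of mixtures), applied with the fixed regularizer $\lambda_0 I_d$. The mixed quantity is then converted back to the unregularized norm $\|S_t\|_{V_t^{-1}}$ using the eigenvalue lower bound $\lambda_{\min}(V_t)\ge\lambda(t)\ge\lambda_0$ for $t\ge t_0$.

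\textbf{Step 1 (sub-Gaussianity).} Conditionally on $\mathcal F_{s-1}$, $\varepsilon_s=D_s-\mu(\theta_\star^\top z_s)$ is a centered variable supported in an interval of length $1$. By Hoeffding's lemma it is conditionally $1/2$-sub-Gaussian, hence in particular $1$-sub-Gaussian. Since $z_s$ is predictable, for every fixed $x\in\mathbb R^d$ the process
\[
M_t(x)=\exp\bigl(x^\top S_t-\tfrac12\|x\|_{V_t}^2\bigr)
\]
is a nonnegative supermartingale with $\mathbb E M_t(x)\le 1$.

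\textbf{Step 2 (mixture and time-uniformity).} Integrate $M_t(x)$ against the Gaussian prior $\mathcal N(0,\lambda_0^{-1}I_d)$. The Gaussian integral gives
\[
\bar M_t=\Bigl(\tfrac{\det(\lambda_0 I)}{\det(V_t+\lambda_0 I)}\Bigr)^{1/2}\exp\bigl(\tfrac12\|S_t\|^2_{(V_t+\lambda_0 I)^{-1}}\bigr),
\]
which is again a supermartingale with mean at most $1$. Ville's maximal inequality (equivalently, the stopping-time argument of Abbasi-Yadkori) then yields, with probability at least $1-\delta$ simultaneously for all $t$,
\[
\|S_t\|^2_{(V_t+\lambda_0 I)^{-1}}\le 2\log\tfrac1\delta+\log\tfrac{\det(V_t+\lambda_0 I)}{\lambda_0^d}.
\]

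\textbf{Step 3 (removing the regularizer).} Fix $t\ge t_0$. Since $\lambda_0 I\preceq(\lambda_0/\lambda(t))V_t$, we get $V_t+\lambda_0 I\preceq(1+\lambda_0/\lambda(t))V_t$, and therefore
\[
\|S_t\|^2_{V_t^{-1}}\le\bigl(1+\tfrac{\lambda_0}{\lambda(t)}\bigr)\|S_t\|^2_{(V_t+\lambda_0 I)^{-1}}.
\]
The same matrix inequality gives $\det(V_t+\lambda_0 I)\le(1+\lambda_0/\lambda(t))^d\det V_t$. Writing $\lambda_0^{-d}=\lambda(t)^{-d}(\lambda(t)/\lambda_0)^d$, we obtain
\[
\log\tfrac{\det(V_t+\lambda_0I)}{\lambda_0^d}\le\log\tfrac{\det V_t}{\lambda(t)^d}+d\log\bigl(1+\tfrac{\lambda_0}{\lambda(t)}\bigr)+d\log\tfrac{\lambda(t)}{\lambda_0}.
\]
Combining these two bounds with Step 2 gives exactly the stated inequality.

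\textbf{Main obstacle.} The delicate point is the time-uniformity combined with adaptivity. The prior must be fixed in advance and cannot depend on $t$; this is why $\lambda_0=\lambda(t_0)$ is chosen as a deterministic constant rather than $\lambda(t)$. The condition $\lambda_{\min}(V_t)\ge\lambda(t)$ is used only after the uniform event is established, as a pathwise comparison on $t\ge t_0$. This requires $\lambda(\cdot)$ to be deterministic and the eigenvalue bound to hold almost surely, as assumed. I will therefore verify carefully that the supermartingale property survives the predictable choice of $z_s$ and that Ville's inequality applies to the mixture without any union bound over $t$.
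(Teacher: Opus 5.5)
Your proposal is correct and follows essentially the same route as the paper: a Hoeffding-based exponential supermartingale, a Gaussian mixture with the fixed prior $\mathcal{N}(0,\lambda_0^{-1}I_d)$, and Ville's inequality for time-uniformity. The regularizer is then removed via the Loewner comparison $V_t+\lambda_0 I_d\preceq(1+\lambda_0/\lambda(t))V_t$, applied to both the norm and the determinant.
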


\textit{Proof of Lemma \ref{lem:St_bound_lambda0_is_lambda_t0}.}
Since $D_s\in\{0,1\}$ and $\mathbb{E}[D_s\mid\mathcal{F}_{s-1}]=\mu(\theta_*^\top z_s)$, we have
$\varepsilon_s\in[-1,1]$ and $\mathbb{E}[\varepsilon_s\mid\mathcal{F}_{s-1}]=0$ a.s.
By Hoeffding's lemma, for any $\alpha\in\mathbb{R}$,
\begin{equation}
\label{eq:hoeffding_eps_lambda0_is_t0}
\mathbb{E}\big[\exp(\alpha\varepsilon_s)\mid\mathcal{F}_{s-1}\big]
\le \exp(\alpha^2/2)\qquad\text{a.s.}
\end{equation}

Fix $u\in\mathbb{R}^d$ and define, for $t\ge 1$,
\[
M_t(u)
:=
\exp\left(
u^\top S_t-\frac12 u^\top V_t\,u
\right).
\]
Using $S_{t+1}=S_t+\varepsilon_t z_t$ and $V_{t+1}=V_t+z_t z_t^\top$,
\begin{align*}
\mathbb{E}[M_{t+1}(u)\mid\mathcal{F}_{t-1}]
&=
M_t(u)\cdot
\mathbb{E}\left[
\exp\Big(u^\top z_t\,\varepsilon_t-\frac12 (u^\top z_t)^2\Big)
\Bigm|\mathcal{F}_{t-1}
\right]\\
&\le
M_t(u)\cdot
\exp\left(-\frac12 (u^\top z_t)^2\right)
\cdot
\mathbb{E}\left[
\exp\big((u^\top z_t)\varepsilon_t\big)
\Bigm|\mathcal{F}_{t-1}
\right]\\
&\le
M_t(u),
\end{align*}
where the last inequality uses \eqref{eq:hoeffding_eps_lambda0_is_t0} with $\alpha=u^\top z_t$.
Thus, for each fixed $u$, $\{M_t(u)\}_{t\ge 1}$ is a nonnegative supermartingale with respect to the shifted filtration $\{\mathcal{F}_{t-1}\}_{t\ge 1}$.

Let $\phi_0(u)$ be the density of $\mathcal{N}(0,\lambda_0^{-1}I_d)$:
\[
\phi_0(u)=\frac{\lambda_0^{d/2}}{(2\pi)^{d/2}}
\exp\left(-\frac{\lambda_0}{2}\|u\|_2^2\right).
\]
Define the mixture process
\[
\widetilde M_t := \int_{\mathbb{R}^d} M_t(u)\,\phi_0(u)\,du.
\]
By Tonelli's theorem (nonnegativity) and the supermartingale property of $M_t(u)$ for each fixed $u$,
$\{\widetilde M_t\}_{t\ge 1}$ is a nonnegative supermartingale and satisfies
$\mathbb{E}[\widetilde M_t]\le \mathbb{E}[\widetilde M_1]=1$.

Moreover, completing the square yields
\begin{equation}
\label{eq:mixture_closed_form_lambda0_is_t0}
\widetilde M_t
=
\lambda_0^{d/2}\det(V_t+\lambda_0 I_d)^{-1/2}
\exp\left(\frac12 \|S_t\|_{(V_t+\lambda_0 I_d)^{-1}}^2\right).
\end{equation}

Ville's inequality implies
\[
\mathbb{P}\Big(\sup_{t\ge 1}\widetilde M_t \ge \delta^{-1}\Big)\le \delta.
\]
Hence, on an event of probability at least $1-\delta$, for all $t\ge 1$ we have
$\widetilde M_t<\delta^{-1}$, and therefore by \eqref{eq:mixture_closed_form_lambda0_is_t0},
\begin{equation}
\label{eq:base_ineq}
\|S_t\|_{(V_t+\lambda_0 I_d)^{-1}}^2
\le
2\log\frac{1}{\delta}
+
\log \frac{\det(V_t+\lambda_0 I_d)}{\lambda_0^d},
\qquad \forall t\ge 1.
\end{equation}

Fix any $t\ge t_0$. Since $\lambda_{\min}(V_t)\ge \lambda(t)$, we have $V_t\succeq \lambda(t)I_d$, and thus
\begin{equation}
\label{eq:PSD_step}
V_t+\lambda_0 I_d \preceq \Big(1+\frac{\lambda_0}{\lambda(t)}\Big)V_t.
\end{equation}
Taking inverses reverses the Loewner order:
\[
V_t^{-1}
\preceq
\Big(1+\frac{\lambda_0}{\lambda(t)}\Big)(V_t+\lambda_0 I_d)^{-1}.
\]
Therefore,
\begin{equation}
\label{eq:norm_compare}
\|S_t\|_{V_t^{-1}}^2
\le
\Big(1+\frac{\lambda_0}{\lambda(t)}\Big)\|S_t\|_{(V_t+\lambda_0 I_d)^{-1}}^2.
\end{equation}

Moreover, \eqref{eq:PSD_step} implies (by eigenvalue monotonicity) that
\begin{equation}
\label{eq:det_step}
\det(V_t+\lambda_0 I_d)
\le
\det\Big(\Big(1+\frac{\lambda_0}{\lambda(t)}\Big)V_t\Big)
=
\Big(1+\frac{\lambda_0}{\lambda(t)}\Big)^d \det(V_t).
\end{equation}
Hence,
\begin{align}
\label{eq:logdet_expand_userstyle}
\log \frac{\det (V_t+\lambda_0 I_d)}{\lambda^d_0}
&\le
\log \frac{\det(V_t)}{\lambda(t)^d}
+
d\log \Big(1+\frac{\lambda_0}{\lambda(t)}\Big)
+
d\log \frac{\lambda(t)}{\lambda_0}.
\end{align}

Combining \eqref{eq:base_ineq}, \eqref{eq:norm_compare}, and \eqref{eq:logdet_expand_userstyle}
yields the claimed result for all $t\ge t_0$.
\qed

\begin{lemma}
\label{lemma: bound g}
Assume there exist $t_0\ge 1$ and a nondecreasing function $\lambda(\cdot)$ such that
$\lambda_{\min}(V_t)\ge \lambda(t)$ for all $t\ge t_0$, and define $\lambda_0:=\lambda(t_0)>0$.
Then for any $\delta\in(0,1)$, with probability at least $1-\delta$, for all $t\ge t_0$,
\begin{equation}
\label{eq:param_conc_main}
\|g_t(\hat\zeta_t)-g_t(\theta_\star)\|_{H_t(\theta_\star)^{-1}}
\le
\sqrt{m_0^{-1}\Psi_t(\delta)}
+
\sqrt{\lambda_t}B,
\end{equation}
where
\[
m_0:=\min_{|u|\le BL}\sigma'(u)>0,
\qquad
\Psi_t(\delta):=
\Big(1+\frac{\lambda_0}{\lambda(t)}\Big)
\left(
2\log\frac{1}{\delta}
+
\log \frac{\det(V_t)}{\lambda(t)^d}
+
d\log \Big(1+\frac{\lambda_0}{\lambda(t)}\Big)
+
d\log \frac{\lambda(t)}{\lambda_0}
\right).
\]
\end{lemma}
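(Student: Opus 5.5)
Since $\hat\zeta_t$ maximizes the strictly concave regularized log-likelihood, its first-order condition is $\sum_{s=1}^{t-1}D_s z_s = g_t(\hat\zeta_t)$. On the other hand, by the definition of $g_t$,
\[
g_t(\theta_\star)=\sum_{s=1}^{t-1}\sigma(\theta_\star^\top z_s)z_s+\lambda_t\theta_\star .
\]
Subtracting the two expressions gives the exact identity
\[
g_t(\hat\zeta_t)-g_t(\theta_\star)=\sum_{s=1}^{t-1}\bigl(D_s-\sigma(\theta_\star^\top z_s)\bigr)z_s-\lambda_t\theta_\star=S_t-\lambda_t\theta_\star ,
\]
with $S_t$ as in Lemma~\ref{lem:St_bound_lambda0_is_lambda_t0}; there $\mu(\theta_\star^\top z_s)$ is just $\sigma(\theta_\star^\top z_s)$. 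The triangle inequality in the $H_t(\theta_\star)^{-1}$ norm then reduces the claim to two bounds, one for $\|S_t\|_{H_t(\theta_\star)^{-1}}$ and one for $\lambda_t\|\theta_\star\|_{H_t(\theta_\star)^{-1}}$.

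\textbf{Noise term.} By (A2)--(A3), $|\theta_\star^\top z_s|\le BL$, so $\sigma'(\theta_\star^\top z_s)\ge m_0>0$. Hence
\[
H_t(\theta_\star)\succeq m_0 V_t+\lambda_t I_d\succeq m_0 V_t ,
\]
and inverting (with $V_t\succ0$ for $t\ge t_0$, since $\lambda_{\min}(V_t)\ge\lambda(t)>0$) gives $H_t(\theta_\star)^{-1}\preceq m_0^{-1}V_t^{-1}$. Therefore $\|S_t\|^2_{H_t(\theta_\star)^{-1}}\le m_0^{-1}\|S_t\|^2_{V_t^{-1}}$. Lemma~\ref{lem:St_bound_lambda0_is_lambda_t0} bounds $\|S_t\|^2_{V_t^{-1}}$ by $\Psi_t(\delta)$ simultaneously for all $t\ge t_0$ on a single event of probability at least $1-\delta$. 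On that event, $\|S_t\|_{H_t(\theta_\star)^{-1}}\le\sqrt{m_0^{-1}\Psi_t(\delta)}$ for all $t\ge t_0$.

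\textbf{Regularization term.} This term is deterministic. Since $H_t(\theta_\star)\succeq\lambda_t I_d$, we have $H_t(\theta_\star)^{-1}\preceq\lambda_t^{-1}I_d$. Hence
\[
\lambda_t\|\theta_\star\|_{H_t(\theta_\star)^{-1}}\le\lambda_t\cdot\lambda_t^{-1/2}\|\theta_\star\|_2\le\sqrt{\lambda_t}\,B ,
\]
using (A2). Adding the two bounds gives \eqref{eq:param_conc_main} on the same event, uniformly over $t\ge t_0$.

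There is no real obstacle; the proof is bookkeeping. The points to check are that $\hat\zeta_t$ exists and satisfies the first-order condition exactly, which holds because the objective is strictly concave and coercive, and that the uniformity in $t$ is inherited from the single Ville-type event in Lemma~\ref{lem:St_bound_lambda0_is_lambda_t0}. The only care needed is the direction of the Loewner inequalities when inverting $H_t(\theta_\star)$.
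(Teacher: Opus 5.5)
Your proposal is correct and follows the paper's proof essentially step for step. Both use the score identity $g_t(\hat\zeta_t)-g_t(\theta_\star)=S_t-\lambda_t\theta_\star$ and the triangle inequality in the $H_t(\theta_\star)^{-1}$-norm. The noise term is then bounded via $H_t(\theta_\star)\succeq m_0V_t$ together with the uniform self-normalized bound of Lemma~\ref{lem:St_bound_lambda0_is_lambda_t0}, and the bias term via $H_t(\theta_\star)\succeq\lambda_tI_d$.
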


\textit{Proof of Lemma \ref{lemma: bound g}.} By definition, $\hat\zeta_t$ maximizes $\mathcal{L}_t(\theta)-\frac{\lambda_t}{2}\|\theta\|_2^2$ over $\mathbb{R}^d$.
Hence, it satisfies the first-order optimality condition
\[
\nabla_\theta\left(\mathcal{L}_t(\theta)-\frac{\lambda_t}{2}\|\theta\|_2^2\right)\Big|_{\theta=\hat\zeta_t}=0.
\]
Using
\[
\nabla_\theta \mathcal{L}_t(\theta)=\sum_{s=1}^{t-1}\big(D_s-\sigma(\theta^\top z_s)\big)z_s,
\]
we obtain
\begin{equation}
\label{eq:score_eqn}
\sum_{s=1}^{t-1}D_s z_s
=
\sum_{s=1}^{t-1}\sigma(\hat\zeta_t^\top z_s)z_s+\lambda_t\hat\zeta_t
=
g_t(\hat\zeta_t).
\end{equation}

By adding and subtracting $\sum_{s=1}^{t-1}\sigma(\theta_\star^\top z_s)z_s$,
and defining $\varepsilon_s:=D_s-\sigma(\theta_\star^\top z_s)$, we have
\[
\sum_{s=1}^{t-1}D_s z_s
=
\sum_{s=1}^{t-1}\sigma(\theta_\star^\top z_s)z_s
+
S_t.
\]
Thus, using \eqref{eq:score_eqn} and the definition of $g_t(\theta_\star)$,
\begin{equation}
\label{eq:g_diff_main}
g_t(\hat\zeta_t)-g_t(\theta_\star)
=
S_t-\lambda_t\theta_\star.
\end{equation}

Taking the $H_t(\theta_\star)^{-1}$-norm and using the triangle inequality gives
\begin{equation}
\label{eq:key_triangle}
\|g_t(\hat\zeta_t)-g_t(\theta_\star)\|_{H_t(\theta_\star)^{-1}}
\le
\|S_t\|_{H_t(\theta_\star)^{-1}}
+
\|\lambda_t\theta_\star\|_{H_t(\theta_\star)^{-1}}.
\end{equation}
Since $H_t(\theta_\star)\succeq \lambda_t I_d$, we have
\begin{equation}
\label{eq:lambda_bias}
\|\lambda_t\theta_\star\|_{H_t(\theta_\star)^{-1}}
\le
\|\lambda_t\theta_\star\|_{(\lambda_t I_d)^{-1}}
=
\sqrt{\lambda_t}B.
\end{equation}

By compactness of $\Theta$ and the assumption $\|z_s\|_2\le L$, we have
$|\theta^\top z_s|\le \|\theta\|_2\|z_s\|_2\le BL$ for all $\theta\in\Theta$.
Since $\sigma'(u)=\sigma(u)(1-\sigma(u))$ is continuous and strictly positive on $[-BL,BL]$, define
\[
m_0:=\min_{|u|\le BL}\sigma'(u)>0.
\]
Then
\[
\sum_{s=1}^{t-1}\sigma'(\theta_\star^\top z_s) z_s z_s^\top
\succeq
m_0 \sum_{s=1}^{t-1} z_s z_s^\top
=
m_0 V_t,
\]
and therefore
\begin{equation}
\label{eq:H_lower_m0}
H_t(\theta_\star)\succeq m_0 V_t.
\end{equation}
Taking inverses yields
\begin{equation}
\label{eq:Hinv_upper_m0}
H_t(\theta_\star)^{-1}\preceq \frac{1}{m_0}V_t^{-1},
\end{equation}
so
\begin{equation}
\label{eq:St_H_by_V}
\|S_t\|_{H_t(\theta_\star)^{-1}}^2
\le
\frac{1}{m_0}\|S_t\|_{V_t^{-1}}^2,
\qquad\text{i.e.,}\qquad
\|S_t\|_{H_t(\theta_\star)^{-1}}
\le
\frac{1}{\sqrt{m_0}}\|S_t\|_{V_t^{-1}}.
\end{equation}

By Lemma~\ref{lem:St_bound_lambda0_is_lambda_t0},
with probability at least $1-\delta$, for all $t\ge t_0$,
\begin{equation}
\label{eq:St_V_bound}
\|S_t\|_{V_t^{-1}}^2
\le
\Psi_t(\delta).
\end{equation}
Combining \eqref{eq:key_triangle}, \eqref{eq:lambda_bias}, \eqref{eq:St_H_by_V}, and \eqref{eq:St_V_bound},
we obtain \eqref{eq:param_conc_main}.
\qed

To continue the analysis, we introduce the following notation.
For $\theta_1,\theta_2\in\Theta$ and $z\in\mathbb{R}^d$, define
\[
\alpha(z,\theta_1,\theta_2)
:=\int_0^1 \sigma'\big(z^\top(\theta_2+v(\theta_1-\theta_2))\big)\,dv>0.
\]
Then the mean-value theorem implies
\[
\sigma(z^\top\theta_1)-\sigma(z^\top\theta_2)
=\alpha(z,\theta_1,\theta_2)\, z^\top(\theta_1-\theta_2).
\]
Recall $g_t(\theta)=\sum_{s=1}^{t-1}\sigma(\theta^\top z_s)z_s+\lambda_t\theta$.
Define the (regularized) matrix
\[
G_t(\theta_1,\theta_2)
:=\sum_{s=1}^{t-1}\alpha(z_s,\theta_1,\theta_2)\, z_s z_s^\top + \lambda_t I_d.
\]
Then,
\[
g_t(\theta_1)-g_t(\theta_2)=G_t(\theta_1,\theta_2)(\theta_1-\theta_2),
\]
and since $G_t(\theta_1,\theta_2)\succeq \lambda_t I_d\succ 0$,
\[
\|\theta_1-\theta_2\|_{G_t(\theta_1,\theta_2)}
=
\|g_t(\theta_1)-g_t(\theta_2)\|_{G_t(\theta_1,\theta_2)^{-1}}.
\]

\begin{lemma}[Lemma 10 \citealt{faury2020improved}]
\label{lemma: tranformation}
For all $\theta_1,\theta_2\in\Theta$ and all $t\ge 1$,
\[
G_t(\theta_1,\theta_2)\succeq (1+2LB)^{-1}H_t(\theta_1)
\qquad\text{and}\qquad
G_t(\theta_1,\theta_2)\succeq (1+2LB)^{-1}H_t(\theta_2).
\]
\end{lemma}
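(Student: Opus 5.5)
The plan is to reduce the matrix inequality to a scalar inequality for each summand, using the generalized self-concordance of the logistic function. Since $G_t(\theta_1,\theta_2)$ and $H_t(\theta_1)$ share the same structure, namely sums of $z_sz_s^\top$ with scalar weights plus $\lambda_t I_d$, it suffices to establish two facts. First, for every $s$,
\[
\alpha(z_s,\theta_1,\theta_2)\ \ge\ \frac{\sigma'(\theta_1^\top z_s)}{1+2LB}.
\]
Second, $\lambda_t I_d\succeq (1+2LB)^{-1}\lambda_t I_d$, which is trivial because $1+2LB\ge 1$. Multiplying each scalar inequality by the positive semidefinite matrix $z_sz_s^\top$ and summing over $s$ then gives $G_t(\theta_1,\theta_2)\succeq(1+2LB)^{-1}H_t(\theta_1)$.

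For the scalar inequality, set $a:=z^\top\theta_1$, $b:=z^\top\theta_2$ and $\Delta:=a-b$. By (A2) and (A3), $|\Delta|\le \|z\|_2\|\theta_1-\theta_2\|_2\le 2LB$. A change of variables $v\mapsto 1-v$ rewrites
\[
\alpha(z,\theta_1,\theta_2)=\int_0^1\sigma'(a-v\Delta)\,dv .
\]
The key property of the logistic function is
\[
\bigl|(\log\sigma')'(u)\bigr|=|1-2\sigma(u)|\le 1,
\qquad\text{that is,}\qquad |\sigma''|\le\sigma'.
\]
Integrating this bound gives $\sigma'(a-v\Delta)\ge \sigma'(a)\,e^{-v|\Delta|}$. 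Hence
\[
\alpha\ \ge\ \sigma'(a)\,\frac{1-e^{-|\Delta|}}{|\Delta|}\ \ge\ \frac{\sigma'(a)}{1+|\Delta|}\ \ge\ \frac{\sigma'(a)}{1+2LB}.
\]
The middle inequality uses $1-e^{-x}\ge x/(1+x)$ for $x\ge 0$, which is equivalent to $e^{x}\ge 1+x$. When $\Delta=0$, $\alpha=\sigma'(a)$ and the bound holds directly.

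The second claim follows by symmetry. Since $\alpha(z,\theta_1,\theta_2)=\alpha(z,\theta_2,\theta_1)$, the same argument expanded around $b$ instead of $a$ gives $\alpha\ge \sigma'(b)/(1+2LB)$, and therefore $G_t(\theta_1,\theta_2)\succeq(1+2LB)^{-1}H_t(\theta_2)$.

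The only genuinely nontrivial step is the self-concordance bound $\sigma'(a+s)\ge\sigma'(a)e^{-|s|}$. It follows from a one-line Gr\"onwall argument on $\log\sigma'$, so I do not anticipate a real obstacle. The remaining care is to check that the diameter bound $|\Delta|\le 2LB$ uses only $\theta_1,\theta_2\in\Theta$ together with (A2) and (A3).
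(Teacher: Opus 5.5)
Your proof is correct and takes essentially the same approach as the source: the paper gives no proof of its own and simply cites Lemma 10 of Faury et al.\ (2020). That lemma is proved exactly as you do, via the self-concordance bound $|\sigma''|\le\sigma'$ applied summand by summand (giving $\alpha(z,\theta_1,\theta_2)\ge\sigma'(z^\top\theta_1)/(1+|z^\top(\theta_1-\theta_2)|)$), together with $|z^\top(\theta_1-\theta_2)|\le 2LB$, the trivial treatment of the $\lambda_t I_d$ term, and the symmetry $\alpha(z,\theta_1,\theta_2)=\alpha(z,\theta_2,\theta_1)$ for the second inequality.
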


\textit{Proof of Lemma \ref{lemma: param_concentration}.}  
We first prove that for all $t\ge t_0$,
\begin{equation}
\label{eq:proj_reduce_key_shifted}
\|\hat\theta_t-\theta_\star\|_2
\le
\frac{2(1+2LB)}{\sqrt{m_0\,\lambda_{\min}(V_t)}}
\ \|g_t(\hat\zeta_t)-g_t(\theta_\star)\|_{H_t(\theta_\star)^{-1}}.
\end{equation}
Since $\theta_\star\in\Theta$, the definition of $\hat\theta_t$ gives
\[
\|g_t(\hat\theta_t)-g_t(\hat\zeta_t)\|_{H_t(\hat\theta_t)^{-1}}
\le
\|g_t(\theta_\star)-g_t(\hat\zeta_t)\|_{H_t(\theta_\star)^{-1}}
=
\|g_t(\hat\zeta_t)-g_t(\theta_\star)\|_{H_t(\theta_\star)^{-1}}.
\]
By Lemma \ref{lemma: tranformation}, for $(\theta_1,\theta_2)=(\hat\theta_t,\hat\zeta_t)$,
\[
\|\hat\theta_t-\hat\zeta_t\|_{H_t(\hat\theta_t)}
\le (1+2LB)\,\|g_t(\hat\theta_t)-g_t(\hat\zeta_t)\|_{H_t(\hat\theta_t)^{-1}}.
\]
Moreover, since $\hat\theta_t\in\Theta$ and $\|z_s\|_2\le L$, we have $H_t(\hat\theta_t)\succeq m_0V_t$, hence
\begin{equation*}
\|\hat\theta_t-\hat\zeta_t\|_2
\le
\frac{1}{\sqrt{\lambda_{\min}(H_t(\hat\theta_t))}}\|\hat\theta_t-\hat\zeta_t\|_{H_t(\hat\theta_t)}
\le
\frac{1+2LB}{\sqrt{m_0\,\lambda_{\min}(V_t)}}
\|g_t(\hat\zeta_t)-g_t(\theta_\star)\|_{H_t(\theta_\star)^{-1}}.
\end{equation*}
Applying the same bound to $\|\hat\zeta_t-\theta_\star\|_2$ and using the triangle inequality yields \eqref{eq:proj_reduce_key_shifted}.
Using $\lambda_{\min}(V_t)\ge c t^{1/2}$ for $t\ge t_0$, \eqref{eq:proj_reduce_key_shifted} and
Lemma \ref{lemma: bound g} imply that with probability at least $1-\delta$, for all $t\ge t_0$,
\begin{equation}
\label{eq:hp_euclid_before_det}
\|\hat\theta_t-\theta_\star\|_2
\le
C_b\,t^{-1/4}\Big(\sqrt{\Psi_t(\delta)}+\sqrt{\lambda_t}\Big),
\end{equation}
for a constant $C_b>0$ absorbing $(1+2LB),m_0,B,c$.
Since $V_t=\sum_{s=1}^{t-1}z_s z_s^\top\succeq 0$,
\[
\mathrm{tr}(V_t)=\sum_{s=1}^{t-1}\|z_s\|_2^2 \le (t-1)L^2 \le tL^2.
\]
Let $\{\nu_i\}_{i=1}^d$ be the eigenvalues of $V_t$. Then,
\begin{equation}
\label{eq:det_trace_bound_shifted}
\det(V_t)=\prod_{i=1}^d \nu_i
\le \left(\frac{1}{d}\sum_{i=1}^d \nu_i\right)^d
=\left(\frac{\mathrm{tr}(V_t)}{d}\right)^d
\le \left(\frac{tL^2}{d}\right)^d.
\end{equation}
Therefore, with $\lambda(t)= ct^{1/2}$,
\begin{align}
\label{eq:logdet_simplified_shifted}
\log\frac{\det(V_t)}{\lambda(t)^d}
&\le
d\log\Big(\frac{tL^2}{d}\Big)-d\log(c t^{1/2})
=
d\log\Big(\frac{L^2}{cd}\Big)+\frac{d}{2}\log t
\ \le\ C_0 + \frac{d}{2}\log t,
\end{align}
where $C_0>0$ is a constant independent of $t$.

Moreover, for $t\ge t_0$, $\frac{\lambda_0}{\lambda(t)}=(t_0/t)^{1/2}\le 1$, hence
\begin{equation}
\label{eq:prefactor_simplify_shifted}
1+\frac{\lambda_0}{\lambda(t)}\le 2,\qquad
d\log\Big(1+\frac{\lambda_0}{\lambda(t)}\Big)\le d\log 2,\qquad
d\log\frac{\lambda(t)}{\lambda_0}=\frac{d}{2}\log\frac{t}{t_0}\le C_0+\frac{d}{2}\log t.
\end{equation}
Combining \eqref{eq:logdet_simplified_shifted} and \eqref{eq:prefactor_simplify_shifted} yields
\begin{equation}
\label{eq:Psi_simplified_shifted}
\Psi_t(\delta)\ \le\ C_1\Big(\log\frac{1}{\delta}+\frac{d}{2}\log t+1\Big)
\end{equation}
for some constant $C_1>0$ independent of $t,\delta$.

Fix $t\ge t_0$ and $\varepsilon>0$. Choose $\delta\in(0,1)$ such that
\begin{equation}
\label{eq:delta_choice_shifted}
C_b\,t^{-1/4}\sqrt{C_1\Big(\log\frac{1}{\delta}+\frac{d}{2}\log t+1\Big)}
\le \varepsilon.
\end{equation}
Then on the $1-\delta$ event of \eqref{eq:hp_euclid_before_det}--\eqref{eq:Psi_simplified_shifted}, we have
\[
\|\hat\theta_t-\theta_\star\|_2
\le
\varepsilon + C_b\,t^{-1/4}\sqrt{\lambda_t}.
\]

Therefore,
\[
\mathbb{P}\left(\|\hat\theta_t-\theta_\star\|_2 \ge \varepsilon + b_t\right)\le \delta.
\]

It remains to upper bound $\delta$ in closed form.
From \eqref{eq:delta_choice_shifted}, it suffices to require
\[
\log\frac{1}{\delta}
\ge
\frac{\varepsilon^2}{C_b^2 C_1}\,t^{1/2} - \frac{d}{2}\log t - 1.
\]
Let
$
c_1:=\frac{1}{C_b^2C_1},
c_2:=e.
$
If
$
c_2\,t^{d/2}\exp\big(-c_1\varepsilon^2 t^{1/2}\big)\ge 1,
$ then the desired bound is trivial since probabilities are always at most one. Otherwise,
$
c_2\,t^{d/2}\exp\big(-c_1\varepsilon^2 t^{1/2}\big)\in(0,1),
$ and we may choose
$
\delta
=
c_2\,t^{d/2}\exp\big(-c_1\varepsilon^2 t^{1/2}\big).
$
In either case, we obtain
\[
\mathbb{P}\left(\|\hat\theta_t-\theta_\star\|_2 \ge \varepsilon + b_t\right)
\le
c_2\,t^{d/2}\exp\big(-c_1\,\varepsilon^2 t^{1/2}\big).
\]
\qed

\section{Proof of Lemma \ref{lemma: param_consistence}}
Recall that $z_s=x_{i_s}-x_{j_s}$, 
\[
V_t=\sum_{s=1}^{t-1} z_s z_s^\top,\qquad 
H_t(\theta)=\sum_{s=1}^{t-1}\sigma'(\theta^\top z_s) z_s z_s^\top + \lambda_t I_d,
\]
and
\[
g_t(\theta)=\sum_{s=1}^{t-1}\sigma(\theta^\top z_s)z_s+\lambda_t\theta.
\]
Let the $\ell_2$-regularized maximum likelihood estimator be
\[
\hat\zeta_t\in\arg\max_{\theta\in\mathbb{R}^d}\Big\{\mathcal L_t(\theta)-\frac{\lambda_t}{2}\|\theta\|_2^2\Big\},
\]
and the projected estimator be
\[
\hat\theta_t=\argmin_{\theta\in\Theta}\left\|g_t(\theta)-g_t(\hat\zeta_t)\right\|_{H_t(\theta)^{-1}}.
\]
Since $\Theta$ is compact and $\|z_{s}\|_2\le L$, we have
\[
|\theta^\top z_s|\le \|\theta\|_2\|z_s\|_2\le BL,\qquad \forall \theta\in\Theta,\ s\ge 1.
\]
For the logistic function, $\sigma'(u)=\sigma(u)(1-\sigma(u))$ is continuous and strictly positive on any compact interval.
Hence, define
\[
m_0:=\min_{|u|\le BL}\sigma'(u)>0,\qquad M_0:=\max_{|u|\le BL}\sigma'(u)\le \frac14.
\]
It follows that for all $\theta\in\Theta$,
\begin{equation}
\label{eq:H_lower_upper}
m_0 V_t+\lambda_t I_d \ \preceq\ H_t(\theta)\ \preceq\ M_0 V_t+\lambda_t I_d .
\end{equation}
For all sufficiently large $t$, let $\tilde\theta_t$ denote an unregularized maximum likelihood estimator over $\mathbb{R}^d$.
Under the design-growth assumption $\Lambda_{\min}(t)=\lambda_{\min}(V_t)\ge c\sqrt{t}$ a.s. for all $t\ge t_0$, we also have
\[
\Lambda_{\max}(t)=\lambda_{\max}(V_t)\le \mathrm{tr}(V_t)=\sum_{s=1}^{t-1}\|z_s\|_2^2\le (t-1)L^2,
\]
so that
\[
\frac{\Lambda_{\min}(t)}{\log\Lambda_{\max}(t)}\ \ge\ \frac{c\sqrt{t}}{\log((t-1)L^2)}\ \xrightarrow[t\to\infty]{}\ \infty\qquad\text{a.s.}
\]

This is a standard sufficient condition for strong consistency of (quasi-)MLEs in GLMs under adaptive designs (Theorem 2 in \citealt{chen1999strong}), and yields
\begin{equation}
\label{eq:unreg_consistency}
\tilde\theta_t \xrightarrow{\mathrm{a.s.}} \theta_\star .
\end{equation}

We next show that the regularization bias is asymptotically negligible.
The first-order optimality conditions for $\hat\zeta_t$ and $\tilde\theta_t$ are
\begin{equation}
\label{eq:score_reg}
\sum_{s=1}^{t-1}\big(D_s-\sigma(\hat\zeta_t^\top z_s)\big)z_s-\lambda_t\hat\zeta_t=0,
\end{equation}
and
\begin{equation}
\label{eq:score_unreg}
\sum_{s=1}^{t-1}\big(D_s-\sigma(\tilde\theta_t^\top z_s)\big)z_s=0,
\end{equation}
respectively. Subtracting \eqref{eq:score_unreg} from \eqref{eq:score_reg} yields
\begin{equation}
\label{eq:score_diff}
\sum_{s=1}^{t-1}\Big(\sigma(\hat\zeta_t^\top z_s)-\sigma(\tilde\theta_t^\top z_s)\Big)z_s+\lambda_t\hat\zeta_t=0.
\end{equation}

By the mean-value theorem, for each $s$ there exists a (data-dependent) point
$\bar\theta_{s,t}$ on the line segment joining $\hat\zeta_t$ and $\tilde\theta_t$ such that
\[
\sigma(\hat\zeta_t^\top z_s)-\sigma(\tilde\theta_t^\top z_s)
=\sigma'(\bar\theta_{s,t}^\top z_s)\,z_s^\top(\hat\zeta_t-\tilde\theta_t).
\]
Plugging this into \eqref{eq:score_diff} gives
\begin{equation}
\label{eq:mvt_matrix}
\left(\sum_{s=1}^{t-1}\sigma'(\bar\theta_{s,t}^\top z_s)\,z_sz_s^\top\right)(\hat\zeta_t-\tilde\theta_t)
=-\lambda_t\hat\zeta_t.
\end{equation}

Moreover, since $\tilde\theta_t\to\theta_\star$ a.s. by \eqref{eq:unreg_consistency}, for any $r>0$ there exists an a.s. finite time $T_r$
such that $\tilde\theta_t\in\mathbb B(\theta_\star,r)$ for all $t\ge T_r$.
In particular, fixing any $r$ with $\mathbb B(\theta_\star,r)\subset \Theta$ (possible since $\theta_\star\in\mathrm{int}(\Theta)$),
we have $\|\tilde\theta_t\|_2\le B$ for all $t\ge T_r$.

Next, we show that $\hat\zeta_t$
 is also eventually bounded by $B$.
Since $\hat\zeta_t$ maximizes $\mathcal L_t(\theta)-\frac{\lambda_t}{2}\|\theta\|_2^2$, for any $\theta\in\mathbb R^d$,
\[
\mathcal L_t(\hat\zeta_t)-\frac{\lambda_t}{2}\|\hat\zeta_t\|_2^2
\ \ge\
\mathcal L_t(\theta)-\frac{\lambda_t}{2}\|\theta\|_2^2.
\]
Taking $\theta=\tilde\theta_t$ and using $\mathcal L_t(\hat\zeta_t)\le \mathcal L_t(\tilde\theta_t)$ (since $\tilde\theta_t$ maximizes $\mathcal L_t$) yields
\[
\mathcal L_t(\tilde\theta_t)-\frac{\lambda_t}{2}\|\hat\zeta_t\|_2^2
\ \ge\
\mathcal L_t(\tilde\theta_t)-\frac{\lambda_t}{2}\|\tilde\theta_t\|_2^2,
\]
and hence
\begin{equation}
\label{eq:zeta_norm_bound}
\|\hat\zeta_t\|_2\ \le\ \|\tilde\theta_t\|_2 \qquad \text{for all } t\ge 1.
\end{equation}
Therefore, $\|\hat\zeta_t\|_2\le B$ for all $t\ge T_r$ as well.

Consequently, for all $t\ge T_r$ and $s\le t-1$, since $\bar\theta_{s,t}$ is a convex combination of $\hat\zeta_t$ and $\tilde\theta_t$,
we have $\|\bar\theta_{s,t}\|_2\le \max\{\|\hat\zeta_t\|_2,\|\tilde\theta_t\|_2\}\le B$, and thus
$|\bar\theta_{s,t}^\top z_s|\le \|\bar\theta_{s,t}\|_2\|z_s\|_2\le BL$.
By the definition of $m_0$, we then have $\sigma'(\bar\theta_{s,t}^\top z_s)\ge m_0$ for all such $t$ and $s$, which implies
\begin{equation}
\label{eq:weighted_V_lower}
\sum_{s=1}^{t-1}\sigma'(\bar\theta_{s,t}^\top z_s)\,z_sz_s^\top
\ \succeq\
m_0 \sum_{s=1}^{t-1} z_sz_s^\top
=
m_0 V_t ,
\qquad \forall t\ge T_r.
\end{equation}
Plugging \eqref{eq:weighted_V_lower} into \eqref{eq:mvt_matrix} and taking operator norms yields, for all $t\ge \max\{T_r,t_0\}$,
\[
\|\hat\zeta_t-\tilde\theta_t\|_2
\le
\left\|\Big(\sum_{s=1}^{t-1}\sigma'(\bar\theta_{s,t}^\top z_s)\,z_sz_s^\top\Big)^{-1}\right\|_{\mathrm{op}}
\cdot \lambda_t\|\hat\zeta_t\|_2
\le
\frac{\lambda_t}{m_0\,\Lambda_{\min}(t)}\|\hat\zeta_t\|_2.
\]
Using \eqref{eq:zeta_norm_bound} and $\|\tilde\theta_t\|_2\le B$ for $t\ge T_r$, we further obtain
\begin{equation}
\label{eq:reg_bias_bound}
\|\hat\zeta_t-\tilde\theta_t\|_2
\le
\frac{\lambda_t}{m_0\,\Lambda_{\min}(t)}\|\tilde\theta_t\|_2
\le
\frac{B\,\lambda_t}{m_0\,\Lambda_{\min}(t)},
\qquad \forall t\ge T_r.
\end{equation}
Since $\lambda_t\to0$ and $\Lambda_{\min}(t)\ge c\sqrt t\to\infty$ a.s., we have
\[
\frac{\lambda_t}{\Lambda_{\min}(t)}\to0
\qquad\text{a.s.},
\]
and therefore
\eqref{eq:reg_bias_bound} implies
\begin{equation}
\label{eq:reg_bias}
\|\hat\zeta_t-\tilde\theta_t\|_2\xrightarrow{\mathrm{a.s.}}0.
\end{equation}
Combining \eqref{eq:unreg_consistency} and \eqref{eq:reg_bias} yields
\begin{equation}
\label{eq:zeta_consistency}
\hat\zeta_t \xrightarrow{\mathrm{a.s.}} \theta_\star.
\end{equation}
Since $\theta_\star\in\mathrm{int}(\Theta)$, there exists $r_0>0$ such that $\mathbb{B}(\theta_\star,r_0)\subset\Theta$.
By \eqref{eq:zeta_consistency}, almost surely there exists a (random) time $T$ such that for all $t\ge T$,
$\|\hat\zeta_t-\theta_\star\|_2<r_0$, and hence $\hat\zeta_t\in\Theta$.
For such $t$, the point $\theta=\hat\zeta_t$ is feasible in the projection program and achieves objective value $0$:
\[
\left\|g_t(\hat\zeta_t)-g_t(\hat\zeta_t)\right\|_{H_t(\hat\zeta_t)^{-1}}=0.
\]
Therefore, the minimizer must satisfy $\hat\theta_t=\hat\zeta_t$ for all $t\ge T$.
Combining with \eqref{eq:zeta_consistency} yields $\hat\theta_t\to\theta_\star$ almost surely.

We now prove the almost-sure convergence rate. Fix any $\beta\in(0,1/4)$ and let $\eta>0$ be arbitrary.
Applying Lemma \ref{lemma: param_concentration} with
$
\varepsilon_t:=\eta t^{-\beta},
$
we obtain for all $t\ge t_0$,
\[
\mathbb{P}\left(\|\hat\theta_t-\theta_\star\|_2 \ge \eta t^{-\beta}+b_t\right)
\le
c_2\, t^{d/2}\exp\big(-c_1\eta^2 t^{1/2-2\beta}\big).
\]
Since $\beta<1/4$, we have $1/2-2\beta>0$, and hence
\[
\sum_{t=1}^\infty t^{d/2}\exp\big(-c_1\eta^2 t^{1/2-2\beta}\big)<\infty.
\]
Therefore,
\[
\sum_{t=1}^\infty
\mathbb{P}\left(\|\hat\theta_t-\theta_\star\|_2 \ge \eta t^{-\beta}+b_t\right)
<\infty.
\]
By the first Borel--Cantelli lemma,
\[
\mathbb{P}\left(
\|\hat\theta_t-\theta_\star\|_2 \ge \eta t^{-\beta}+b_t
\ \text{i.o.}
\right)=0.
\]
Thus, almost surely, there exists a random finite time $T_\eta$ such that for all $t\ge T_\eta$,
\[
\|\hat\theta_t-\theta_\star\|_2 \le \eta t^{-\beta}+b_t.
\]
Multiplying both sides by $t^\beta$ gives
\[
t^\beta\|\hat\theta_t-\theta_\star\|_2
\le
\eta + t^\beta b_t
=
\eta + C_b\, t^{\beta-1/4}\sqrt{\lambda_t}.
\]
Since $\beta<1/4$, we have $t^{\beta-1/4}\to0$, and since $\lambda_t\to0$, we also have $\sqrt{\lambda_t}\to0$.
Hence
\[
t^\beta b_t
=
C_b\, t^{\beta-1/4}\sqrt{\lambda_t}\to0.
\]
Therefore,
\[
\limsup_{t\to\infty} t^\beta\|\hat\theta_t-\theta_\star\|_2 \le \eta
\qquad \text{a.s.}
\]
Since $\eta>0$ is arbitrary, we conclude that
\[
t^\beta\|\hat\theta_t-\theta_\star\|_2\to 0
\qquad \text{a.s.},
\]
that is,
\[
\|\hat\theta_t-\theta_\star\|_2=o(t^{-\beta})
\qquad \text{a.s.}
\]
for every $\beta\in(0,1/4)$.
\qed

\section{Proof of Lemma \ref{eq: beta_order}}
Recall that
\[
\beta(\delta,t)
=
\left(2(1+2LB)\big(\sqrt{m_0^{-1}\Psi_t(\delta)}+\sqrt{\lambda_t}B\big)\right)^2.
\]
Using \((u+v)^2\le 2u^2+2v^2\), we obtain
\[
\beta(\delta,t)
\le
8(1+2LB)^2\Big(m_0^{-1}\Psi_t(\delta)+B^2\lambda_t\Big).
\]
Since $\lambda_t\ge 0$ and \(\lambda_t\to0\), the sequence \(\{\lambda_t\}_{t\ge1}\) is bounded. Hence there
exists a constant \(\bar\lambda<\infty\) such that
\[
\lambda_t\le \bar\lambda,\qquad \forall t\ge1.
\]
Therefore,
\[
\beta(\delta,t)
\le
8(1+2LB)^2\Big(m_0^{-1}\Psi_t(\delta)+B^2\bar\lambda\Big).
\]

By \eqref{eq:Psi_simplified_shifted}, for all \(t\ge t_0\),
\[
\Psi_t(\delta)\le C_1\Big(\log\frac{1}{\delta}+d\log t+1\Big).
\]
Hence, for all \(t\ge t_0\),
\[
\beta(\delta,t)
\le
C_2\Big(\log\frac{1}{\delta}+d\log t+1\Big)
=
C_2\log\Big(\frac{e\,t^d}{\delta}\Big)
\]
for some constant \(C_2>0\). Therefore, the claim holds for all $t\ge t_0$ with
$
c_1=C_2, c_2=e, \alpha=d.
$
\qed

\section{Proof of Lemma \ref{lemma: unstructured-PAC-guarantee}}
Fix any true instance \(\mu\in\mathcal H\), and denote \(i^\star:=i^\star(\mu)\).
Recall that, in the unstructured policy space, the stopping rule is
\[
\tau
=
\inf\Bigg\{
t\ge 1:\ 
\inf_{\lambda\in \mathrm{Alt}(\hat\mu(t))}
\sum_{(i,j)\in\mathcal{S}} N_{ij}(t)\, d\big(\hat\mu(i,j;t),\lambda(i,j)\big)
\ge \rho(\delta,t)
\Bigg\},
\]
and the recommendation at stopping is \(\hat i_\tau=i^\star(\hat\mu(\tau))\). Define the error event
$
\mathcal E
:=
\{\tau<\infty,\ \hat i_\tau\neq i^\star\}.
$
On the event \(\mathcal E\), at time \(\tau\) we have
$
i^\star(\hat\mu(\tau))\neq i^\star,
$ which means exactly that \(\mu\in \mathrm{Alt}(\hat\mu(\tau))\). Therefore, by the definition of the stopping rule,
\[
\inf_{\lambda\in \mathrm{Alt}(\hat\mu(\tau))}
\sum_{(i,j)\in\mathcal{S}} N_{ij}(\tau)\, d\big(\hat\mu(i,j;\tau),\lambda(i,j)\big)
\ge \rho(\delta,\tau).
\]
Since \(\mu\in \mathrm{Alt}(\hat\mu(\tau))\), the infimum can be upper bounded by evaluating it at \(\lambda=\mu\). Hence,
\[
\sum_{(i,j)\in\mathcal{S}} N_{ij}(\tau)\, d\big(\hat\mu(i,j;\tau),\mu(i,j)\big)\ge \rho(\delta,\tau).
\]
It follows that
\[
\mathcal E
\subseteq
\Bigg\{
\exists t\ge 1:\ 
\sum_{(i,j)\in\mathcal{S}} N_{ij}(t)\, d\big(\hat\mu(i,j;t),\mu(i,j)\big)\ge \rho(\delta,t)
\Bigg\}.
\]
Taking probabilities and applying the union bound over time,
\begin{align*}
\mathbb P(\mathcal E)
&\le
\mathbb P\Bigg(
\exists t\ge 1:\ 
\sum_{(i,j)\in\mathcal{S}} N_{ij}(t)\, d\big(\hat\mu(i,j;t),\mu(i,j)\big)\ge \rho(\delta,t)
\Bigg) \\
&\le
\sum_{t=1}^\infty
\mathbb P\Bigg(
\sum_{(i,j)\in\mathcal{S}} N_{ij}(t)\, d\big(\hat\mu(i,j;t),\mu(i,j)\big)\ge \rho(\delta,t)
\Bigg).
\end{align*}

Now we invoke the self-normalized deviation inequality for one-dimensional exponential families (Theorem 2 of \citealt{magureanu2014lipschitz})
applied to all pairwise comparisons. 
 For every \(t\ge 1\),
\[
\mathbb P\Bigg(
\sum_{(i,j)\in\mathcal{S}} N_{ij}(t)\, d\big(\hat\mu(i,j;t),\mu(i,j)\big)\ge \rho(\delta,t)
\Bigg)
\le
e^{|\mathcal{S}|+1}
\left(
\frac{\lceil \rho(\delta,t)\log t\rceil\,\rho(\delta,t)}{|\mathcal{S}|}
\right)^{|\mathcal{S}|}
e^{-\rho(\delta,t)},
\]
Therefore,
\[
\mathbb P(\mathcal E)
\le
\sum_{t=1}^\infty
e^{|\mathcal{S}|+1}
\left(
\frac{\lceil \rho(\delta,t)\log t\rceil\,\rho(\delta,t)}{|\mathcal{S}|}
\right)^{|\mathcal{S}|}
e^{-\rho(\delta,t)}.
\]
Recall that
\begin{equation*}
   \rho(\delta,t) = \log\left(\frac{Ct^{2}\log(1/\delta)^{2|\mathcal{S}|+1}}{\delta}\right).
\end{equation*}
Hence, if the constant $C$ is chosen large enough such that
\begin{equation}
\label{eq: beta-condition-unstructured}
\sum_{t=1}^\infty
e^{|\mathcal{S}|+1}
\left(
\frac{\lceil \rho(\delta,t)\log t\rceil\,\rho(\delta,t)}{|\mathcal{S}|}
\right)^{|\mathcal{S}|}
e^{-\rho(\delta,t)}
\le \delta,
\end{equation}
then
\[
\mathbb P(\tau<\infty,\hat i_\tau\neq i^\star(\mu))
=
\mathbb P(\mathcal E)
\le \delta.
\]
\qed

\begin{lemma}[Lemma 17 in \citealt{garivier2016optimal}]
\label{lem:tracking_properties}
Consider the policy pair sampling rule in \eqref{eq: sampling rule}. Then there exists $t_1 \ge 1$ such that, for all $t \ge t_1$,
$
N_{ij}(t) \ge c^\prime\sqrt{t}-1,
\forall (i,j)\in \mathcal A_0.
$
Moreover, if $\omega_t \to \omega$ almost surely for some $\omega \in \Omega$, then for every $(i,j)\in \mathcal S$,
\[
\frac{N_{ij}(t)}{t} \to \omega_{ij}
\qquad \text{a.s.}
\]
\end{lemma}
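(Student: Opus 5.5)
The first claim is a counting argument about the forced-exploration branch of \eqref{eq: sampling rule}. The second is the D-tracking argument of \citet{garivier2016optimal}, adapted from arms to the pairs in $\mathcal S$. In both parts I use $\sum_{(i,j)\in\mathcal S}N_{ij}(t)=t$, up to an additive constant that is harmless after dividing by $t$.

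\emph{Forced exploration.} Whenever $U_t\neq\emptyset$, the rule samples a pair of $U_t$ with the smallest count. Fix $(i,j)\in\mathcal A_0$ and suppose $N_{ij}(t)<c'\sqrt t-1$ at some large $t$. I will show that within at most $|\mathcal A_0|$ further steps the count of $(i,j)$ is incremented. The reason is that every other pair selected in the meantime is in $U$ and has a count no larger than $N_{ij}$. Each such selection lifts a pair at or below the current minimum, so after $|\mathcal A_0|$ selections the minimum level must have risen past $N_{ij}$, and $(i,j)$ itself must have been chosen.

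Meanwhile the threshold only grows by $c'(\sqrt{t+|\mathcal A_0|}-\sqrt t)\le c'|\mathcal A_0|/(2\sqrt t)\to 0$. So once $t$ exceeds some deterministic $t_1$, the threshold rises by less than one over any window of $|\mathcal A_0|$ steps. An induction on blocks of length $|\mathcal A_0|$, started from the initialization $N_{ij}=n_0$, then shows that the deficit $c'\sqrt t-N_{ij}(t)$ never exceeds $1$ for $t\ge t_1$. This gives $N_{ij}(t)\ge c'\sqrt t-1$ for all $(i,j)\in\mathcal A_0$.

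\emph{Tracking.} Fix a sample path on which $\omega_t:=\omega^\star(\hat\mu(t))\to\omega$, and fix $\varepsilon>0$. Choose $t_\varepsilon$ so that $\max_{(i,j)}|\omega_{t,ij}-\omega_{ij}|\le\varepsilon$ for all $t\ge t_\varepsilon$. The key upper bound concerns times $s\ge t_\varepsilon$ at which $(i,j)$ is selected through the tracking branch:
\begin{itemize}
\item $(i,j)$ maximizes $s\,\omega_{s,ij}-N_{ij}(s)$;
\item the sum of these quantities over $\mathcal S$ is $s-\sum N=O(1)$, so the maximum is at least $-O(1)$;
\item hence $N_{ij}(s)\le s(\omega_{ij}+\varepsilon)+O(1)$.
\end{itemize}
Between tracking selections $N_{ij}$ stays constant while $t$ grows. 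Selections through the exploration branch number at most $|\mathcal A_0|(c'\sqrt t+1)$ in total, because each one is made only while the chosen pair's count is below $c'\sqrt t$. Combining these gives
\[
N_{ij}(t)\le t(\omega_{ij}+\varepsilon)+O(\sqrt t)+N_{ij}(t_\varepsilon)\qquad\text{for all }(i,j)\in\mathcal S .
\]
Consequently $\limsup_t N_{ij}(t)/t\le\omega_{ij}+\varepsilon$. The matching lower bound comes from the sum constraint:
\[
\frac{N_{ij}(t)}{t}=1-\sum_{(k,l)\neq(i,j)}\frac{N_{kl}(t)}{t}\ge \omega_{ij}-(|\mathcal S|-1)\varepsilon-o(1).
\]
Letting $\varepsilon\downarrow0$ gives $N_{ij}(t)/t\to\omega_{ij}$ on this path, and hence almost surely.

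\emph{Main obstacle.} The delicate step is the first part, namely making rigorous that the argmin rule over $U_t$ serves every deficient pair before the $\sqrt t$ threshold outruns it. I will handle it by the block induction described above, using the explicit increment bound on $c'\sqrt t$ to pick $t_1$. The $-1$ in the statement absorbs the fact that a pair can lag by up to one sample within a block.
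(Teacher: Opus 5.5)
The paper gives no proof of its own here; it only cites Lemma~17 of \citet{garivier2016optimal}. Your tracking half is sound and is the standard D-tracking argument. It rests on three points: the $-O(1)$ lower bound on the maximal tracking gap whenever $(i,j)$ is chosen by tracking, the $O(\sqrt t)$ bound on forced selections, and the simplex constraint for the matching lower bound. The gap is in the forced-exploration half.

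\textbf{The step that fails.} You claim that a pair with $N_{ij}(t)<c'\sqrt t-1$ is incremented within $|\mathcal A_0|$ steps, because after $|\mathcal A_0|$ selections ``the minimum level must have risen past $N_{ij}$.'' This is false in general. If another pair of $U_t$ has count $N_{ij}(t)-r$, the argmin rule selects it about $r$ times before $(i,j)$ is served. Consecutive forced selections only guarantee that the \emph{minimum} count rises by one every $|\mathcal A_0|$ steps, not that it passes $N_{ij}$.

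\textbf{Why the induction does not rescue it.} The claim does hold when all deficient pairs share one count. Your invariant ``deficit $\le 1$'' would give exactly that, since $[c'\sqrt t-1,c'\sqrt t)$ contains a single integer. But then the induction has no base case. Right after initialization the deficit is $c'\sqrt{n_0|\mathcal A_0|}-n_0$, which can be arbitrarily large. Moreover, for small $t$ the threshold can rise by more than one per block. So the induction cannot be ``started from the initialization''; you need a separate catch-up argument.

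\textbf{How to repair it.} Work with $n(t):=\min_{(i,j)\in\mathcal A_0}N_{ij}(t)$.
\begin{enumerate}
\item Whenever $n(t)<c'\sqrt t$, every minimizer lies in $U_t$ and the rule picks one of them. Hence $n$ grows by at least one per $|\mathcal A_0|$ steps spent below the threshold. Tracking steps occur only when $U_t=\emptyset$ and never lower counts, so they do not interfere.
\item Because $c'\sqrt t$ is sublinear, $n$ overtakes the threshold at a deterministic finite time. The same linear-versus-$\sqrt t$ comparison shows that every later deficiency episode has uniformly bounded length.
\item Consider an episode beginning at a large $s_0$. It satisfies $n(s_0)\ge n(s_0-1)\ge c'\sqrt{s_0-1}$. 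After at most $|\mathcal A_0|$ steps, $n\ge c'\sqrt{s_0-1}+1>c'\sqrt{s_0+|\mathcal A_0|}$, so the episode ends.
\item Throughout such an episode, $c'\sqrt s-n(s)\le c'\bigl(\sqrt{s_0+|\mathcal A_0|}-\sqrt{s_0-1}\bigr)<1$.
\end{enumerate}
Every count dominates $n(t)$, so this yields the first claim for all $t\ge t_1$.

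Note that Garivier and Kaufmann use the shifted threshold $\sqrt t-K/2$, which is what lets their bound hold from $t=1$. With the unshifted $c'\sqrt t$ used in this paper, only an eventual bound is available, so this catch-up step cannot be skipped.
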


\begin{lemma}[Lemma 8 in \citealt{jedra2020optimal}]
\label{lem:technical_log}
For any two constants \(c_1,c_2>0\) with \(c_2/c_1>1\), we have
\[
\inf\left\{t\in\mathbb N^\star:\ c_1 t\ge \log(c_2 t)\right\}
\le
\frac{1}{c_1}
\left(
\log\left(\frac{c_2 e}{c_1}\right)
+
\log\log\left(\frac{c_2}{c_1}\right)
\right).
\]
\end{lemma}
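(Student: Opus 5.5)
The plan is to rescale to one variable and evaluate the condition at the proposed bound. Write $x:=c_2/c_1>1$ and $y:=\log x>0$. With $s:=c_1t$, the condition $c_1t\ge \log(c_2t)$ becomes
\[
\phi(s):=s-\log(xs)\ge 0 .
\]
The claimed bound is $t\le T:=u/c_1$ with
\[
u:=\log(xe)+\log\log x=1+y+\log y .
\]
It therefore suffices to exhibit an admissible point $s$ with $s\le u$. I would use two facts.

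\emph{Step 1 (the bound itself is admissible).} I will verify $\phi(u)\ge 0$, i.e. $u\le e\,y$ after exponentiating. Since $xu=e^{y}u$, the inequality $\phi(u)\ge0$ reads $u\ge y+\log u$, which is equivalent to $\log u\le 1+\log y$, i.e. $u\le ey$. Substituting $u$, this becomes $1+\log y\le (e-1)y$. That follows from $\log y\le y-1$, which gives $1+\log y\le y$, together with $y\le (e-1)y$ because $y>0$.

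\emph{Step 2 (monotonicity).} The function $\phi$ is convex on $(0,\infty)$ with $\phi'(s)=1-1/s$, so it is increasing on $[1,\infty)$. Whenever $u\ge 1$, which holds iff $y+\log y\ge 0$, Step 1 gives $\phi(s)\ge 0$ for every $s\ge u$. Hence every real $t\ge T$ lies in the set.

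\emph{Step 3 (the integer infimum) is the step I expect to be the real obstacle.} The set in the statement ranges over $\mathbb N^\star$, and Steps 1--2 only show it contains every integer $t\ge\lceil T\rceil$. That yields $\inf\le\lceil T\rceil<T+1$, not $\le T$.

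My first attempt to close the gap is to compare the two positive integers $\lfloor T\rfloor$ and $\lceil T\rceil$ and to show that $\phi(c_1\lfloor T\rfloor)\ge 0$ using the slack in Step 1. That slack is $ey-u=(e-1)y-1-\log y$. If this slack, measured in $t$-units, exceeds $1/c_1$, then the condition already holds one unit below $T$, and some admissible integer lies in $[T-1,T]$.

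I do not expect this to close in general. For $x$ close to $1$ we have $u\le 0$, so $T\le 0$ while every element of $\mathbb N^\star$ is at least $1$, and the literal inequality fails. If the exact rounding argument breaks down, I would record the statement in the form it is used later: for the real-valued infimum, Steps 1--2 give the bound precisely under $u\ge1$, which holds whenever $c_2/c_1\ge e$. For integer times the bound holds with $\lceil\cdot\rceil$, an additive $1$ that is negligible after dividing by $\log(1/\delta)$ in the proof of Theorem~\ref{thm: almost_sure_optimality}. This matches how Lemma~8 of \citet{jedra2020optimal} is applied.
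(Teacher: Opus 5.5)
Steps 1--2 are correct, and your Step 3 suspicion is right. The statement is false as written, so no argument can close that step. The paper gives no proof of its own; it only cites \citet{jedra2020optimal}. Your Steps 1--2 are the standard verification behind results of this kind: plug in the candidate point and check $u\le e\,y$, as in \citet{garivier2016optimal}. That verification never addresses the passage to $\mathbb N^\star$.

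Two explicit instances settle this.
For $c_1=1$, $c_2=1.01$, the time $t=1$ is admissible, so the infimum is $1$, while the right-hand side is about $-3.6$.
For $c_1=2$, $c_2=2e^2$, one has $2<\log(2e^2)\approx 2.69$ and $4\ge\log(4e^2)\approx 3.39$. So the infimum is $2$, whereas the bound is $(3+\log 2)/2\approx 1.85$.

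In the second instance $c_2/c_1=e^2$, so the rounding loss appears well away from $c_2/c_1\approx 1$. Your slack argument at $\lfloor T\rfloor$ fails there, since $\lfloor T\rfloor=1$ is not admissible.

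Two adjustments to the corrected statement you record.

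First, the real-valued infimum of $\{t>0:\ c_1t\ge\log(c_2t)\}$ is $0$, because $\log(c_2t)<0$ for $t<1/c_2$. What Steps 1--2 actually prove is a half-line inclusion: every $t\ge T$ is admissible. This is also the form the proof of Theorem~\ref{thm: almost_sure_optimality} needs. There the threshold inequality must hold at a time that also exceeds $\max\{t_0,t'_\varepsilon,t_{\rm id}\}$, and a bound on an infimum does not give that.

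Second, the caveat $u\ge 1$ can be dropped.
If $c_2/c_1\le e$, then $\min_{s>0}\phi(s)=\phi(1)=1-\log(c_2/c_1)\ge 0$, so every $t>0$ is admissible.
If $c_2/c_1>e$, then $y>1$ and $u=1+y+\log y>2$, so Steps 1--2 apply.
Hence, under exactly the stated hypothesis $c_2/c_1>1$, every $t>0$ with $t\ge T$ is admissible, and
\[
\inf\{t\in\mathbb N^\star:\ c_1t\ge\log(c_2t)\}\le\max\{1,\lceil T\rceil\}.
\]
The additive constant vanishes after dividing by $\log(1/\delta)$, as you note.
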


\section{Proof of Theorem \ref{thm: unstructured almost surely opt}}
For \(\nu\in\mathcal H\) and \(\omega\in\Omega\), define
\[
\Gamma(\nu,\omega)
:=
\inf_{\lambda\in\mathrm{Alt}(\nu)}
\sum_{(i,j)\in\mathcal{S}}\omega_{ij}\,d\big(\nu(i,j),\lambda(i,j)\big).
\]
Then
$
\mathcal T^\star(\mu)^{-1}
=
\sup_{\omega\in\Omega}\Gamma(\mu,\omega).
$
The above optimization problem admits a unique maximizer,
denoted by \(\omega^\star\).

We first establish the almost sure convergence of the empirical means and the sampling proportions.
Since \(\mathcal A_0=\mathcal S\), Lemma~\ref{lem:tracking_properties} implies that there exists
\(t_{\rm tr}\ge1\) such that for all \(t\ge t_{\rm tr}\),
\[
N_{ij}(t)\ge c^\prime\sqrt t-1,
\qquad \forall (i,j)\in\mathcal S.
\]
Hence \(N_{ij}(t)\to\infty\) for every \((i,j)\in\mathcal S\). By the strong law of large numbers,
\[
\hat\mu(i,j;t)\to\mu(i,j),
\qquad \forall (i,j)\in\mathcal S,
\qquad\text{a.s.}
\]

Since \(i^\star(\mu)\) is unique, there exists a neighborhood of \(\mu\) on which the best policy remains
unchanged. Moreover, for each \(i\neq i^\star(\mu)\), the sets \(\mathcal I_1(i)\) and
\(\mathcal I_2(i)\) remain unchanged when \(\nu\) is sufficiently close to \(\mu\). Therefore, by
\eqref{eq: simple opt}, the map \((\nu,\omega)\mapsto \Gamma(\nu,\omega)\) is continuous in a
neighborhood of \((\mu,\omega^\star)\). Since the maximizer of \(\Gamma(\mu,\cdot)\) over the compact set \(\Omega\)
is unique, by Berge’s maximum theorem, the plug-in optimizer \(\omega(\hat\mu(t))\) satisfies
\[
\omega(\hat\mu(t))\to\omega^\star
\qquad\text{a.s.}
\]
Applying Lemma~\ref{lem:tracking_properties} again, we obtain
\[
\frac{N_{ij}(t)}{t}\to\omega^\star_{ij},
\qquad \forall (i,j)\in\mathcal S,
\qquad\text{a.s.}
\]

Define the event
\[
\mathcal E
:=
\left\{
\forall (i,j)\in\mathcal S,\ \hat\mu(i,j;t)\to\mu(i,j)
\quad\text{and}\quad
\forall (i,j)\in\mathcal S,\ \frac{N_{ij}(t)}{t}\to\omega^\star_{ij}
\right\}.
\]
Then \(\mathbb P_\mu(\mathcal E)=1\).

On the event \(\mathcal E\), since \(i^\star(\mu)\) is unique, there exists \(t_{\rm id}\ge1\) such that
for all \(t\ge t_{\rm id}\),
$
i^\star(\hat\mu(t))=i^\star(\mu).
$
By definition,
\[
Z(t)
:=
\inf_{\lambda\in\mathrm{Alt}(\hat\mu(t))}
\sum_{i<j}N_{ij}(t)\,d\big(\hat\mu(i,j;t),\lambda(i,j)\big).
\]
Then for all \(t\ge t_{\rm id}\),
\[
Z(t)
=
t\,\Gamma\left(\hat\mu(t),\frac{N(t)}{t}\right),
\qquad
\frac{N(t)}{t}:=\left(\frac{N_{ij}(t)}{t}\right)_{i<j}.
\]

By the same local representation \eqref{eq: simple opt}, the map \((\nu,\omega)\mapsto \Gamma(\nu,\omega)\)
is continuous at \((\mu,\omega^\star)\). Therefore, on \(\mathcal E\),
\[
\Gamma\left(\hat\mu(t),\frac{N(t)}{t}\right)
\to
\Gamma(\mu,\omega^\star)
=
\mathcal T^\star(\mu)^{-1}.
\]
Hence, for every \(\varepsilon>0\), there exists \(t_\varepsilon\ge1\) such that for all
\(t\ge t_\varepsilon\),
\[
\Gamma\left(\hat\mu(t),\frac{N(t)}{t}\right)
\ge
(1-\varepsilon)\mathcal T^\star(\mu)^{-1}.
\]
Equivalently, for all \(t\ge \max\{t_\varepsilon,t_{\rm id}\}\),
\begin{equation}
\label{eq:Z_lower_unstructured_final}
Z(t)\ge t(1-\varepsilon)\mathcal T^\star(\mu)^{-1}.
\end{equation}

Since the stopping rule is
\[
\tau
=
\inf\left\{
t\ge 1:\ Z(t)>\beta(\delta,t)
\right\},
\]
it follows from \eqref{eq:Z_lower_unstructured_final} that, on \(\mathcal E\),
\[
\tau
\le
\max\{t_\varepsilon,t_{\rm id}\}
\vee
\inf\Bigl\{t\ge1:\ (1-\varepsilon)t\mathcal T^\star(\mu)^{-1}>\rho(\delta,t)\Bigr\}.
\]

Now, with
\[
\rho(\delta,t)
=
\log\left(\frac{Ct^{2}\log(1/\delta)^{2|\mathcal S|+1}}{\delta}\right),
\]
we have
\[
\rho(\delta,t)
=
\log C + 2\log t + (2|\mathcal S|+1)\log\log(1/\delta)+\log(1/\delta).
\]
Therefore,
\[
\tau
\le
\max\{t_\varepsilon,t_{\rm id}\}
\vee
\inf\Bigl\{t\ge 1:\ 
(1-\varepsilon)t\mathcal T^\star(\mu)^{-1}
>
\log C + 2\log t + (2|\mathcal S|+1)\log\log(1/\delta)+\log(1/\delta)
\Bigr\}.
\]
Since the right-hand side grows as \(\log(1/\delta)+\log t\), Lemma \ref{lem:technical_log} yields,
for sufficiently small \(\delta\),
\[
\tau
\lesssim
\max\left\{
t_\varepsilon,t_{\rm id},\,
\frac{1}{1-\varepsilon}\mathcal T^\star(\mu)\log\frac1\delta
\right\}.
\]
Hence, on \(\mathcal E\),
\[
\limsup_{\delta\rightarrow 0}\frac{\tau}{\log(1/\delta)}
\lesssim
\frac{1}{1-\varepsilon}\mathcal T^\star(\mu).
\]
Letting \(\varepsilon\rightarrow0\), we conclude that
\[
\limsup_{\delta\rightarrow0}\frac{\tau}{\log(1/\delta)}
\lesssim
\mathcal T^\star(\mu)
\qquad\text{on }\mathcal E.
\]
Since \(\mathbb P_\mu(\mathcal E)=1\), the desired result follows.
Finally, the above bound implies \(\tau<\infty\) on \(\mathcal E\), and therefore
$
\mathbb P_{\mu}(\tau<\infty)=1.
$
\qed

\section{Proof of Lemma \ref{lemma: opt property}}
Because the policy set is finite and the latent reward $r_i$ of each policy is continuous in $\theta$,
the uniqueness of $i^\star(\theta_\star)$ implies that there exists $\varepsilon_0>0$ such that
\[
i^\star(\theta)=i^\star(\theta_\star),
\qquad
\forall \theta:\ \|\theta-\theta_\star\|_2<\varepsilon_0.
\]
Hence, for all $(\theta,\omega)$ with $\|\theta-\theta_\star\|_2<\varepsilon_0$,
\[
\Psi(\theta,\omega)=\min_{i\neq i^\star(\theta_\star)}\phi_i(\theta,\omega),
\]
where
\[
\phi_i(\theta,\omega)
:=
\frac12
\frac{(\theta^\top z_{i\,i^\star(\theta_\star)})^2}
{z_{i\,i^\star(\theta_\star)}^\top H(\theta,\omega)^{-1} z_{i\,i^\star(\theta_\star)}}.
\]

We first prove the continuity of $\Psi(\theta,\omega)$.
The map $(\theta,\omega)\mapsto H(\theta,\omega)$ is continuous, since it is a finite sum of continuous terms.
Because matrix inversion is continuous on the cone of positive definite matrices and
$H(\theta,\omega)\succ 0$ for all $(\theta,\omega)$ under consideration, it follows that
$(\theta,\omega)\mapsto H(\theta,\omega)^{-1}$ is continuous.
Therefore, for each fixed $i\neq i^\star(\theta_\star)$, both the numerator
$
(\theta^\top z_{i\,i^\star(\theta_\star)})^2
$
and the denominator
$
z_{i\,i^\star(\theta_\star)}^\top H(\theta,\omega)^{-1} z_{i\,i^\star(\theta_\star)}
$
are continuous, and the denominator is strictly positive. Hence each $\phi_i(\theta,\omega)$ is continuous.
Since the minimum is taken over a finite set, $\Psi(\theta,\omega)$ is continuous in both $\theta$ and $\omega$.

We next prove the local uniform Lipschitz continuity in $\theta$ around $\theta_\star$.
Fix $r_0\in(0,\varepsilon_0)$ and define
\[
\Theta_0:=\{\theta:\ \|\theta-\theta_\star\|_2\le r_0\}.
\]
Since $\Theta_0\times\Omega$ is compact and $H(\theta,\omega)$ is continuous and positive definite on this set,
there exist constants $0<m_H\le M_H<\infty$ such that
\begin{equation}
\label{eq:H_uniform_bounds_opt_property}
m_H I_d \preceq H(\theta,\omega)\preceq M_H I_d,
\qquad
\forall (\theta,\omega)\in\Theta_0\times\Omega.
\end{equation}
In particular,
\begin{equation}
\label{eq:Hinv_uniform_bounds_opt_property}
\|H(\theta,\omega)^{-1}\|_{\mathrm{op}}\le m_H^{-1},
\qquad
\forall (\theta,\omega)\in\Theta_0\times\Omega.
\end{equation}

Let
\[
R:=\sup_{\theta\in\Theta_0}\|\theta\|_2<\infty,
\qquad
L:=\max_{(a,b)}\|z_{ab}\|_2<\infty.
\]
Since $\sigma'$ is continuously differentiable, it is Lipschitz on the compact interval $[-RL,RL]$.
Let $L_{\sigma'}$ denote a Lipschitz constant of $\sigma'$ on this interval. Then for any
$\theta,\theta'\in\Theta_0$ and any $\omega\in\Omega$,
\begin{align*}
\|H(\theta,\omega)-H(\theta',\omega)\|_{\mathrm{op}}
&=
\left\|
\sum_{(a,b)}\omega_{ab}\bigl(\sigma'(\theta^\top z_{ab})-\sigma'(\theta'^\top z_{ab})\bigr)z_{ab}z_{ab}^\top
\right\|_{\mathrm{op}} \\
&\le
\sum_{(a,b)}\omega_{ab}
\bigl|
\sigma'(\theta^\top z_{ab})-\sigma'(\theta'^\top z_{ab})
\bigr|
\,\|z_{ab}z_{ab}^\top\|_{\mathrm{op}} \\
&\le
\sum_{(a,b)}\omega_{ab}
L_{\sigma'}\,\|z_{ab}\|_2\,\|\theta-\theta'\|_2\,\|z_{ab}\|_2^2 \\
&\le
L_{\sigma'} L^3 \|\theta-\theta'\|_2.
\end{align*}
Thus $H(\theta,\omega)$ is Lipschitz in $\theta$, uniformly over $\omega\in\Omega$.

Using the identity
\[
A^{-1}-B^{-1}=A^{-1}(B-A)B^{-1},
\]
together with \eqref{eq:Hinv_uniform_bounds_opt_property}, we obtain
\begin{align}
\|H(\theta,\omega)^{-1}-H(\theta',\omega)^{-1}\|_{\mathrm{op}}
&\le
\|H(\theta,\omega)^{-1}\|_{\mathrm{op}}
\|H(\theta,\omega)-H(\theta',\omega)\|_{\mathrm{op}}
\|H(\theta',\omega)^{-1}\|_{\mathrm{op}} \notag\\
&\le
m_H^{-2}L_{\sigma'}L^3\,\|\theta-\theta'\|_2.
\label{eq:Hinv_lipschitz_opt_property}
\end{align}

Fix $i\neq i^\star(\theta_\star)$, and define
\[
a_i(\theta):=\frac12(\theta^\top z_{i\,i^\star(\theta_\star)})^2,
\qquad
b_i(\theta,\omega):=
z_{i\,i^\star(\theta_\star)}^\top H(\theta,\omega)^{-1}z_{i\,i^\star(\theta_\star)}.
\]
Then
\[
\phi_i(\theta,\omega)=\frac{a_i(\theta)}{b_i(\theta,\omega)}.
\]

For the numerator, for any $\theta,\theta'\in\Theta_0$,
\begin{align*}
|a_i(\theta)-a_i(\theta')|
&=
\frac12\left|(\theta^\top z_{i\,i^\star(\theta_\star)})^2-(\theta'^\top z_{i\,i^\star(\theta_\star)})^2\right| \\
&\le
\frac12\bigl(|\theta^\top z_{i\,i^\star(\theta_\star)}|+|\theta'^\top z_{i\,i^\star(\theta_\star)}|\bigr)
\bigl|(\theta-\theta')^\top z_{i\,i^\star(\theta_\star)}\bigr| \\
&\le
RL^2\,\|\theta-\theta'\|_2.
\end{align*}
Hence $a_i(\theta)$ is Lipschitz in $\theta$ on $\Theta_0$.

For the denominator, \eqref{eq:H_uniform_bounds_opt_property} implies
\[
b_i(\theta,\omega)
=
z_{i\,i^\star(\theta_\star)}^\top H(\theta,\omega)^{-1}z_{i\,i^\star(\theta_\star)}
\ge
\frac{\|z_{i\,i^\star(\theta_\star)}\|_2^2}{M_H},
\qquad
\forall (\theta,\omega)\in\Theta_0\times\Omega.
\]
Thus $b_i(\theta,\omega)$ is uniformly bounded away from zero. Moreover, by \eqref{eq:Hinv_lipschitz_opt_property},
\begin{align*}
|b_i(\theta,\omega)-b_i(\theta',\omega)|
&=
\left|
z_{i\,i^\star(\theta_\star)}^\top
\bigl(H(\theta,\omega)^{-1}-H(\theta',\omega)^{-1}\bigr)
z_{i\,i^\star(\theta_\star)}
\right| \\
&\le
\|z_{i\,i^\star(\theta_\star)}\|_2^2
\,
\|H(\theta,\omega)^{-1}-H(\theta',\omega)^{-1}\|_{\mathrm{op}} \\
&\le
\|z_{i\,i^\star(\theta_\star)}\|_2^2\,m_H^{-2}L_{\sigma'}L^3\,\|\theta-\theta'\|_2.
\end{align*}
Hence $b_i(\theta,\omega)$ is Lipschitz in $\theta$, uniformly over $\omega\in\Omega$.

Since $a_i(\theta)$ is bounded on $\Theta_0$ and $b_i(\theta,\omega)$ is uniformly bounded away from zero,
it follows that there exists a constant $L_i>0$ such that
\[
|\phi_i(\theta,\omega)-\phi_i(\theta',\omega)|
\le
L_i\|\theta-\theta'\|_2,
\qquad
\forall \theta,\theta'\in\Theta_0,\ \forall \omega\in\Omega.
\]
Because the set $\{i:\ i\neq i^\star(\theta_\star)\}$ is finite, letting
$
L_{\Psi}:=\max_{i\neq i^\star(\theta_\star)}L_i,
$
we obtain for all $\theta\in\Theta_0$ and all $\omega\in\Omega$,
\[
|\phi_i(\theta,\omega)-\phi_i(\theta_\star,\omega)|
\le
L_{\Psi}\|\theta-\theta_\star\|_2,
\qquad
\forall i\neq i^\star(\theta_\star).
\]
Using the elementary inequality
\[
\left|\min_i x_i-\min_i y_i\right|
\le
\max_i |x_i-y_i|,
\]
we conclude that
\begin{align*}
\big|
\Psi(\theta,\omega)-\Psi(\theta_\star,\omega)
\big|
&=
\left|
\min_{i\neq i^\star(\theta_\star)}\phi_i(\theta,\omega)
-
\min_{i\neq i^\star(\theta_\star)}\phi_i(\theta_\star,\omega)
\right| \\
&\le
\max_{i\neq i^\star(\theta_\star)}
|\phi_i(\theta,\omega)-\phi_i(\theta_\star,\omega)| \\
&\le
L_{\Psi}\|\theta-\theta_\star\|_2.
\end{align*}
Taking the supremum over $\omega\in\Omega$ proves the local uniform Lipschitz property.

Finally, the claim on $C^\star(\theta)$ is a consequence of Berge's maximum theorem. Since
$\Psi$ is continuous in $(\theta,\omega)$, and $\Omega$ is non-empty and compact. Hence,
for each fixed $\theta$, the maximizer set $C^\star(\theta)$ is non-empty and compact.
Moreover, for each fixed $\theta$, the map $\omega\mapsto\Psi(\theta,\omega)$ is concave on $\Omega$,
since it can be expressed as the infimum of linear functions in $\omega$ (by replacing the $d(\mu(i,j),\lambda(i,j))$ in \eqref
{eq:relaxed_closed} with local regime). Therefore,
the optimal solution set $C^\star(\theta)$ is convex. This proves the result.
\qed

\section{Proof of Theorem \ref{prop: ratio_convergence}}
By Lemma~\ref{lem:tracking_properties}, there exists $t_1$ such that for all $t\ge t_1$,
\[
N_{ij}(t)\ge c^\prime\sqrt t-1,
\qquad \forall (i,j)\in\mathcal A_0.
\]
Since the set $\mathcal A_0$ is chosen so that
$
\mathrm{span}\{z_{ij}:(i,j)\in\mathcal A_0\}=\mathbb R^d,
$
the matrix
\[
G_0:=\sum_{(i,j)\in\mathcal A_0} z_{ij}z_{ij}^\top
\]
is positive definite. Therefore,
\[
V_t
=
\sum_{(i,j)\in\mathcal S}N_{ij}(t)z_{ij}z_{ij}^\top
\ \succeq\
\sum_{(i,j)\in\mathcal A_0}N_{ij}(t)z_{ij}z_{ij}^\top
\ \succeq\
(c^\prime\sqrt t-1)\sum_{(i,j)\in\mathcal A_0} z_{ij}z_{ij}^\top
=
(c^\prime\sqrt t-1)G_0.
\]
Hence, there exists some constant $c>0$ such that 
\[
\Lambda_{\min}(t)=\lambda_{\min}(V_t)\ge (c^\prime\sqrt t-1)\lambda_{\min}(G_0)\ge c\sqrt{t}.
\]
Thus, the condition of Lemma~\ref{lemma: param_consistence} is satisfied.

By Lemma~\ref{lemma: param_consistence}, for every $\beta\in(0,1/4)$,
$
\|\hat\theta_t-\theta_\star\|_2=o(t^{-\beta})
$ $\text{a.s.}
$. Choose any $\beta\in(1/8,1/4)$. Since $\gamma_t=t^{-1/8}$, it follows that
\begin{equation}
\label{eq:theta_over_gamma_checked}
\|\hat\theta_t-\theta_\star\|_2=o(\gamma_t)
\qquad\text{a.s.}
\end{equation}
Since $i^\star(\theta_\star)$ is unique and the policy set is finite, there exists $r^\prime_0>0$ such that
\[
i^\star(\theta)=i^\star(\theta_\star)
\qquad
\forall \theta:\ \|\theta-\theta_\star\|_2\le r^\prime_0.
\]
Let $\bar r:=\min\{r_0,r^\prime_0\}$. By \eqref{eq:theta_over_gamma_checked}, on an almost sure event we have
\[
\|\hat\theta_t-\theta_\star\|_2\le \bar r
\qquad
\text{for all sufficiently large }t.
\]
Hence, by Lemma \ref{lemma: opt property}, for all sufficiently large $t$,
\[
\delta_t
:=
\sup_{\omega\in\Omega}
\big|
\Psi(\hat\theta_t,\omega)-\Psi(\theta_\star,\omega)
\big|
\le
L_\Psi \|\hat\theta_t-\theta_\star\|_2 .
\]
Combining this with \eqref{eq:theta_over_gamma_checked} yields
\begin{equation}
\label{eq:delta_over_gamma_checked}
\delta_t=o(\gamma_t)
\qquad\text{a.s.}
\end{equation}
In particular,
\[
\delta_t\to 0,\quad\text{a.s.}
\qquad
\gamma_t\to 0.
\qquad
\]
Because $C^\star(\theta_\star)$ is nonempty, compact, and convex, and $\|\omega\|_2^2$ is strictly convex,
the minimizer
\[
\omega^\dagger
:=
\argmin_{\omega\in C^\star(\theta_\star)} \|\omega\|^2_2
\]
exists and is unique.

Define
\[
F_t(\omega):=\Psi(\hat\theta_t,\omega)-\frac{\gamma_t}{2}\|\omega\|_2^2.
\]
By definition, $\omega_t$ maximizes $F_t$ over $\Omega$, so for every $\omega\in\Omega$,
\begin{equation}
\label{eq:Ft_optimality_checked}
\Psi(\hat\theta_t,\omega_t)-\frac{\gamma_t}{2}\|\omega_t\|_2^2
\ge
\Psi(\hat\theta_t,\omega)-\frac{\gamma_t}{2}\|\omega\|_2^2.
\end{equation}

We first show that every accumulation point of $\{\omega_t\}$ belongs to $C^\star(\theta_\star)$.
Since $\Omega$ is compact, let $\omega_{t_k}\to\bar\omega$ be any convergent subsequence.

Fix any $\omega\in\Omega$. Applying \eqref{eq:Ft_optimality_checked} at time $t_k$, we get
\[
\Psi(\hat\theta_{t_k},\omega_{t_k})-\frac{\gamma_{t_k}}{2}\|\omega_{t_k}\|_2^2
\ge
\Psi(\hat\theta_{t_k},\omega)-\frac{\gamma_{t_k}}{2}\|\omega\|_2^2.
\]
By the definition of $\delta_t$,
\[
\Psi(\hat\theta_{t_k},\omega_{t_k})
\le
\Psi(\theta_\star,\omega_{t_k})+\delta_{t_k},
\qquad
\Psi(\hat\theta_{t_k},\omega)
\ge
\Psi(\theta_\star,\omega)-\delta_{t_k}.
\]
Therefore,
\[
\Psi(\theta_\star,\omega_{t_k})+\delta_{t_k}-\frac{\gamma_{t_k}}{2}\|\omega_{t_k}\|_2^2
\ge
\Psi(\theta_\star,\omega)-\delta_{t_k}-\frac{\gamma_{t_k}}{2}\|\omega\|_2^2,
\]
that is,
\[
\Psi(\theta_\star,\omega_{t_k})
\ge
\Psi(\theta_\star,\omega)
+\frac{\gamma_{t_k}}{2}\bigl(\|\omega_{t_k}\|_2^2-\|\omega\|_2^2\bigr)
-2\delta_{t_k}.
\]
Letting $k\to\infty$, and using $\omega_{t_k}\to\bar\omega$, continuity of $\Psi(\theta_\star,\cdot)$,
and $\gamma_{t_k}\to0$, $\delta_{t_k}\to0$, we obtain
\[
\Psi(\theta_\star,\bar\omega)\ge \Psi(\theta_\star,\omega),
\qquad \forall \omega\in\Omega.
\]
Hence
$
\bar\omega\in C^\star(\theta_\star).
$

Next, we show that every accumulation point must equal $\omega^\dagger$. 
Taking $\omega=\omega^\dagger$ in \eqref{eq:Ft_optimality_checked} gives
\[
\Psi(\hat\theta_t,\omega_t)-\frac{\gamma_t}{2}\|\omega_t\|_2^2
\ge
\Psi(\hat\theta_t,\omega^\dagger)-\frac{\gamma_t}{2}\|\omega^\dagger\|_2^2,
\]
or equivalently,
\begin{equation}
\label{eq:norm_compare_start_checked}
\Psi(\hat\theta_t,\omega_t)-\Psi(\hat\theta_t,\omega^\dagger)
\ge
\frac{\gamma_t}{2}\bigl(\|\omega_t\|_2^2-\|\omega^\dagger\|_2^2\bigr).
\end{equation}
On the other hand, since $\omega^\dagger\in C^\star(\theta_\star)$ and $\omega_t\in\Omega$,
\[
\Psi(\theta_\star,\omega_t)\le \Psi(\theta_\star,\omega^\dagger).
\]
Using the definition of $\delta_t$,
\[
\Psi(\hat\theta_t,\omega_t)
\le
\Psi(\theta_\star,\omega_t)+\delta_t
\le
\Psi(\theta_\star,\omega^\dagger)+\delta_t
\le
\Psi(\hat\theta_t,\omega^\dagger)+2\delta_t.
\]
Thus,
\[
\Psi(\hat\theta_t,\omega_t)-\Psi(\hat\theta_t,\omega^\dagger)\le 2\delta_t.
\]
Combining this with \eqref{eq:norm_compare_start_checked}, we obtain
\[
\frac{\gamma_t}{2}\bigl(\|\omega_t\|_2^2-\|\omega^\dagger\|_2^2\bigr)\le 2\delta_t,
\]
hence
\[
\|\omega_t\|_2^2
\le
\|\omega^\dagger\|_2^2+\frac{4\delta_t}{\gamma_t}.
\]
By \eqref{eq:delta_over_gamma_checked},
\begin{equation}
\label{eq:norm_limsup_checked}
\limsup_{t\to\infty}\|\omega_t\|^2_2
\le
\|\omega^\dagger\|^2_2
\qquad\text{a.s.}
\end{equation}
Let $\bar\omega$ be any accumulation point of $\{\omega_t\}$. We already know that $\bar\omega\in C^\star(\theta_\star)$.
Passing to the limit along a subsequence realizing $\bar\omega$ in \eqref{eq:norm_limsup_checked}, we get
\[
\|\bar\omega\|^2_2\le \|\omega^\dagger\|^2_2.
\]
Since $\omega^\dagger$ is the unique minimum-norm element of $C^\star(\theta_\star)$, this forces
$
\bar\omega=\omega^\dagger.
$
Hence every accumulation point of $\{\omega_t\}$ equals $\omega^\dagger$, and therefore
\begin{equation}
\label{eq:omega_t_to_omega_dagger_checked}
\omega_t\to\omega^\dagger
\qquad\text{a.s.}
\end{equation}

Therefore, Lemma~\ref{lem:tracking_properties} implies that
\[
\mathbb{P}\left(
\forall (i,j)\in\mathcal S,\ 
\lim_{t\to\infty}\frac{N_{ij}(t)}{t}
=
\omega^\dagger_{ij}
\right)=1,
\]
This completes the proof.
\qed

\section{Proof of Lemma \ref{lemma: delta-guarantee}}
Let $\mathcal E_\delta$ be the event on which, for all $t\ge t_0$,
\begin{equation}
\label{eq:good_event_score}
\|g_t(\hat\zeta_t)-g_t(\theta_\star)\|_{H_t(\theta_\star)^{-1}}
\le
\sqrt{m_0^{-1}\Psi_t(\delta)}+\sqrt{\lambda_t}B.
\end{equation}
By Lemma~\ref{lemma: bound g}, $\mathbb P(\mathcal E_\delta)\ge 1-\delta$.

We next prove that on the event $\mathcal E_\delta$, for all $t\ge t_0$,
\begin{equation}
\label{eq:Hnorm_closeness}
\|\hat\theta_t-\theta_\star\|_{H_t(\hat\theta_t)}
\le
2(1+2LB)\left(\sqrt{m_0^{-1}\Psi_t(\delta)}+\sqrt{\lambda_t}B\right)
\end{equation}
Since $\theta_\star\in\Theta$, the definition of $\hat\theta_t$ gives
\[
\|g_t(\hat\theta_t)-g_t(\hat\zeta_t)\|_{H_t(\hat\theta_t)^{-1}}
\le
\|g_t(\theta_\star)-g_t(\hat\zeta_t)\|_{H_t(\theta_\star)^{-1}}
=
\|g_t(\hat\zeta_t)-g_t(\theta_\star)\|_{H_t(\theta_\star)^{-1}}.
\]
Moreover, by Lemma \ref{lemma: tranformation},
\begin{equation}
\label{eq:Hnorm_vs_obj_LB}
\|\hat\theta_t-\hat\zeta_t\|_{H_t(\hat\theta_t)}
\le
(1+2LB)\,\|g_t(\hat\theta_t)-g_t(\hat\zeta_t)\|_{H_t(\hat\theta_t)^{-1}}.
\end{equation}
Applying the same comparison to the pair $(\hat\zeta_t,\theta_\star)$ gives
\begin{equation}
\label{eq:Hnorm_zeta_star_LB}
\|\hat\zeta_t-\theta_\star\|_{H_t(\hat\theta_t)}
\le
(1+2LB)\,\|g_t(\hat\zeta_t)-g_t(\theta_\star)\|_{H_t(\theta_\star)^{-1}}.
\end{equation}
Combining \eqref{eq:Hnorm_vs_obj_LB}-\eqref{eq:Hnorm_zeta_star_LB} with the triangle inequality
\[
\|\hat\theta_t-\theta_\star\|_{H_t(\hat\theta_t)}
\le
\|\hat\theta_t-\hat\zeta_t\|_{H_t(\hat\theta_t)}+\|\hat\zeta_t-\theta_\star\|_{H_t(\hat\theta_t)}
\]
yields
\[
\|\hat\theta_t-\theta_\star\|_{H_t(\hat\theta_t)}
\le
2(1+2LB)\,\|g_t(\hat\zeta_t)-g_t(\theta_\star)\|_{H_t(\theta_\star)^{-1}}
\le
2(1+2LB)\left(\sqrt{m_0^{-1}\Psi_t(\delta)}+\sqrt{\lambda_t}B\right),
\]
where the last inequality uses \eqref{eq:good_event_score}. 
Fix any $t\ge t_0$. Suppose $i^\star(\hat\mu(t))\neq i^\star(\mu)$, hence
\begin{equation}
\label{eq:ZLB_le_one_choice}
Z(t) =  \min_{i\neq i^\star(\hat{\mu}(t))}
    \frac{1}{2}\,
    \frac{\big(\hat{\theta}^\top_t z_{i\,i^\star(\hat{\mu}(t))}\big)^2}{\big\|z_{i\,i^\star(\hat{\mu}(t))}\big\|^2_{H_t(\hat{\theta}_t)^{-1}}} \le
\frac12\,\frac{(\hat{\theta}^\top_tz_{i^\star(\mu)\,i^\star(\hat{\mu}(t))})^2}{\big\| z_{i^\star(\mu)\,i^\star(\hat{\mu}(t))} \big\|^2_{H_t(\hat{\theta}_t)^{-1}}}.
\end{equation}
We now bound the right-hand side by $\frac12\|\hat\theta_t-\theta_\star\|_{H_t(\hat\theta_t)}^2$.

First, by optimality of $i^\star(\mu)$, we have
$\theta_\star^\top (x_{i^\star(\mu)}-x_{i^\star(\hat{\mu}(t))})\ge 0$, i.e.,
\begin{equation}
\label{eq:true_pref_sign}
\theta_\star^\top z_{i^\star(\mu)i^\star(\hat{\mu}(t))}\ge 0.
\end{equation}

Therefore,
\[0<
-\hat\theta_t^\top z_{i^\star(\mu)i^\star(\hat{\mu}(t))}
\le
(\theta_\star-\hat\theta_t)^\top z_{i^\star(\mu)i^\star(\hat{\mu}(t))}
\le
|(\hat\theta_t-\theta_\star)^\top z_{i^\star(\mu)i^\star(\hat{\mu}(t))}|.
\]
Second, by Cauchy--Schwarz in the metric induced by $H_t(\hat\theta_t)$,
\[
|(\hat\theta_t-\theta_\star)^\top z_{i^\star(\mu)i^\star(\hat{\mu}(t))}|
\le
\|\hat\theta_t-\theta_\star\|_{H_t(\hat\theta_t)}\ \|z_{i^\star(\mu)i^\star(\hat{\mu}(t))}\|_{H_t(\hat\theta_t)^{-1}}.
\]
Then, 
\begin{equation*}
\frac12\,\frac{(\hat{\theta}^\top_tz_{i^\star(\mu)\,i^\star(\hat{\mu}(t))})^2}{\big\| z_{i^\star(\mu)\,i^\star(\hat{\mu}(t))} \big\|^2_{H_t(\hat{\theta}_t)^{-1}}} \le \frac12\|\hat\theta_t-\theta_\star\|_{H_t(\hat\theta_t)}^2.
\end{equation*}
Plugging this into \eqref{eq:ZLB_le_one_choice} yields the key implication:
\begin{equation}
\label{eq:wrong_implies_ZLB_small}
i^\star(\hat\mu(t))\neq i^\star(\mu)
\quad\Longrightarrow\quad
Z(t)\le \frac12\|\hat\theta_t-\theta_\star\|_{H_t(\hat\theta_t)}^2.
\end{equation}
On the event $\mathcal E_\delta$, for all $t\ge t_0$, \eqref{eq:Hnorm_closeness} gives
\[
\frac12\|\hat\theta_t-\theta_\star\|_{H_t(\hat\theta_t)}^2
\le
\left(2(1+2LB)\big(\sqrt{m_0^{-1}\Psi_t(\delta)}
+
\sqrt{\lambda_t}B\big)\right)^2
=
\beta(\delta,t).
\]
Therefore, by \eqref{eq:wrong_implies_ZLB_small}, on $\mathcal E_\delta$ we have:
if $i^\star(\hat\mu(t))\neq i^\star(\mu)$ then $Z(t)\le\beta(\delta,t)$.
In particular, at the stopping time $\tau$,
the event $\{\tau<\infty,\ i^\star(\hat\mu(t))\neq i^\star(\mu)\}$
cannot occur on $\mathcal E_\delta$ because the stopping condition requires
$Z(t)>\beta(\delta,t)$.

Hence,
\[
\mathbb{P}\left(\tau<\infty,\ i^\star(\hat\mu(\tau))\neq i^\star(\mu)\right)
\le
\mathbb{P}(\mathcal E_\delta^c)
\le
\delta,
\]
which proves the lemma.
\qed

\section{Proof of Theorem \ref{thm: almost_sure_optimality}}
Let
\[
\mathcal E
:=
\left\{
\hat\theta_t\to\theta_\star
\quad\text{and}\quad
\forall (i,j)\in\mathcal S,\ \frac{N_{ij}(t)}{t}\to \omega^\dagger_{ij}
\right\}.
\]
By Lemma~\ref{lemma: param_consistence} and
Theorem~\ref{prop: ratio_convergence}, we have \(\mathbb P(\mathcal E)=1\).

On the event \(\mathcal E\), since
\[
\hat\theta_t\to\theta_\star,
\qquad
\frac{N(t)}{t}:=\left(\frac{N_{ij}(t)}{t}\right)_{(i,j)\in\mathcal S}
\to \omega^\dagger,
\]
and \(\Psi(\theta,\omega)\) is continuous in \((\theta,\omega)\) on a neighborhood
of \(\theta_\star\times\Omega\) by Lemma~\ref{lemma: opt property}, it follows that
\[
\Psi\left(\hat\theta_t,\frac{N(t)}{t}\right)
\to
\Psi(\theta_\star,\omega^\dagger).
\]
Since \(\omega^\dagger\in C^\star(\theta_\star)\), we have
\[
\Psi(\theta_\star,\omega^\dagger)
=
\max_{\omega\in\Omega}\Psi(\theta_\star,\omega)
=
\mathcal U^\star(\mu)^{-1}.
\]
Therefore, for every \(\varepsilon>0\), there exists \(t_\varepsilon\ge1\) such that for all
\(t\ge t_\varepsilon\),
\[
\Psi\left(\hat\theta_t,\frac{N(t)}{t}\right)
\ge
(1-\varepsilon)\mathcal U^\star(\mu)^{-1}.
\]

Since \(i^\star(\mu)\) is unique and \(\hat\theta_t\to\theta_\star\), there exists
\(t_{\rm id}\ge1\) such that for all \(t\ge t_{\rm id}\),
\[
i^\star(\hat\theta_t)=i^\star(\mu).
\]
Hence, for all \(t\ge t_{\rm id}\), by the definition of \(Z(t)\),
\[
Z(t)
=
\min_{i\neq i^\star(\hat\theta_t)}
\frac{1}{2}\,
\frac{\big(\hat{\theta}_t^\top z_{i\,i^\star(\hat\theta_t)}\big)^2}
{\big\|z_{i\,i^\star(\hat\theta_t)}\big\|^2_{H_t(\hat\theta_t)^{-1}}}.
\]
Moreover, since
\[
H_t(\hat\theta_t)
=
\sum_{(i,j)\in\mathcal S}N_{ij}(t)\,\sigma'\bigl(\hat\theta_t^\top z_{ij}\bigr)\,z_{ij}z_{ij}^\top
+\lambda_t I_d,
\]
we have that $
H_t(\hat\theta_t)=t\,\widetilde H_t
$, where
\[
\widetilde H_t
:=
\sum_{(i,j)\in\mathcal S}\frac{N_{ij}(t)}{t}\,\sigma'\bigl(\hat\theta_t^\top z_{ij}\bigr)\,z_{ij}z_{ij}^\top
+\frac{\lambda_t}{t}I_d.
\]
Therefore,
$
H_t(\hat\theta_t)^{-1}=\frac{1}{t}\widetilde H_t^{-1},
$
and hence
\[
\frac{Z(t)}{t}
=
\min_{i\neq i^\star(\hat\theta_t)}
\frac{1}{2}\,
\frac{\big(\hat{\theta}_t^\top z_{i\,i^\star(\hat\theta_t)}\big)^2}
{\big\|z_{i\,i^\star(\hat\theta_t)}\big\|^2_{\widetilde H_t^{-1}}}.
\]
Note that
\[
\widetilde H_t
-
H\!\left(\hat\theta_t,\frac{N(t)}{t}\right)
=
\frac{\lambda_t}{t}I_d.
\]
Since \(\lambda_t\rightarrow0\), we have
$
\frac{\lambda_t}{t}I_d\to 0.
$
Thus,
\[
\widetilde H_t - H\!\left(\hat\theta_t,\frac{N(t)}{t}\right)\to0.
\]

Moreover, on the event \(\mathcal E\),
\[
H\!\left(\hat\theta_t,\frac{N(t)}{t}\right)\to H(\theta_\star,\omega^\dagger).
\]
Since \(H(\theta_\star,\omega^\dagger)\succ0\), it follows that \(\widetilde H_t\succ0\) for all sufficiently large \(t\), and
\[
\widetilde H_t^{-1}
-
H\!\left(\hat\theta_t,\frac{N(t)}{t}\right)^{-1}
\to 0.
\]

Therefore,
\[
\frac{Z(t)}{t}
-
\Psi\!\left(\hat\theta_t,\frac{N(t)}{t}\right)
\to 0.
\]

Combining this with
\[
\Psi\!\left(\hat\theta_t,\frac{N(t)}{t}\right)
\to
\mathcal U^\star(\mu)^{-1},
\]
we obtain
\[
\frac{Z(t)}{t}\to \mathcal U^\star(\mu)^{-1}.
\]

Hence there exists \(t_\varepsilon'\ge1\) such that for all
\(t\ge t_\varepsilon'\),
\begin{equation}
\label{eq:Z_lower_final}
Z(t)\ge t(1-\varepsilon)\mathcal U^\star(\mu)^{-1}.
\end{equation}

Next, by Lemma~\ref{lem:tracking_properties}, there exists \(t_{\rm tr}\ge1\) such that
for all \(t\ge t_{\rm tr}\),
\[
N_{ij}(t)\ge c^\prime\sqrt t-1,
\qquad \forall (i,j)\in\mathcal A_0.
\]
Since \(\mathcal A_0\) is chosen so that
$
\mathrm{span}\{z_{ij}:(i,j)\in\mathcal A_0\}=\mathbb R^d,
$
the matrix
\[
G_0:=\sum_{(i,j)\in\mathcal A_0} z_{ij}z_{ij}^\top
\]
is positive definite.

Therefore, for all \(t\ge t_{\rm tr}\),
\[
V_t
=
\sum_{(i,j)\in\mathcal S}N_{ij}(t)z_{ij}z_{ij}^\top
\ \succeq\
(c^\prime\sqrt t-1)G_0.
\]
Hence
\[
\Lambda_{\min}(t)=\lambda_{\min}(V_t)\ge (c^\prime\sqrt t-1)\lambda_{\min}(G_0).
\]

It follows that there exist a constant \(c>0\) and a time \(t_0\ge1\) such that
\[
\Lambda_{\min}(t)\ge c\sqrt t,
\qquad \forall t\ge t_0.
\]

Since the stopping rule is defined by
\[
\tau
=
\inf\left\{t\ge t_0:\ \Lambda_{\min}(t)\ge c\sqrt t,\ Z(t)>\beta(\delta,t)\right\},
\]
we obtain
\[
\tau
\le
\max\{t_0,t_\varepsilon',t_{\rm id}\}
\vee
\inf\Bigl\{t\in\mathbb N^\star:\ (1-\varepsilon)t\mathcal U^\star(\mu)^{-1}>\beta(\delta,t)\Bigr\}.
\]
Using the threshold bound in Lemma \ref{eq: beta_order}
\[
\beta(\delta,t)\le c_1\log\Big(\frac{c_2 t^\alpha}{\delta}\Big),
\]
we further obtain
\[
\tau
\le
\max\{t_0,t_\varepsilon^\prime,t_{\rm id}\}
\vee
\inf\Bigl\{t\in\mathbb N^\star:\ (1-\varepsilon)t\mathcal U^\star(\mu)^{-1}
>
c_1\log\Big(\frac{c_2 t^\alpha}{\delta}\Big)\Bigr\}.
\]
Equivalently,
\[
\tau
\le
\max\{t_0,t_\varepsilon^\prime,t_{\rm id}\}
\vee
\inf\Bigl\{t\in\mathbb N^\star:\ 
\frac{(1-\varepsilon)}{c_1}\mathcal U^\star(\mu)^{-1} t
>
\log\Big(\frac{c_2 t^\alpha}{\delta}\Big)\Bigr\}.
\]
Applying Lemma \ref{lem:technical_log} with
\[
\widetilde c_1=\frac{(1-\varepsilon)}{c_1}\mathcal U^\star(\mu)^{-1},
\qquad
\widetilde c_2=\frac{c_2}{\delta},
\]
it follows that, for sufficiently small \(\delta\),
\[
\tau
\lesssim
\max\left\{
t_0,t_\varepsilon^\prime,t_{\rm id},\,
\frac{c_1}{1-\varepsilon}\mathcal U^\star(\mu)\log\frac1\delta
\right\}.
\]
Hence, on \(\mathcal E\),
\[
\limsup_{\delta\rightarrow0}\frac{\tau}{\log(1/\delta)}
\lesssim
\frac{c_1}{1-\varepsilon}\mathcal U^\star(\mu).
\]
Letting \(\varepsilon\rightarrow0\), we obtain
\[
\limsup_{\delta\rightarrow0}\frac{\tau}{\log(1/\delta)}
\lesssim
\mathcal U^\star(\mu)
\qquad\text{on }\mathcal E.
\]
Since \(\mathbb P(\mathcal E)=1\), the desired result follows.

Finally, the above bound implies \(\tau<\infty\) on \(\mathcal E\), hence
\(\mathbb P(\tau<\infty)=1\).
\qed

\section{Real Experiment Details}
\label{sec: real details}
We evaluate our method on four task types spanning counting, lexical reconstruction, character-level extraction, and basic arithmetic. 
\begin{itemize}
    \item \textbf{Object Counting} requires the model to identify all items belonging to a target category from a mixed list and return the correct count, thereby testing selective counting and category filtering.
    \item \textbf{Word Unscrambling} asks the model to recover the original English word from a scrambled sequence of letters, evaluating lexical knowledge and character-level reordering ability.
    \item \textbf{Second Word Letter} is a simple character extraction task from the Instruction-Induction benchmark; in our dataset, each example consists of a single English word, and the model must output its second letter.
    \item \textbf{Sum} is a basic arithmetic task in which the input contains two integers and the model is asked to return their sum, providing a controlled setting for deterministic numerical reasoning.
\end{itemize}  

Examples from the dataset are shown below:
\begin{itemize}
    \item \textbf{Object Counting}
    \begin{itemize}
        \item \begin{tabular}[t]{@{}ll@{}}
            \textbf{Question:} & ``I have a pig, two ducks, and a dog. How many animals do I have?'' \\
            \textbf{Answer:}   & ``four''
        \end{tabular}
        
        \item \begin{tabular}[t]{@{}ll@{}}
            \textbf{Question:} & ``I have a snake, a rabbit, two clarinets, a drum, a flute, and an accordion. \\
                               & How many musical instruments do I have?'' \\
            \textbf{Answer:}   & ``five''
        \end{tabular}
    \end{itemize}
    
    \item \textbf{Word Unscrambling}
    \begin{itemize}
        \item \begin{tabular}[t]{@{}ll@{}}
            \textbf{Question:} & ``The word \texttt{ceahb} is a scrambled version of the English word'' \\
            \textbf{Answer:}   & \texttt{beach}
        \end{tabular}
        
        \item \begin{tabular}[t]{@{}ll@{}}
            \textbf{Question:} & ``The word \texttt{iasso} is a scrambled version of the English word'' \\
            \textbf{Answer:}   & \texttt{oasis}
        \end{tabular}
    \end{itemize}
    
    \item \textbf{Second Word Letter}
    \begin{itemize}
        \item \begin{tabular}[t]{@{}ll@{}}
            \textbf{Question:} & \texttt{government} \\
            \textbf{Answer:}   & \texttt{o}
        \end{tabular}
        
        \item \begin{tabular}[t]{@{}ll@{}}
            \textbf{Question:} & \texttt{apple} \\
            \textbf{Answer:}   & \texttt{p}
        \end{tabular}
    \end{itemize}
    
    \item \textbf{Sum}
    \begin{itemize}
        \item \begin{tabular}[t]{@{}ll@{}}
            \textbf{Question:} & \texttt{12 29} \\
            \textbf{Answer:}   & \texttt{41}
        \end{tabular}
        
        \item \begin{tabular}[t]{@{}ll@{}}
            \textbf{Question:} & \texttt{38 57} \\
            \textbf{Answer:}   & \texttt{95}
        \end{tabular}
    \end{itemize}
\end{itemize}

%
%
%






\end{document}